\newcommand{\Tr}{\mathrm{Tr}}
\newcommand{\E}[1]{{\mathbb E}\left[#1\right]}
\newcommand{\eps}{\varepsilon}
\newcommand{\R}{\mathbb R}
\newcommand{\SE}[1]{{\color{black}{#1}}}
\newcommand{\MT}[1]{{\color{violet}{#1}}}
\DeclareMathOperator*{\argmin}{arg\,min}
\DeclareMathOperator*{\supp}{supp}
\newtheorem{assumption}{Assumption}
\begin{document}

\title{Dimensionality Reduction and Wasserstein Stability for Kernel Regression}

\author{\name Stephan Eckstein \email seckstein@ethz.ch \\
       \addr Department of Mathematics\\
       ETH Z\"{u}rich\\
       Zurich, CH
       \AND
       \name Armin Iske \email armin.iske@uni-hamburg.de \\
       \addr Department of Mathematics\\
       Universit\"{a}t Hamburg\\
       Hamburg, DE
       \AND
       \name Mathias Trabs \email trabs@kit.edu \\
	   \addr Department of Mathematics\\
	   Karlsruhe Institute of Technology\\
	   Karlsruhe, DE}

\editor{Miguel Carreira-Perpinan}
\maketitle

\begin{abstract}
In a high-dimensional regression framework, we study consequences of the naive two-step procedure where first the dimension of the input variables is reduced and second, the reduced input variables are used to predict the output variable with kernel regression. 
In order to analyze the resulting regression errors, a novel stability result for kernel regression with respect to the Wasserstein distance is derived. This allows us to bound errors that occur when perturbed input data is used to fit the regression function. We apply the general stability result to principal component analysis (PCA). Exploiting known estimates from the literature on both principal component analysis and kernel regression, we deduce convergence rates for the two-step procedure. The latter turns out to be particularly useful in a semi-supervised setting.
\end{abstract}

\begin{keywords}
  Kernel regression, Dimensionality reduction, Principal component analysis, Stability, Wasserstein distance
\end{keywords}

\section{Introduction}

	In data analysis, it is natural to first understand and reduce the explanatory data before fitting a regression \citep[see, e.g.,][]{abdul2005principal, adragni2009sufficient, andras2017high, fradkin2003experiments, hua2007modeling}. Aside from principal component analysis \citep[PCA; see, e.g.,][]{wold1987principal}, a variety of different approaches, like UMAP \citep{mcinnes2018umap}, MDS \citep{cox2008multidimensional}, autoencoders \citep{ng2011sparse, tolstikhin2018wasserstein, wang2016auto}, SIR \citep{li1991sliced}, KPCA \citep{scholkopf1997kernel}, random projections \citep{bingham2001random}, etc., have become increasingly popular lately. There is a similarly large variety of regression and function estimation methods (like neural networks, kernel methods, decision forests, etc.). This paper aims to shed some light on the estimation errors that occur when first applying a dimensionality reduction method before fitting a regression. We focus on two frequently used and studied methods: PCA and kernel regression.

\medskip
If $X$ is the explanatory data and $Y$ is an output variable in a standard $L^2$-regression framework, the goal is to find a regression function $g$ such that 
\[\mathbb{E}[|g(X) - Y|^2]\]
is minimized.
The optimal regression function $g$ given perfect knowledge of the underlying distributions is given by $g(X) = \mathbb{E}[Y | X]$.
When first applying a dimensionality reduction map $P$ on $X$, the goal is instead to find a regression function $f$ minimizing
\[\mathbb{E}[|f(P(X)) - Y|^2].\]

On the one hand reducing the dimension before performing a regression has several advantages, especially in view of
computational aspects and interpretability of the occurring features and the obtained regression function \citep[see, e.g.,][]{chipman2005interpretable, GuillemardIske2017, kim2005dimension}. An aspect that is of crucial importance in any regression framework is \textsl{sample efficiency}, i.e., how much sample data is required to obtain good approximations of the true regression function. Usually, sample efficiency tends to deteriorate in higher dimensions, which is an instance of the curse of dimensionality. Studying sample efficiency in combination with dimensionality reduction is a focus of the analysis in this paper.

One the other hand, dimensionality reduction decreases the predictive power of the explanatory data. Indeed, in terms of information content, it is clear that for any non-invertible transformation $P$, $P(X)$ contains at most as much information as $X$ does about $Y$ \citep{adragni2009sufficient}. However, in many cases where dimensionality reduction is applied, the implicit assumption is that the reduced data $P(X)$ retains almost all relevant information encoded in $X$, and thus the fit $f(P(X)) \approx Y$ should be almost as good as $g(X) \approx Y$. For PCA, the structural error $f\circ P - g$ can be controlled via the eigenvalues of the covariance matrix of $X$, see Proposition \ref{prop:procedureerror}.

\medskip
In sample based regression, only finitely many i.i.d.~copies of $(X, Y)$ \[
(X_1, Y_1), \dots, (X_n, Y_n)
\] are given without knowing the true distribution of $(X, Y)$. For kernel regression, the convergence of the sample based regression functions for $n \rightarrow \infty$ is a well studied problem \citep[see, e.g.,][]{caponnetto2007optimal, mendelson2010regularization, steinwart2009optimal, wu2006learning}. When including a prior dimensionality reduction step, however, new errors occur that need to be controlled. 

Let $P_n$ denote the estimator of the dimensionality reduction map using $X_1, \dots, X_n$ and let $f_n$ denote the estimated regression function such that $f_n(P_n(X_i)) \approx Y_i$ for $i=1, \dots, n$. Comparing $f_n \circ P_n$ to $f \circ P$, we observe two different error sources: First, obviously both $f_n$ and $P_n$ include the statistical error introduced by sampling. Second however, a crucial error that is introduced for $f_n$ is that already the input to its estimation procedure, $P_n(X_1), \dots, P_n(X_n)$, differs \emph{in distribution} from the data $P(X)$ which is used to reconstruct $f$. This means, the sample points $P_n(X_1), \dots, P_n(X_n)$ which are used to estimate $f_n$ are not merely i.i.d.~copies of $P(X)$ which is used to characterize $f$. Thus, the question of how close $f_n$ is to $f$ requires a suitable stability of kernel regression. In Theorem \ref{thm:stab}, we obtain a stability result which can control the difference between two kernel estimators when the difference of the underlying distributions is measured in the first Wasserstein distance. 
Subsequently, in Theorem \ref{thm:generaldimred}, the stability result is used to obtain bounds on the joint procedure for general dimensionality reduction methods, which assumes that we can control both the operator norm $\|P_n - P\|_{\rm op}$ and also the regression error for the regression on the low-dimensional space. These bounds are detailed out for the case of PCA in Theorem \ref{thm:overallerror}. 

We emphasize that the subspaces spanned by estimated PCA and true PCA map may have entirely different support, and hence it is crucial that the stability result in Theorem \ref{thm:stab} works with a metric distance like the Wasserstein distance, while stability results from the literature that work with the total variation norm or similar distances \citep[see, e.g.,][]{bousquet2002stability,christmann2007consistency,christmann2018total,xu2009robustness} cannot be applied here. While we focus on PCA as a standard and well studied dimensionality reduction approach, Theorem \ref{thm:stab} and the results in Section \ref{subsec:generaldimred} allow also to study the impact of more advanced methods. These are especially necessary if the explanatory variables cannot be described by a linear subspace, but, for instance, by a submanifold.

Intuitively, one expects that first reducing the dimension of the data and 
then performing a regression on the low-dimensional data should be as 
sample efficient as starting with the low-dimensional data directly. As 
just discussed, this intuition ignores the added difficulty of the 
estimation error for the dimensionality reduction map, and in particular that 
this error is propagated by the stability of the low-dimensional 
regression. We obtain error estimates in Theorem \ref{thm:overallerror} and 
Corollary \ref{cor:errorsummary} for the overall procedure of PCA and 
kernel regression where the errors coming from this source are indeed 
restrictive, and thus the overall error estimates might be worse than 
starting directly with low-dimensional data. Thus, the analysis in this 
paper suggests that two considerations for dimensionality reduction in terms of 
sample efficiency have to be taken into account: On the one hand, the 
actual regression on the low-dimensional space is usually faster (see 
Section \ref{subsec:kerneltransformation} for a quantification thereof), 
but on the other hand, a new error through estimation of the dimension 
reduction map is introduced. This comparison is discussed in more detail in 
Section~\ref{sec:PCAkernel}. We conclude that the two-step approach is 
particularly useful if the dimension of the input data is large and the 
kernel is not too regular. These theoretical findings are 
confirmed in a numerical illustration.

Another considerable benefit is in semi-supervised learning \citep[see, e.g.,][]{zhu2009introduction, zhu2005semi}: If a larger sample for $X$ than for $(X, Y)$ pairs is given, one can use the whole $X$ sample for the estimation of the dimensionality reduction map.
If the sample for $X$ (compared to $(X, Y)$ sample pairs) is sufficiently large, Proposition \ref{prop:mn} shows that the error arising from the estimation of the PCA map is no longer restrictive for the overall error of the PCA and kernel regression procedure, and thus the sample efficiency of the low-dimensional kernel regression is recovered.

\medskip
We emphasize that the studied setting in this paper differs from the frequently studied question of \textsl{simultaneously} estimating a dimensionality reduction map $P$ and an estimator $f$ such that $f(P(X)) \approx Y$, which is for instance the case in multi-index models and related fields \citep[see, e.g.,][]{alquier2013sparse, devore2011approximation, gaiffas2007optimal, gu2013smoothing, lin2006component, raskutti2012minimax}. The method of simply applying an off-the-shelf dimensionality reduction method like PCA certainly leads to worse performance on regression tasks, as the dimensionality reduction map is estimated using just the $X$-data, ignoring the output variable $Y$. Then again, such an approach has other benefits, like a clear interpretation of the reduced data and being able to use the same reduced data for different regression tasks.

Finally, we mention that related objectives to this paper have recently been studied from different perspectives, like dimensionality reduction for kernel regression based on partitioning the space \citep{hamm2021adaptive,hamm2022intrinsic}, or related stability questions by studying out-of-distribution performance of kernel methods \citep{canatar2021out, cui2021generalization}.

\medskip
The remainder of the paper is structured as follows: We introduce the 
kernel regression setting in Section \ref{subsec:kernelsetting} and analyze 
the basic influence that dimension and dimensionality reduction methods have on 
sample efficiency in Section \ref{subsec:kerneltransformation}. The 
Wasserstein stability result for kernel regression is given in Section 
\ref{subsec:kernelstability}. Section \ref{sec:secmain} studies the 
combined procedure of dimensionality reduction and kernel regression. Section \ref{subsec:generaldimred} focuses on general dimensionality reduction methods. Section \ref{sec:PCA} 
states basic results from the literature on PCA and Section 
\ref{sec:PCAkernel} contains the results on the estimation error when PCA is combined with kernel regression. Section 
\ref{sec:examples} gives numerical examples illustrating the procedure. All proofs are postponed to Section~\ref{sec:proofs}.
\medskip

Throughout this paper, we denote by $|\cdot|$ the Euclidean norm. For a bounded linear operator $P$ on a Hilbert space, we denote by $\|P\|_{\rm op}$ the operator norm and $\|P\|_2$ the Hilbert-Schmidt norm, and similarly we denote by $\|P\|_{\rm op}$ the operator norm for a map $P$ mapping one normed space into another. 

	\section{Kernel Regression: Dimension dependence, Convergence Rates, and Stability}
\label{sec:kernel}
This section gives results related to kernel regression.
First, in Subsection \ref{subsec:kernelsetting} we introduce the kernel regression framework and state results from the literature on convergence rates. In Subsection \ref{subsec:kerneltransformation}, we give basic intuition and results on how kernel regression behaves under simple transformations of input measures, namely linear dimensionality reduction or inclusion of independent noise. And finally, Subsection \ref{subsec:kernelstability} studies stability of the optimal kernel regression function when the input data is perturbed.


\subsection{Kernel Regression Setting}
\label{subsec:kernelsetting}
We are given an input space $\mathcal{X} \subseteq \mathbb{R}^D$ and an output space $\mathcal{Y} \subseteq [-M, M]$ for some $M>0$. The learning problem we are interested in is governed by a probability distribution $\rho$ on $\mathcal{Z} = \mathcal{X} \times \mathcal{Y}$, where the goal is to find the optimal predictor of the output variables, given the input variables. This means, the goal is to solve
\[
f_\rho := \argmin_{f} \int (f(x) - y)^2 \,\rho(dx, dy),
\]
where the $\argmin$ is taken over all measurable functions mapping $\mathcal{X}$ to $\mathcal{Y}$. Writing $\rho(dx, dy) = \rho_X(dx) \rho(dy | x)$, the solution is given by the conditional mean
\[
f_\rho(x) = \int y \,\rho(dy | x) = \mathbb{E}[Y | X = x].
\]
However, $\rho$ is unknown, and only finitely many sample pairs 
\[(X_1, Y_1), (X_2, Y_2), \dots, (X_n, Y_n) \in \mathcal{Z}\]
are observed, which are independent and identically distributed according to $\rho$. To approximate $f_\rho$ using these finitely many observations, we introduce the regularized kernel regression problem
\[
f_{\lambda, n} := \argmin_{f \in \mathcal{H}} \frac{1}{n} \sum_{i=1}^n (f(X_i) - Y_i)^2 + \lambda \|f\|^2_{\mathcal{H}},
\]
where $1 \geq \lambda > 0$ is a regularization parameter and $\mathcal{H}$ is a \emph{reproducing kernel Hilbert space} \citep[see, e.g.,][]{cucker2002mathematical, Iske2018} of functions mapping $\mathcal{X}$ to $\mathbb{R}$. We use the notation $K\colon \mathcal{X} \times \mathcal{X} \rightarrow \mathbb{R}$ for the kernel associated with $\mathcal{H}$, and always assume that the kernel is normalized to $\sup_{x \in \mathcal{X}} K(x, x) = 1$. We further introduce
\begin{align}
\label{eq:solutionflambdarho}
f_{\lambda, \rho} :=& \argmin_{f \in \mathcal{H}} \int (f(x)-y)^2 \,\rho(dx, dy) + \lambda \|f\|^2_{\mathcal{H}},\\
=&\argmin_{f \in \mathcal{H}} \int (f(x)-f_\rho(x))^2 \,\rho_X(dx) + \lambda \|f\|^2_{\mathcal{H}}\label{eq:interpolation}
\end{align}
which is called the population counterpart to $f_{\lambda, n}$. If $\rho_n$ denotes the empirical distribution of the given sample, then we have $f_{\lambda, \rho_n} = f_{\lambda, n}$.
Defining $T_\rho\colon \mathcal{H} \rightarrow \mathcal{H}$ and $g_\rho \in \mathcal{H}$ by 
\begin{align*}
T_\rho f (x) &:= \int K(x, u) f(u) \,\rho_X(du),\\
g_\rho (x) &:= \int K(x, u) y \,\rho(du, dy)
\end{align*}
we recall that the solution of \eqref{eq:solutionflambdarho} is given by \citep{caponnetto2007optimal}
\[
f_{\lambda, \rho} = (T_\rho + \lambda I)^{-1} g_\rho.
\]
Finally, we introduce the clipped estimator 
\[\boldsymbol{f}_{\lambda, \rho} := -M \lor (M \land f_{\lambda, \rho}),
\] and similarly $\boldsymbol{f}_{\lambda, n}$. The purpose of clipping is that, since $f_\rho$ only takes values in $[-M, M]$, it holds $|\boldsymbol{f}_{\lambda, \rho} - f_\rho| \leq |f_{\lambda, \rho} - f_\rho|$ and in certain aspects $\boldsymbol{f}_{\lambda, \rho}$ is more tractable than $f_{\lambda, \rho}$  \citep[see also the discussion in][and references therein]{steinwart2009optimal}.

We will require an analogue of $T_\rho$ acting on $L^2$. To this end, for a probability measure $\mu$ on $\mathcal{X}$, we define the kernel integral operator $L_{K, \mu}\colon L^2(\mu) \rightarrow L^2(\mu)$ by
\[
L_{K, \mu} f(x) := \int K(x, x') f(x') \,\mu(dx'),
\]
which is compact and self-adjoint, cf.~\citet[Chapter III]{cucker2002mathematical}.

To study the approximation of $\|f_{\lambda, n} - f_\rho\|_{L^2(\rho_X)}$, two key parameters were identified in the literature to describe the convergence rate in $n$ for a suitable choice of $\lambda$:
\begin{assumption}
	\label{ass:alphabeta}$\,$
	\begin{itemize}
		\item[(i)] Let $\alpha \in (0, 1)$ such that there exists a constant $C_{\alpha} > 0$ so that the sequence of non-increasing eigenvalues $\sigma_1 \geq \sigma_2 \geq ...$ of $L_{K, \rho_X}$ satisfies $\sigma_n \leq C_{\alpha} n^{-\frac{1}{\alpha}}$.
		\item[(ii)] Let $\beta \in (0, 1]$ such that there exists a constant $C > 0$ so that 
		\begin{equation}\label{eq:assbeta}
		\|f_{\lambda, \rho} - f_{\rho}\|^{2}_{L^2(\rho_X)}+\lambda\|f_{\lambda,\rho}\|^2_\mathcal H \leq C \lambda^\beta.
		\end{equation}
	\end{itemize} 
\end{assumption}
Roughly, the parameter $\alpha$ describes the complexity of the measure $\rho_X$ in terms of the kernel $K$ while  $\beta$ measures the approximation error of the function $f_\rho$ with functions in $\mathcal H$. We briefly discuss $\alpha$ and $\beta$ as follows.

\begin{remark}\label{rem:sobolev}
	Since $\int K(x,x)^2\rho_X(dx)\le 1$, $L_{K,\rho_X}$ is compact and we have $\alpha\le 1$ \citep[Theorem 4.27]{SteinwartChristmann2008}. If the kernel is $m$-times continuously differentiable and $\rho_X$ is the uniform distribution on the Euclidean unit ball in $\mathbb{R}^D$, where $D < 2m$, then Assumption~\ref{ass:alphabeta}(i) is satisfied for $\alpha = \frac{D}{2m}$, see \citet[page 3]{steinwart2009optimal}. In particular, regular kernels lead to smaller $\alpha$ and a faster decay of $(\sigma_n)$. On the other hand, the decay suffers from large dimensions.
	
	In view of \eqref{eq:interpolation}, Assumption~\ref{ass:alphabeta}(ii) quantifies the approximation quality of $f_\rho$ in a $\mathcal H$-ball. The function $A_2(\lambda) = \|f_{\lambda, \rho} - f_{\rho}\|^{2}_{L^2(\rho_X)}+\lambda\|f_{\lambda,\rho}\|^2_\mathcal H$ occurring in Assumption~\ref{ass:alphabeta}(ii) is also called the \textsl{approximation error function}. It is directly related to the interpolation properties of the RKHS, see \citet[Chapter 5.6]{SteinwartChristmann2008}. For instance, if $\mathcal H$ is a Sobolev space of order $m$, then the kernel is $m$-times differentiable and we can choose $\beta=\frac{2k}{2m}$ for $k$-Sobolev regular functions $f_\rho$ where $k\le m$ \citep[cf.][Chapter~11]{wendland2004scattered}. In particular, for a fixed regularity of the regression function $f_\rho$ very regular kernels lead to small $\beta$.
\end{remark}

We will require the following assumption for the kernel $K$.
\begin{assumption}
	\label{ass:kernel}
	Assume the form $K(x_1, x_2) = \phi(|x_1-x_2|)$ for kernel $K$, where $\phi\colon \mathbb{R}_+ \rightarrow \mathbb{R}$ satisfies the growth condition $\phi(0) - \phi(r) \leq \frac{L^2}{2} r^2$.
\end{assumption}
This assumption immediately implies the bound \citep[cf.][Section~8.4.2]{Iske2018} \begin{equation}\label{eq:KLip}
\|K(x_1, \cdot) - K(x_2, \cdot)\|_{\mathcal{H}} = (2 (\phi(0) - \phi(|x_1-x_2|)))^{1/2} \leq L |x_1 - x_2|.
\end{equation}
In particular, all $f\in\mathcal H$ are bounded and Lipschitz continuous with  $\|f\|_\infty\le \|f\|_{\mathcal H}$ (recalling that $\phi(0) = K(x, x) = 1$) and  $|f(x)-f(x')|=|\langle K(x,\cdot)-K(x',\cdot),f\rangle_{\mathcal H}|\le L\|f\|_{\mathcal H}|x-x'|$. Throughout, we denote the Lipschitz norm of a Lipschitz continuous function $g\colon\mathcal X\to\mathcal Y$ by
	\[
	\|g\|_{\mathrm{Lip}}:=\sup_{\substack{x_1,x_2\in\mathcal X:\\x_1\neq x_2}}\frac{|g(x_1)-g(x_2)|}{|x_1-x_2|}.
	\]

Assumption \ref{ass:kernel} is satisfied for frequently used choices like the Gaussian kernel $K(x, y) = \exp(-|x-y|^2)$ or various compactly supported radial kernels like $K(x, y) = (1-|x-y|^2)_+^l$ for $l\in \mathbb{N}$ \cite[Example 2]{wu1995compactly}, see also \citet[Table 4.1]{zhu2012compactly} or \cite{wendland2004scattered} for more examples. In general, any function $\phi$ that is twice continuously differentiable, satisfies $\phi'(0) = 0$ and has bounded second derivative fulfills Assumption \ref{ass:kernel}, as can be seen by using a second order Taylor expansion.
\medskip

While we will use an error bound by \cite{steinwart2009optimal}, in related settings very similar convergence results are obtained for instance by \citet{caponnetto2007optimal,fischer2020sobolev,lin2020optimal,mendelson2010regularization,wendland2004scattered,wu2006learning}.
The following result is a special case of Theorem 1 by \citet{steinwart2009optimal} with $s=p=\frac{1}{b}$ and $q=2$.
\begin{lemma}[\citet{steinwart2009optimal}]
	\label{rem:optimalrate}
	There exists a constant $C_l > 0$, such that for $\tau > 0$, it holds
	\[\|\boldsymbol{f}_{\lambda, n} - f_\rho\|_{L^2(\rho_X)}^2 \leq \frac{C_l \tau}{2} \Big(\lambda^{\beta} + \frac{1}{\lambda^\alpha n} + \frac{\lambda^{\beta}}{n} + \frac{1}{n}\Big)\ \leq C_l \tau \Big(\lambda^\beta + \frac{1}{\lambda^\alpha n}\Big)\]
	with probability at least $1-3\exp(-\tau)$ with respect to the $n$-fold product measure $\rho^{\otimes n}$.
	The dominating terms on the right hand side of the inequality  are $\lambda^{\beta}$ and $(\lambda^{\alpha} n)^{-1}$ for $n \rightarrow \infty$, $\lambda \rightarrow 0$. Optimizing for $\lambda$ yields
	$
	\lambda_n := n^{-\frac{1}{\beta + \alpha}}
	$
	and a resulting learning rate of 
	\[
	\|\boldsymbol{f}_{\lambda_n, n} - f_\rho\|_{L^2(\rho_X)}^2 \leq C_l \tau n^{-\frac{\beta}{\beta + \alpha}}
	\]
	with probability at least $1-3\exp(-\tau)$.
\end{lemma}

In the situation of Remark~\ref{rem:sobolev} with a Sobolev space $\mathcal H$ of order $m$ and a $k$-Sobolev functions $f_\rho$, we recover the classical minimax rate $n^{-2k/(2k+D)}$ if $2m>(2k)\vee D$. In particular, the rate deteriorates for large dimensions $D$.

\subsection{Dimensionality Reduction and Transformations of Measures}
\label{subsec:kerneltransformation}
In the previous section, the two parameters $\alpha$ and $\beta$ were introduced which characterize the properties of $\mathcal{H}$ in relation to the data distribution $\rho$ necessary for the rate of convergence, see Lemma \ref{rem:optimalrate}.

When transforming data, it is thus important to study how these parameters change when the data distribution changes. In particular, we focus on two simple yet important cases: First, a linear  $d$-dimensional subspace of $\mathbb{R}^D$ is transformed into a parametrization in $\mathbb{R}^d$, see Lemma \ref{lem:ortho}. And second, independent noise is added to the input data, see Lemma~\ref{lem:noise}.

For the following Lemma \ref{lem:ortho}, the input data is changed with a linear map $A\colon\mathbb{R}^D \rightarrow \mathbb{R}^d$. We identify $A$ by the corresponding matrix $A \in \mathbb{R}^{d \times D}$. We assume that $A$ has orthonormal rows, where one may think of the PCA matrix containing the largest $d$ eigenvectors of the covariance matrix of the data. Formally, the transformation of the input data is the following: If $(X, Y)$ is distributed according to $\rho$, we set $\tilde{\rho}$ as the distribution of $(AX, Y)$, which is a probability measure on $\tilde{\mathcal{Z}} = \tilde{\mathcal{X}} \times \mathcal{Y}$, where $\tilde{\mathcal{X}} \subset \mathbb{R}^d$. We also assume that $A$ is invertible on $\supp(\rho_X)$, meaning that there is an inverse $A_{\rm inv}$ having orthonormal columns and $A_{\rm inv}AX = X$.
The interpretation of this assumption is that the data distribution $\rho_X$ completely lies on a $d$-dimensional plane, see the left and middle images in Figure \ref{fig:lem2and3}.

\begin{lemma}
	\label{lem:ortho}
	Assume that the kernel $K$ is of the form $K(x_1, x_2) = \phi(|x_1-x_2|)$ for some $\phi\colon \mathbb{R} \rightarrow \mathbb{R}$.		
	Then the learning problems for $\rho$ and $\tilde{\rho}$ as defined above are equivalent in the following sense:
	\begin{itemize}
		\item[(i)] $f_{\rho} = f_{\tilde{\rho}} \circ A$
		\item[(ii)] $f_{\lambda, \rho} = f_{\lambda, \tilde{\rho}} \circ A$
		\item[(iii)] The parameters $\alpha$ and $\beta$ from Assumption \ref{ass:alphabeta} can always be chosen equally for the learning problems with $\rho$ and $\tilde{\rho}$.
	\end{itemize}
\end{lemma}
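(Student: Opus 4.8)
The plan is to derive all three parts from one structural fact. Since $A_{\rm inv}$ has orthonormal columns it is an isometric embedding of $\mathbb R^d$ into $\mathbb R^D$, and since $A_{\rm inv}(A(x))=x$ for $x\in\supp(\rho_X)$, the first step is to record that $|x_1-x_2|=|A_{\rm inv}(Ax_1)-A_{\rm inv}(Ax_2)|=|Ax_1-Ax_2|$ for all $x_1,x_2\in\supp(\rho_X)$. Because $K$ is radial this yields the \emph{kernel invariance} $K(x_1,x_2)=\phi(|x_1-x_2|)=\phi(|Ax_1-Ax_2|)=\tilde K(Ax_1,Ax_2)$, where $\tilde K(\tilde x_1,\tilde x_2):=\phi(|\tilde x_1-\tilde x_2|)$ is the corresponding kernel on $\tilde{\mathcal X}$ with RKHS $\tilde{\mathcal H}$ (its positive definiteness on $A(\supp(\rho_X))$ follows from that of $K$, since the associated Gram matrices coincide). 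Part (i) is then immediate: as $A_{\rm inv}(A(X))=X$ almost surely, the $\sigma$-algebras generated by $X$ and by $A(X)$ agree up to null sets, hence $f_\rho(X)=\E{Y\mid X}=\E{Y\mid A(X)}=f_{\tilde\rho}(A(X))$ $\rho_X$-almost surely, i.e.\ $f_\rho=f_{\tilde\rho}\circ A$ on $\supp(\rho_X)$.

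For (ii) I would first use the explicit form $f_{\lambda,\rho}=(T_\rho+\lambda)^{-1}g_\rho$ (equivalently the representer theorem together with the observation that the data term depends only on $f|_{\supp(\rho_X)}$ while the penalty is minimized over all functions with a given restriction by the orthogonal projection onto the span of the active kernel sections) to conclude that $f_{\lambda,\rho}$ lies in $\mathcal H_0:=\overline{\mathrm{span}}\{K(x,\cdot):x\in\supp(\rho_X)\}$, and likewise $f_{\lambda,\tilde\rho}\in\tilde{\mathcal H}_0:=\overline{\mathrm{span}}\{\tilde K(\tilde x,\cdot):\tilde x\in A(\supp(\rho_X))\}$. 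By the kernel invariance, the assignment $\tilde K(Ax,\cdot)\mapsto K(x,\cdot)$ extends to an isometric isomorphism $U:\tilde{\mathcal H}_0\to\mathcal H_0$, and the reproducing property gives $(U\tilde f)(x)=\langle U\tilde f,K(x,\cdot)\rangle_{\mathcal H}=\langle\tilde f,\tilde K(Ax,\cdot)\rangle_{\tilde{\mathcal H}}=\tilde f(Ax)$ for $x\in\supp(\rho_X)$, so $U\tilde f=\tilde f\circ A$ there. Since $\tilde\rho$ is the law of $(A(X),Y)$ and $\|U\tilde f\|_{\mathcal H}=\|\tilde f\|_{\tilde{\mathcal H}}$, a change of variables identifies the regularized objectives of $\rho$ and $\tilde\rho$ under $U$; the minimizers therefore correspond, giving $f_{\lambda,\rho}=U f_{\lambda,\tilde\rho}=f_{\lambda,\tilde\rho}\circ A$ on $\supp(\rho_X)$.

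For (iii) the plan is as follows. The map $Wg:=g\circ A_{\rm inv}$ is a unitary $L^2(\rho_X)\to L^2(\tilde\rho_X)$ (because $\tilde\rho_X$ is the law of $A(X)$ and $A$, $A_{\rm inv}$ are mutually inverse on the respective supports), and the kernel invariance makes $W$ intertwine $L_{K,\rho_X}$ with $L_{\tilde K,\tilde\rho_X}$; these operators are thus unitarily equivalent, have the same eigenvalue sequence, and the same $\alpha$ in Assumption~\ref{ass:alphabeta}(i) can be used. For $\beta$, combining (i), (ii) and the change of variables gives $\|f_{\lambda,\rho}-f_\rho\|^2_{L^2(\rho_X)}=\|f_{\lambda,\tilde\rho}-f_{\tilde\rho}\|^2_{L^2(\tilde\rho_X)}$, while $U$ being an isometry gives $\|f_{\lambda,\rho}\|^2_{\mathcal H}=\|f_{\lambda,\tilde\rho}\|^2_{\tilde{\mathcal H}}$; hence the approximation error function $A_2(\lambda)$ is unchanged and the same $\beta$ works.

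The step I expect to demand the most care is the bookkeeping in (ii): one must argue cleanly that only the restriction of candidate functions to $\supp(\rho_X)$ enters the two minimizations, so that passing to the active subspaces $\mathcal H_0$, $\tilde{\mathcal H}_0$ — on which $U$ is a true isometry, whereas on all of $\mathcal H$ one has only a restriction inequality — is legitimate, and one must keep track that the claimed identities hold $\rho_X$-almost everywhere / on $\supp(\rho_X)$ rather than pointwise on $\mathcal X$ (which is all that is used in the sequel). Everything else reduces to the isometry property of $A_{\rm inv}$ and the change-of-variables formula for $\tilde\rho$.
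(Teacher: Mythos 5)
Your proof is correct, and while the overall strategy (exploit that $A$ is an isometry on $\supp(\rho_X)$ plus a change of variables) is the same as the paper's, the mechanism you use for (ii) is genuinely different. The paper first establishes a correspondence of eigenvalue/eigenfunction pairs of $L_{K,\rho_X}$ and $L_{K,\tilde\rho_{\tilde X}}$ and then invokes Mercer's theorem to conclude the RKHS-norm identity $\|f\|_{\mathcal H}=\|f\circ A\|_{\mathcal H}$, from which the correspondence of the regularized minimizers follows; you instead build the isometry directly on the ``active'' subspaces $\mathcal H_0$ and $\tilde{\mathcal H}_0$ spanned by kernel sections at the support, using the Gram-matrix identity $K(x_1,x_2)=\tilde K(Ax_1,Ax_2)$ on $\supp(\rho_X)$, together with the representer-type observation that both regularized minimizers lie in these subspaces. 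Your route avoids Mercer's theorem entirely and is more careful about the fact that the pullback isometry (and hence the identity $f_{\lambda,\rho}=f_{\lambda,\tilde\rho}\circ A$) is only guaranteed on $\supp(\rho_X)$ --- which, as you note, is all that is used later; the paper's Mercer-based statement is phrased globally and glosses over this restriction. For (i) your $\sigma$-algebra argument for $\mathbb E[Y\,|\,X]=\mathbb E[Y\,|\,A(X)]$ replaces the paper's explicit computation that $\rho(\cdot\,|\,x)=\tilde\rho(\cdot\,|\,A(x))$; these are equivalent. For (iii) both arguments coincide in substance (unitary equivalence of the integral operators for $\alpha$, change of variables for $\beta$), and you additionally make explicit the equality of the $\lambda\|f_{\lambda,\rho}\|^2_{\mathcal H}$ term in the approximation error function, which the paper's displayed computation for $\beta$ leaves implicit.
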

The result comes as no surprise, since both kernel matrices and output variables of the learning problems with $\rho$ and $\tilde{\rho}$ can be transformed into each other by invertible maps. Nevertheless, the result is important in the sense that it perfectly fits the situation of PCA. For this setting, the result formalizes the intuition that it does not matter whether one uses the data as $D$-dimensional points lying on a linear plane or the respective $d$-dimensional parametrization thereof. 

The next result emphasizes the potential benefit of going from a true high-dimensional space to a lower-dimensional one. At least in the extreme case, where all one does is filter out independent noise, this can only improve the eigenvalue decay speed of the kernel integral operator and thus improve on the parameter $\alpha$. Formally, we choose some noise distribution $\kappa$ on $\mathcal{X}$ and denote by $\rho_X * \kappa$ the convolution of $\rho_X$ with $\kappa$. This means, if $X \sim \rho_X$ and $\varepsilon \sim \kappa$ are independent, then $X + \varepsilon \sim \rho_X * \kappa$.


\begin{figure}\label{fig:lem2and3}
	\includegraphics[width=0.325\textwidth]{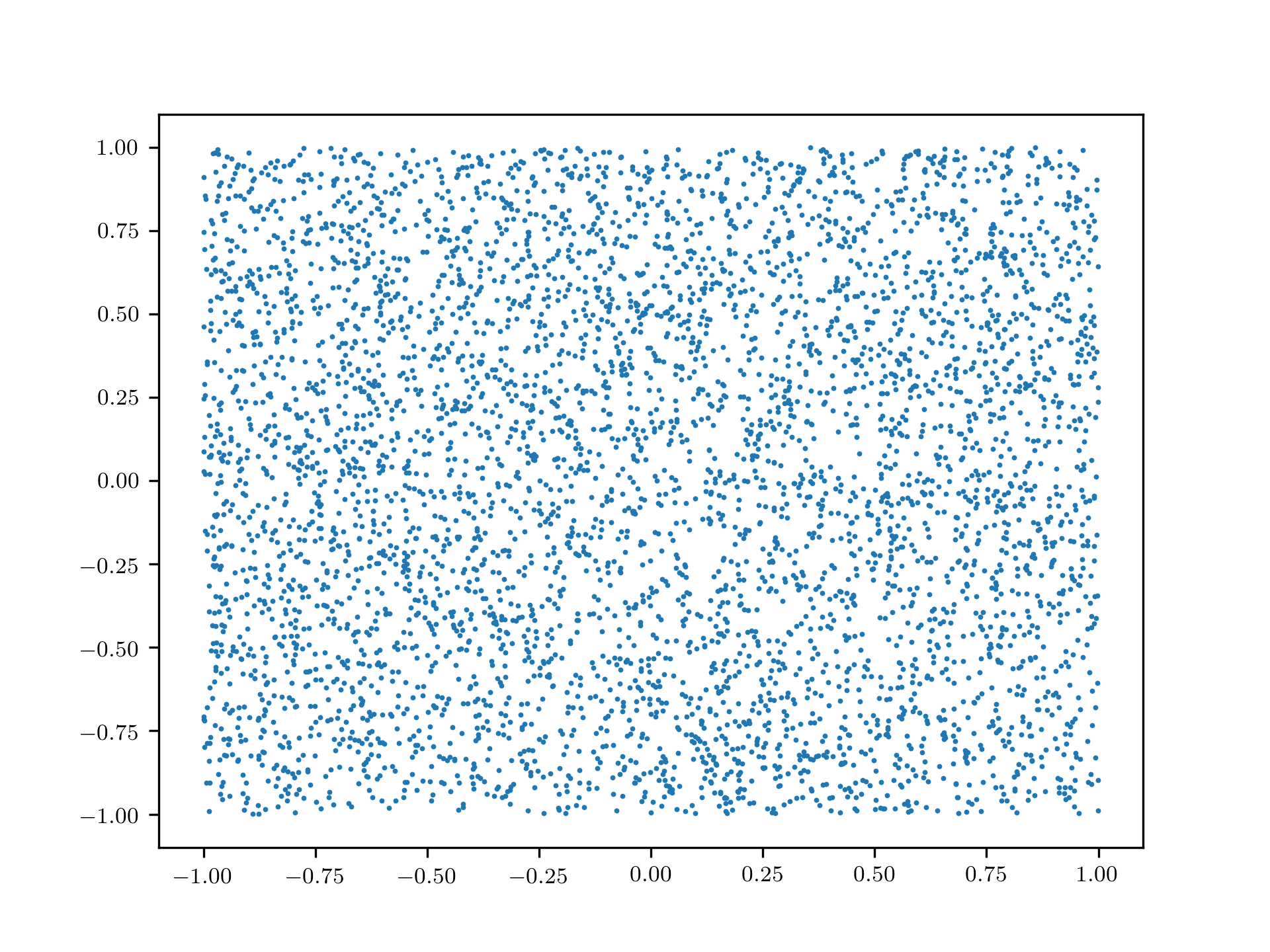}\hspace*{-0.5cm}
	\includegraphics[width=0.375\textwidth]{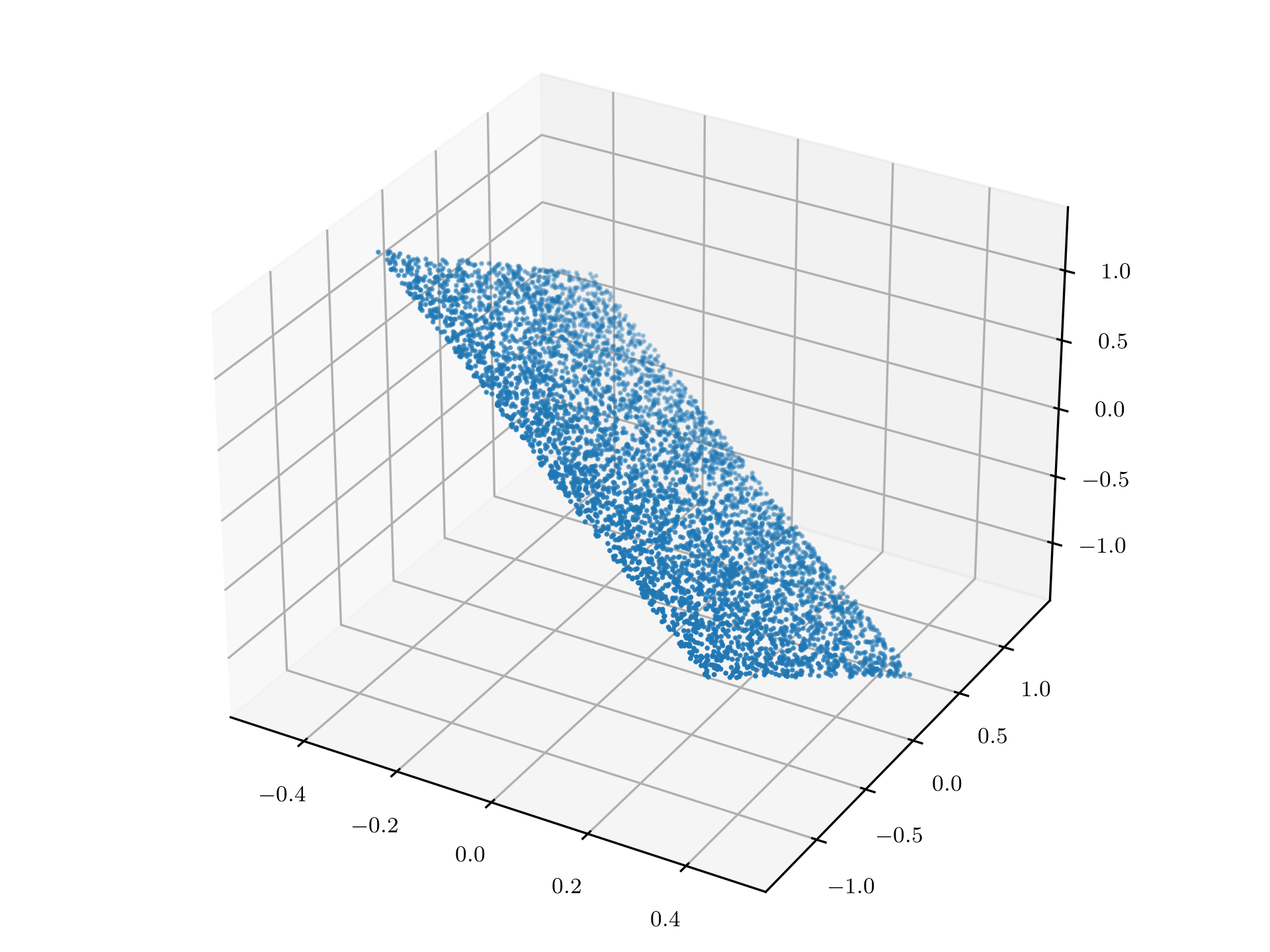}\hspace*{-0.5cm}
	\includegraphics[width=0.375\textwidth]{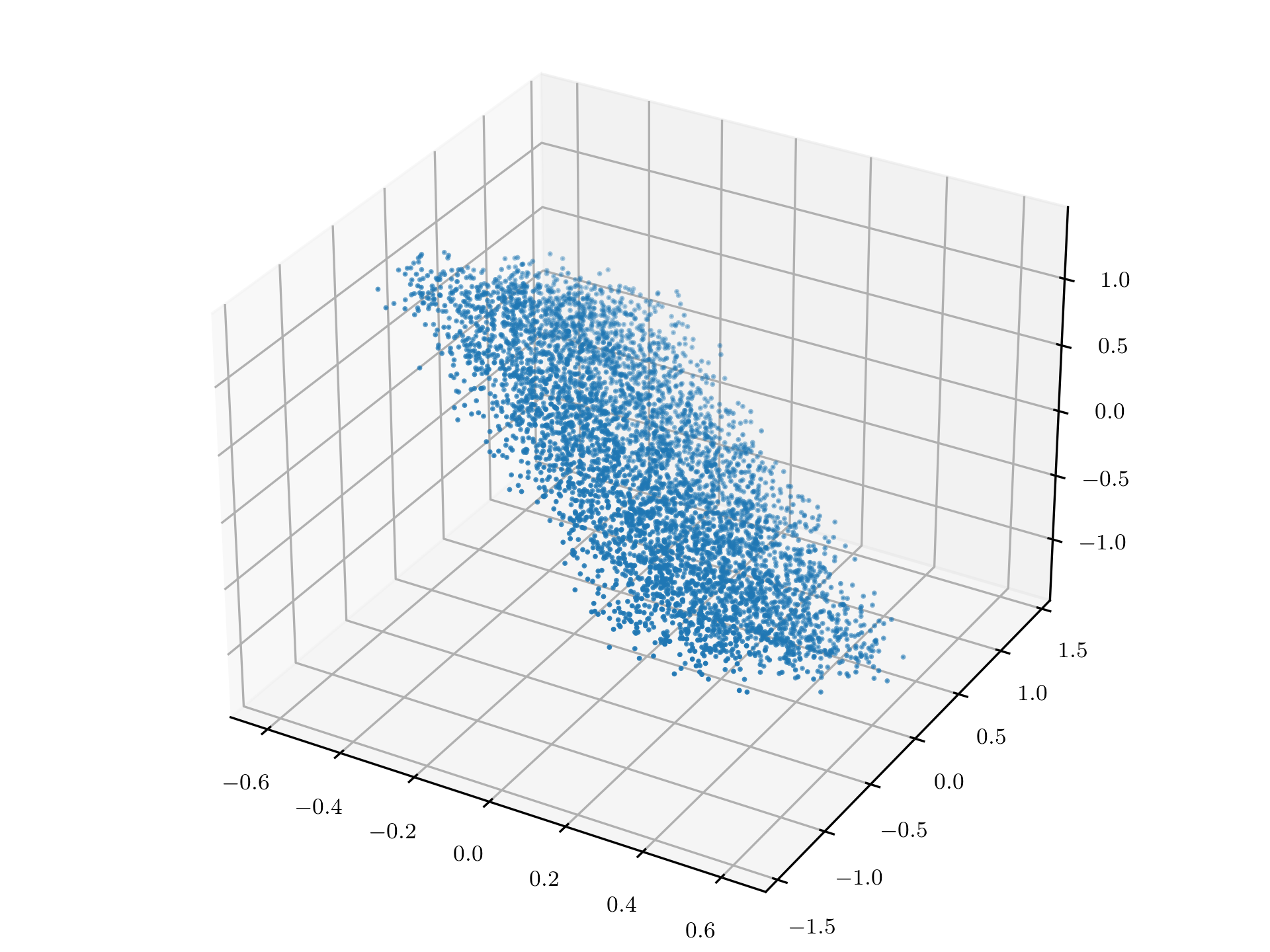}
	\caption{The figure on the left shows a two-dimensional uniform distribution. The figure in the middle illustrates a case where high-dimensional data lies on a lower-dimensional linear subspace and can be transformed with an orthonormal mapping as assumed in Lemma \ref{lem:ortho}. The figure on the right shows a distribution which arises when independent noise is added to data lying on a linear subspace as assumed in Lemma \ref{lem:noise}.}
\end{figure}

\begin{lemma}\label{lem:noise}
	Grant Assumption 2. Assume $\rho_X$ and $\kappa$ have finite first moment. Denote the sequences of decreasing eigenvalues for $L_{K, \rho_X}$, 
	$L_{K, \kappa}$ and $L_{K, \rho_X * \kappa}$ by 
	$(\sigma_n)_{n\in\mathbb{N}}$, $(\gamma_n)_{n\in\mathbb{N}}$ and 
	$(\hat\sigma_n)_{n\in\mathbb{N}}$, respectively. If $\sigma_n \sim n^{-a}$ and $\gamma_n \sim n^{-b}$ for $a, b > 0$. Then, if $\hat{\sigma}_n \lesssim n^{-c}$, we have $c \leq \min\{a, b\}$.
\end{lemma}

Lemma \ref{lem:noise} implies that the decay rate of eigenvalues for the parameter $\alpha$ in Assumption~\ref{ass:alphabeta} is faster for $\rho_X$ compared to $\rho_X * \kappa$, and hence when adding noise, the influence of $\alpha$ goes in the direction of a slower learning rate. In terms of $\alpha$, filtering out noise can thus only improve the learning rate. 
In many situations, the improvement implied by Lemma \ref{lem:noise} is strict as the following remark illustrates.

\begin{remark}
	 Let $X$ be supported on a $d$-dimensional subspace of  $\mathbb{R}^D$ and $\varepsilon$ has $D$-dimensional support (e.g., $\kappa$ is the uniform distribution on $[-\delta, \delta]^D$). For standard choices of kernels (cf.~Remark~\ref{rem:sobolev}, with $D < 2m$, where $m$ is the regularity of the kernel), this leads to $\sigma_n \sim n^{-m/d}$ while $\gamma_n \sim n^{-m/D}$. Thus, by Lemma \ref{lem:noise},
	\[
	\hat{\sigma}_n \gtrsim \max\{\sigma_n, \gamma_n\} = \gamma_n > \sigma_n 
	\]
	for $n$ large enough. Comparing the case with noise ($X + \varepsilon$ with eigenvalues $\hat{\sigma}_n$) to the case without noise ($X$ with eigenvalues $\sigma_n$), the choice of $\alpha$ in Assumption \ref{ass:alphabeta} (i) differs by a gap of at least $\frac{D-d}{m}$.
\end{remark}

While Lemma~\ref{lem:noise} only deals with the parameter $\alpha$, a classical model where the influence of the observation error $\varepsilon \sim \kappa$ 
on the regularity parameter $\beta$ can be determined is the errors-in-variables model \citep[see, e.g.,][]{meister2009}. The following example combines the dimension reduction perspective from above with these errors-in-variables models.
\begin{example}\label{ex:errorsinvars}
 We consider the model
\[
  Z=A^\top X+\eps\in\mathcal X\qquad\text{and}\qquad Y=f(X)+\delta\in\mathcal Y
\]
with $X\in\R^d$, centered observation errors $\eps\in\mathcal X\subseteq\mathbb R^D$, $\delta\in\mathcal Y\subseteq[-M,M]$ where $X,\delta$ and $\eps$ are independent and with a matrix $A\in\R^{d\times D}$ such that $A A^\top=E_d$. An i.i.d. training sample $(Z_i,Y_i)$ distributed as $(Z,Y)$ is observed. Let $\eps$ admit a density $\kappa\colon\R^D\to\R_+$ and $X$ has a density $\rho_{X}\colon\R^d\to\R_+$. In this setting the regression function $f_{\rho}$ is given by
\begin{align*} 
f_{\rho}(z) =  \mathbb E [ Y|Z=z] =  \mathbb E[ 
Y|A^\top X+\varepsilon=z] 
&= \int f(x)\mathbb P(X=dx|A^\top X+\varepsilon=z) \\
&= \frac{\int f(x)\rho_X(x)\kappa(z-A^\top x)dx}{\int \rho_X(x)\kappa(z-A^\top x)dx}.
\end{align*}
A regression with the projected data $AZ=X+A\eps$ as discussed in the context of Lemma~\ref{lem:ortho} leads to
\[
  f_{\tilde\rho}(z^*)=\frac{\int f(x)\rho_X(x)\kappa(A^\top(z^*- x))dx}{\int \rho_X(x)\kappa(A^\top (z^*-x))dx}=\frac{\big((f\cdot \rho_X)\ast \kappa(A^\top\cdot)\big)(z^*)}{\big(\rho_X\ast \kappa(A^\top\cdot)\big)(z^*)}
\]
for any $z^*$ in the range of $A$. In particular, $f_{\tilde\rho}=f_\rho(A^\top\cdot)$ inherits the regularity of $f\rho_X$ and $\kappa$ thanks to the convolution structure. In contrast, $f_\rho$ only achieves the regularity of $\kappa$ in directions which are not aligned with $A^\top$ such that $f_{\tilde\rho}$ might be much more regular than $f_\rho$, see Figure~\ref{fig:eivm}. This exemplifies a case where the regularity parameter $\beta$ from Assumption \ref{ass:alphabeta} may improve through dimensionality reduction.
\end{example}
\begin{figure}\label{fig:errorsinvar}
	\includegraphics[height=0.27\textheight, width=0.46\textwidth]{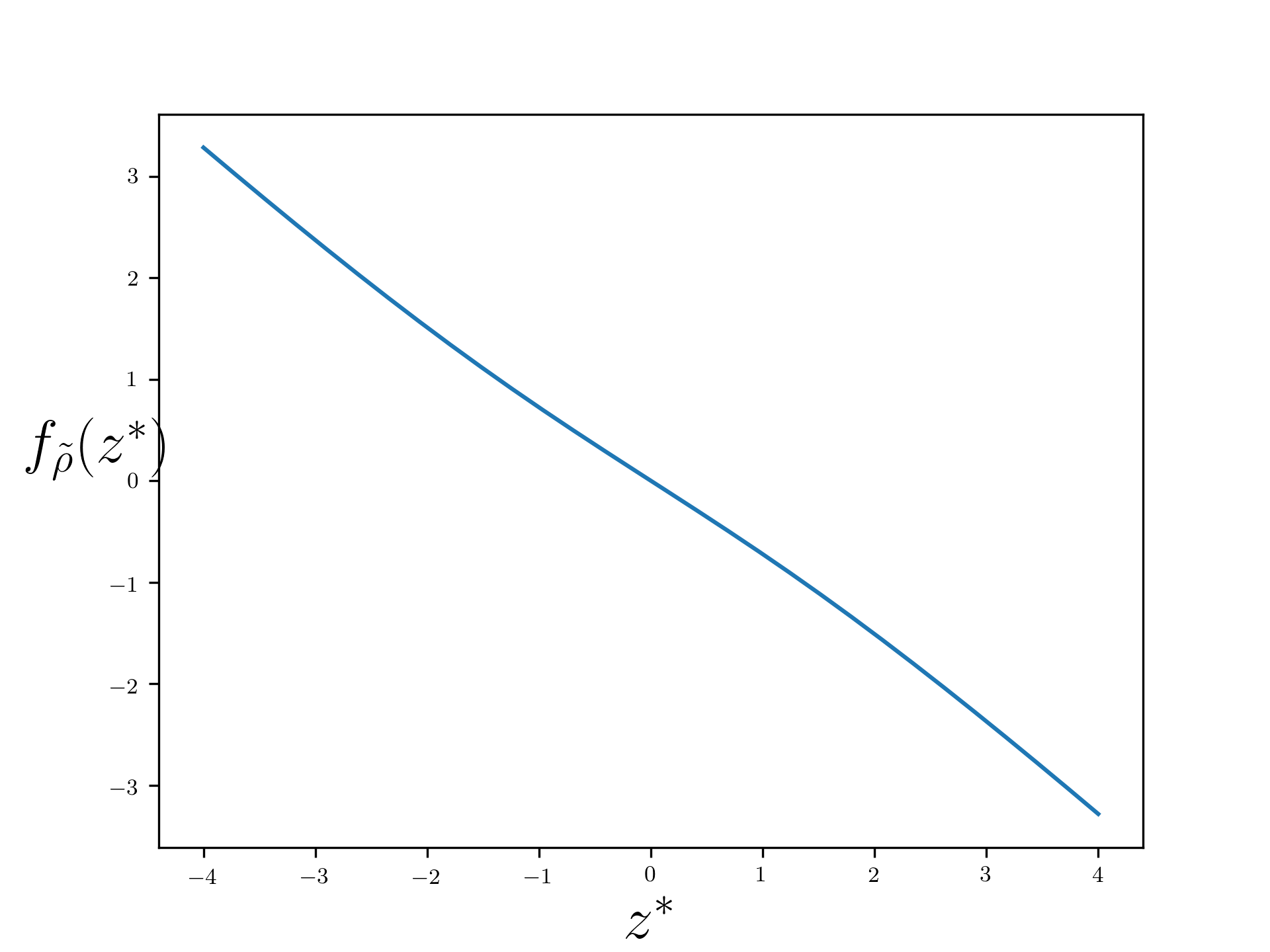}
	\includegraphics[height=0.30\textheight, width=0.53\textwidth]{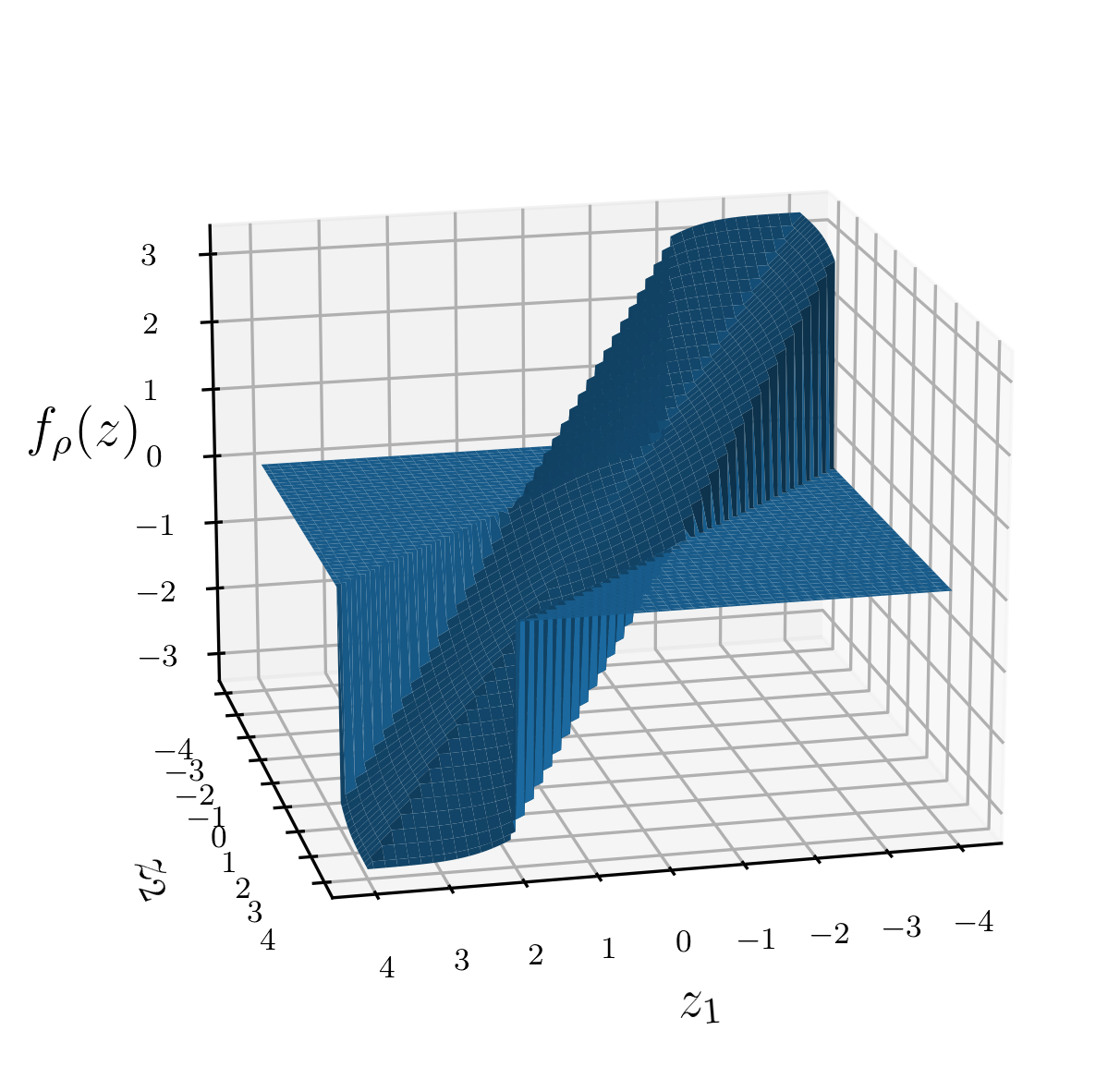}
	\caption{An illustration of the errors-in-variables model from Example \ref{ex:errorsinvars} where the low-dimensional regression function (left) is much smoother than the high-dimensional regression function (right). The plotted functions arise in the errors-in-variables model by setting $X \sim \mathcal{N}(0, 1)$, $A = (1, 1)$, $\varepsilon \sim \mathcal{U}([-1, 1]^2)$ and $f(x) = x$.}\label{fig:eivm}
\end{figure}


\subsection{Stability Result}
\label{subsec:kernelstability}

This section presents a general stability result for kernel regression with respect to the Wasserstein distance. In Section \ref{sec:secmain}, this result will be applied to the case where the error in the data arises from a dimension reduction preprocessing step. We first state the general result in \mbox{Theorem \ref{thm:stab}}, and discuss some of its aspects subsequently.

\begin{definition}
	For two probability measures $\rho_1$ and $\rho_2$ on $\mathcal{Z} = \mathcal{X} \times \mathcal{Y}$ we define the Wasserstein distance $W_1(\rho_1, \rho_2)$ between $\rho_1$ and $\rho_2$ by
	\[
	W_1(\rho_1, \rho_2) := \inf_{\pi \in \Pi(\rho_1, \rho_2)} \int \big(|x_1-x_2| + |y_1 - y_2|\big) \,\pi(d(x_1, y_1), d(x_2, y_2))
	\]
	where $\Pi(\rho_1, \rho_2)$ is the set of all probability measures $\pi$ on $\mathcal{Z}^2$ with first marginal distribution $\rho_1$ and second marginal distribution $\rho_2$.
\end{definition}

\begin{theorem}
	\label{thm:stab}
	Grant Assumption \ref{ass:kernel}. If three distributions $\rho_1, \rho_2, \rho_3$ on $\mathcal{Z}$ satisfy the relation
	\begin{equation}
	\label{eq:assstab}
	W_1(\rho_1,\rho_3)\le\frac\lambda{4L}
	\end{equation}
	for $\lambda>0$, then it holds
	\[
	\|f_{\lambda, \rho_1} - f_{\lambda, \rho_2}\|_{L^2(\rho_3)} \leq \big(\max\{1, L\,M\}+L\|f_{\lambda,\rho_2}\|_\infty + \|f_{\lambda,\rho_2}\|_{\mathrm{Lip}}\big) \frac{W_1(\rho_1, \rho_2)}{\sqrt{\lambda}}.
	\]
	The conclusion remains true if we replace Condition~\eqref{eq:assstab} by 
		\begin{equation}\label{rem:stab}
		\|(T_{\rho_3} - T_{\rho_1})(T_{\rho_3}+\lambda I)^{-1}\|_{\rm op} \leq \frac{1}{2}.
		\end{equation}
\end{theorem}


In Theorem \ref{thm:stab} the assumption on the relation between $\rho_1$ and $\rho_3$ is obviously satisfied for $\rho_1 = \rho_3$. Further, as will later be used, it is satisfied when $\rho_1$ is an empirical measure of $\rho_3$ of sufficiently large sample size, relative to $\lambda$. 
The assumption of Lipschitz continuity of the kernel is made so that it 
fits the standard definition of the Wasserstein distance. One may, for 
instance, weaken the assumption of Lipschitz continuity to H\"{o}lder
continuity with exponent $a \in (0, 1)$ (that is, weaken the growth condition in 
Assumption \ref{ass:kernel} to $\phi(0) - \phi(r) \leq r^{2 a}$) 
while simultaneously adjusting the cost function of the Wasserstein 
distance to $|x_1 - x_2|^a + |y_1 - y_2|$.

\begin{remark}
	\label{rem:constants}
	Due to Assumption~\ref{ass:kernel} the norms $\|f_{\lambda,\rho_2}\|_{\infty}$ and $\|f_{\lambda,\rho_2}\|_{\mathrm{Lip}}$ on the right-hand side of the inequality in Theorem \ref{thm:stab} can be bounded by 
	$\|f_{\lambda, \rho_2}\|_{\mathcal{H}}$, which can be controlled in 
	various ways. Notably, it always holds $\|f_{\lambda, 
		\rho_2}\|_{\mathcal{H}} \leq \frac{M}{\sqrt{\lambda}}$ and more 
	generally, stronger bounds on the $\mathcal{H}$-norm are for 
	instance given in \citet[Lemma 5.2]{wu2006learning} and indirectly 
	by \citet[Theorem 3]{cucker2002mathematical}. In particular, if 
	$f_{\rho_2} \in \mathcal{H}$, then $\|f_{\lambda, 
		\rho_2}\|_{\mathcal{H}}$ is uniformly bounded for all $\lambda$, see also Lemma~\ref{lem:normbound} below.
\end{remark}

\begin{example}
	We exemplify Theorem \ref{thm:overallerror} in the setting of Lemma \ref{lem:noise}. Say $(X, Y) \sim \rho$ and $\varepsilon \sim \kappa$, where $(X, Y)$ and $\varepsilon$ are independent. Assume 
	\[
	\int |x| \,\kappa(dx) = \delta.
	\]
	Define $(X + \varepsilon, Y) \sim \tilde{\rho}$. Then it is straightforward to see that $W_1(\rho, \tilde{\rho}) = \delta$ and thus for some constant $B_\lambda$ only depending on $\lambda$ but independent of $\delta$, Theorem \ref{thm:stab} yields that
	\[
	\|f_{\lambda, \rho} - f_{\lambda, \tilde{\rho}}\|_{L^2(\rho)} \leq B_\lambda \delta.
	\]
	As mentioned in Remark \ref{rem:constants}, both the Lipschitz 
	constant and the $L^\infty$ bound on $f_{\lambda, \tilde{\rho}}$ 
	which are contained in the constant $B_\lambda$ can be 
	controlled independently of $\tilde{\rho}$ by 
	$\|f_{\lambda, \tilde{\rho}}\|_{\infty}+\|f_{\lambda, \tilde{\rho}}\|_{\rm Lip} \leq 
	(1+L) \|f_{\lambda, \tilde{\rho}}\|_{\mathcal{H}} \leq 
	\frac{M}{\sqrt{\lambda}}$. 
\end{example}

We discuss the order of magnitude of the relation implied by Theorem \ref{thm:stab} for the difference $\|f_{\lambda, \rho_1} - f_{\lambda, \rho_2}\|_{L^2(\rho_3)}$ in terms of both $W_1(\rho_1, \rho_2)$ and $\lambda$ in the following example. 
\begin{example}
 Let $\rho_{1}=\rho_{3}=\frac{1}{2}\delta_{(0,0)}+\frac{1}{2}\delta_{(b,1)}$ and $\rho_{2}=\frac{1}{2}\delta_{(a,0)}+\frac{1}{2}\delta_{(b+a,1)}$ for $a,b\in(0,1)$ and $\lambda\in(0,1]$. In particular, we have $W_1(\rho_1,\rho_2)=a$. By the representer theorem we have 
\begin{align*}
f_{\lambda,\rho_{1}}(x)&=\hat{w}_{1}K\big(0,x\big)+\hat{w}_{2}K\big(b,x\big)\qquad\text{and}\\
f_{\lambda,\rho_{2}}(x)&=\hat{w}_{1}K\big(a,x\big)+\hat{w}_{2}K\big(b+a,x\big)
 \end{align*}
where $\hat{w}=(\hat{w}_{1},\hat{w}_{2})^{\top}$ is given by
\[\hat{w}	=\argmin_{w\in\R^2}\frac{1}{2}\left|\big(0,1\big)^{\top}-\mathbf{K}w\right|^{2}+\lambda w^{\top}\mathbf{K}w,\qquad\mathbf{K}=\begin{pmatrix}1 & K(0,b)\\
K(0,b) & 1
\end{pmatrix}.
\]
We choose $b$ such that $K(0,b)=1-\lambda$. Due to Assumption~\ref{ass:kernel}, this implies that $b$ is of order $\sqrt \lambda$. Basic linear algebra yields $\|f_{\lambda,\rho_{1}}-f_{\lambda,\rho_{2}}\|_{L^{2}(\rho_{1})}\geq c \frac{ab}\lambda$ and $\|f_{\lambda,\rho_{2}}\|_{\mathcal{H}}\le C \lambda^{-1/2}$ for some constants $C,c>0$ which do not depend on $\lambda$ and $a$. We conclude from Theorem \ref{thm:stab} that
\[
c\frac{a}{\sqrt{\lambda}}\le\|f_{\lambda,\rho_{1}}-f_{\lambda,\rho_{2}}\|_{L^{2}(\rho_{3})} \le C'\frac{1+\|f_{\lambda,\rho_{2}}\|_{\infty}+\|f_{\lambda,\rho_{2}}\|_{\mathrm{Lip}}}{\sqrt{\lambda}}W(\rho_{1},\rho_{2})\le C''\frac{a}{\lambda}
\]
with $C',C''>0$. We recover the linear dependence on $W_1(\rho_{1},\rho_{2})$. Although the order in $\lambda$ is not sharp in this simple example, it already reveals that the distance $\|f_{\lambda,\rho_{1}}-f_{\lambda,\rho_{2}}\|_{L^{2}(\rho_{3})}$ might indeed explode for $\lambda\to0$.
\end{example}

\section{Dimensionality Reduction}
\label{sec:secmain}
This section studies the combined approach of dimensionality reduction and function estimation. First, we derive an error estimate for the two-step procedure allowing for a general dimension reduction method. Subsequently, this result is detailed out for principal compoment analysis.

\subsection{General Dimensionality Reduction Estimates}
\label{subsec:generaldimred}

As before, the training data is given by pairs
\[(X_1, Y_1), (X_2, Y_2), \dots, (X_n, Y_n) \in \mathcal{Z}\]
which are independent and identically distributed according to $\rho$. 
We first consider the general situation where we have some optimal dimensionality reduction map
\[
P\colon \mathbb{R}^D \rightarrow \mathbb{R}^d
\]
with $d \leq D$ and some estimator $\hat{P}_n$. Hereby, $P$ is a dimensionality reduction for our explanatory data assuming full knowledge of the distribution of $X_1 \sim \rho_X$, while $\hat{P}_n$ is an estimator depending on $X_1, \dots, X_n$. In this subsection, we will not specify $P$ and $\hat{P}_n$ any further, and derive general error estimates depending on the difference $\|\hat{P}_n - P\|_{\rm op}$. In particular, the map $P$ does not need be linear in this subsection. More specific estimates for the case of PCA will be derived in Sections \ref{sec:PCA} and \ref{sec:PCAkernel}. For further examples of dimensionality reduction methods we refer to \cite{lee2007nonlinear}, noting in particular that kernel PCA methods may yield suitable estimates as well, cf.~\cite{scholkopf1997kernel} or \cite{reiss2020nonasymptotic}.

We define $X_i^* := P (X_i)$ and the estimated counterpart as $\hat{X}_i := \hat{P}_n (X_i)$. 
We are looking for the optimal regression function given the dimension-reduced input data. Denoting the underlying distribution of the reduced data $(X_1^*,Y_1)$ by $\tilde{\rho}$, the best predictor is given by
\[
f_{\tilde{\rho}}(x) = \int y \,\tilde{\rho}(dy | x).
\]
Since we lack knowledge of both $\rho$ and $P$, we do not know $\tilde{\rho}$ either. To estimate $f_{\tilde{\rho}}$ with our finite data $(\hat{X}_1, Y_1), \dots, (\hat{X}_n, Y_n)$, we define the regularized least square kernel fit
\begin{align*}
\hat{f}_{\lambda, n} := \argmin_{f \in \mathcal{H}} \frac{1}{n} \sum_{i=1}^n \big(f(\hat{X}_i) - Y_i\big)^2 + \lambda \|f\|_{\mathcal{H}}^2.
\end{align*}
For our error analysis we will later also require
\begin{align*}
f_{\lambda, n}^* &:= \argmin_{f \in \mathcal{H}} \frac{1}{n} \sum_{i=1}^n \big(f(X^*_i) - Y_i\big)^2 + \lambda \|f\|_{\mathcal{H}}^2.
\end{align*}
As before, $\hat{\boldsymbol{f}}_{\lambda, n}$ and $\boldsymbol{f}^*_{\lambda, n}$ are the clipped versions.

The mappings $\hat{P}_n$ and $\hat{f}_{\lambda, n}$ can be calculated given the sample data, while $P$ and $f_{\tilde{\rho}}$ are the respective best possible fits that can result from the employed procedure.
Of main interest in our analysis is thus the estimation error 
\begin{equation}
\label{eq:mainerror}
\| \hat{f}_{\lambda, n} \circ \hat{P}_n - f_{\tilde{\rho}} \circ P \|_{L^2(\rho_X)}.
\end{equation}
\MT{In addition there is} the structural error arising from the dimensionality reduction, i.e., $\|f_{\tilde{\rho}} \circ P - f_\rho\|_{L^2(\rho_X)}$. This error of course highly depends on the structure of the data $X$ and the suitability of the applied dimensionality reduction $P$. A bound for the structural error in case of PCA depending on the linear structure of $X$ will be given in Proposition~\ref{prop:procedureerror}.

First, we give the specification of Theorem \ref{thm:stab} to the case 
relevant for bounding the error in Equation \eqref{eq:mainerror}. To this end, Proposition \ref{prop:pcakernelstability} bounds the error between the kernel regression problems using differing input data determined by either $P$ or $\hat{P}_n$. 
We recall that $\mathcal{Y} \subseteq [-M, M]$ and the constant $L$ coming from Assumption \ref{ass:kernel}. 

\begin{proposition}
	\label{prop:pcakernelstability}
	Let $\eta \in (0, 1)$ and grant Assumption \ref{ass:kernel}.
	For $\mathcal{N}(\lambda) := 
\Tr ((T_{\tilde{\rho}} + \lambda)^{-1} T_{\tilde{\rho}})$, assume that
	\begin{equation}
	\label{eq:assnlargeenough}
	\lambda \leq \|T_{\tilde{\rho}}\|_{\rm op} ~~ \text{ and } ~~ n \geq 64 \frac{\log^2(6 \eta^{-1}) \mathcal{N}(\lambda)}{\lambda}
	\end{equation}
	and set $S_n := \frac{1}{n} \sum_{i=1}^n |X_i|$. Then 
		\[
		\|\hat{f}_{\lambda, n} \circ P - f_{\lambda, n}^* \circ P\|_{L^2(\rho_X)} \leq \frac {S_n  \big(1+LM+\|\hat f_{\lambda,n}\|_{\mathrm{Lip}} + \|\hat{f}_{\lambda, n}\|_\infty \big)}{\sqrt{\lambda}} \, \big\|\hat{P}_n - P\big\|_{\rm op}
		\]
		holds with probability at least $1-\eta/3$ with respect to $\rho^{\otimes n}$.
\end{proposition}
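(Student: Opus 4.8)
The plan is to apply Theorem \ref{thm:stab} with the identifications $\rho_1 = \hat\rho_n$, $\rho_2 = \rho_n$, and $\rho_3 = \tilde\rho$, where $\hat\rho_n := \frac{1}{n}\sum_{i=1}^n \delta_{(\hat P_n X_i, Y_i)}$ is the empirical distribution of the PCA-reduced data and $\rho_n := \frac{1}{n}\sum_{i=1}^n \delta_{(P X_i, Y_i)}$ is the empirical distribution of the true-reduced data. With these choices, $f_{\lambda,\rho_1} = \hat f_{\lambda,n}$ and $f_{\lambda,\rho_2} = f_{\lambda,n}$ by the remark in Section \ref{subsec:kernelsetting} that $f_{\lambda,\rho_n} = f_{\lambda,n}$; moreover the left-hand side $\|f_{\lambda,\rho_1} - f_{\lambda,\rho_2}\|_{L^2(\rho_3)}$ becomes exactly $\|\hat f_{\lambda,n}\circ P - f_{\lambda,n}\circ P\|_{L^2(\rho_X)}$ once we observe that $\tilde\rho$ is the pushforward of $\rho$ under $x\mapsto Px$, so integrating a function of $Px$ against $\rho_X$ is the same as integrating that function against the first marginal of $\tilde\rho = \rho_3$. (Here one uses that $f_{\lambda,n}\circ P$ and $\hat f_{\lambda,n}\circ P$ are functions evaluated at $Px$; there is a minor point that $\hat f_{\lambda,n}$ is really meant to be composed with $\hat P_n$, but the statement to be proved deliberately composes it with $P$, so no issue arises.)

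There are then three things to check to invoke Theorem \ref{thm:stab}: (a) a bound on $W_1(\hat\rho_n, \rho_n)$; (b) the Lipschitz and boundedness constants $\mathcal L_\lambda$, $\mathcal M_\lambda$ for $f_{\lambda,\rho_2} = f_{\lambda,n}$; and (c) the operator-norm condition \eqref{eq:assstab} relating $\rho_1 = \hat\rho_n$ and $\rho_3 = \tilde\rho$. For (a), couple $\hat\rho_n$ and $\rho_n$ through the obvious synchronous coupling that pairs $(\hat P_n X_i, Y_i)$ with $(P X_i, Y_i)$ for each $i$; this gives $W_1(\hat\rho_n, \rho_n) \le \frac{1}{n}\sum_{i=1}^n |\hat P_n X_i - P X_i| \le \frac{1}{n}\sum_{i=1}^n \|\hat P_n - P\|_{op}|X_i| = S\,\|\hat P_n - P\|_{op}$. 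For (b), the constant $B_\lambda = \max\{1,LM\} + L\mathcal M_\lambda + \mathcal L_\lambda$ in Theorem \ref{thm:stab} is exactly the constant appearing in the Proposition's statement, so we simply carry the hypothesis that $f_{\lambda,n}$ is $\mathcal L_\lambda$-Lipschitz and $\mathcal M_\lambda$-bounded, as licensed by Remark \ref{rem:constants} (e.g.\ $\max\{\mathcal L_\lambda,\mathcal M_\lambda\}\le \|f_{\lambda,n}\|_{\mathcal H}\le M/\sqrt\lambda$).

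The main obstacle is (c): verifying $\|(T_{\tilde\rho} - T_{\hat\rho_n})(T_{\tilde\rho}+\lambda)^{-1}\|_{\mathcal L(\mathcal H)}\le \tfrac12$ with probability at least $1-\eta/3$. This is where the quantitative assumption \eqref{eq:assnlargeenough} enters. The idea is standard in the kernel-learning literature: write $(T_{\tilde\rho} - T_{\hat\rho_n})(T_{\tilde\rho}+\lambda)^{-1} = (T_{\tilde\rho}+\lambda)^{1/2}(T_{\tilde\rho}+\lambda)^{-1/2}(T_{\tilde\rho} - T_{\hat\rho_n})(T_{\tilde\rho}+\lambda)^{-1}$ — more precisely, bound the operator norm by $\lambda^{-1/2}\|(T_{\tilde\rho}+\lambda)^{-1/2}(T_{\tilde\rho}-T_{\hat\rho_n})\|_{\mathcal L(\mathcal H)}$ up to the usual symmetrization, and then apply a Bernstein-type concentration inequality for the self-adjoint operator-valued random variables $\xi_i := (T_{\tilde\rho}+\lambda)^{-1/2}\big(K(X_i^*,\cdot)\otimes K(X_i^*,\cdot)\big)(T_{\tilde\rho}+\lambda)^{-1/2}$, whose mean is $(T_{\tilde\rho}+\lambda)^{-1/2}T_{\tilde\rho}(T_{\tilde\rho}+\lambda)^{-1/2}$. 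The effective variance is controlled by the effective dimension $\mathcal N(\lambda) = \Tr((T_{\tilde\rho}+\lambda)^{-1}T_{\tilde\rho})$ and the operator norm of each $\xi_i$ is at most $1/\lambda$ (using the normalization $\sup_x K(x,x)=1$); feeding these into the Bernstein bound and imposing $n \ge 64\,\log^2(6\eta^{-1})\,\mathcal N(\lambda)/\lambda$ yields the deviation $\le \tfrac12$ at confidence $1-\eta/3$. The condition $\lambda\le C_1 = \|T_{\tilde\rho}\|_{\mathcal L(\mathcal H)}$ is a mild normalization ensuring $\mathcal N(\lambda)$ is bounded away from a degenerate regime, so the constant $64$ in \eqref{eq:assnlargeenough} suffices. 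Combining the three ingredients through Theorem \ref{thm:stab} and noting all bad events have been allotted probability $\eta/3$ gives the claimed inequality with probability at least $1-\eta/3$.
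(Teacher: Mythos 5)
Your overall strategy (invoke Theorem \ref{thm:stab} with $\rho_3=\tilde\rho$, bound $W_1$ by the synchronous coupling to get $S\,\|\hat P_n-P\|_{op}$, and use \eqref{eq:assnlargenough}-type concentration to verify \eqref{eq:assstab}) is the right one, but your assignment of roles creates a gap. You set $\rho_1=\hat\rho_n=\frac1n\sum_i\delta_{(\hat P_nX_i,Y_i)}$, and in Theorem \ref{thm:stab} it is $\rho_1$ that must satisfy the operator condition \eqref{eq:assstab} with $\rho_3$. So you must show $\|(T_{\tilde\rho}-T_{\hat\rho_n})(T_{\tilde\rho}+\lambda)^{-1}\|_{\mathcal L(\mathcal H)}\le\frac12$. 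But the points $\hat X_i=\hat P_nX_i$ are \emph{not} i.i.d.\ draws from $\tilde\rho_X$: they are transformed by the data-dependent projection $\hat P_n$, whose law differs from $P$. Hence the Bernstein-type operator concentration you invoke (Caponnetto--De Vito style, which is exactly what \eqref{eq:assnlargeenough} is calibrated for) does not apply to $T_{\hat\rho_n}$. Indeed your own $\xi_i$ are built from $K(X_i^*,\cdot)$, i.e.\ from the \emph{true}-reduced points, so what you actually sketch is concentration of $T_{\rho_n}$ around $T_{\tilde\rho}$ --- the wrong operator for your choice of $\rho_1$. Repairing this within your assignment would require additionally controlling $\|(T_{\hat\rho_n}-T_{\rho_n})(T_{\tilde\rho}+\lambda)^{-1}\|$, which is of order $LS\|\hat P_n-P\|_{op}/\lambda$ and is not small under the hypotheses of the Proposition (no smallness of $\|\hat P_n-P\|_{op}$ relative to $\lambda$ is assumed here, and the probability budget is only $\eta/3$).

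The fix is simply to swap the roles, which is what the paper does: take $\rho_1=\frac1n\sum_i\delta_{(X_i^*,Y_i)}$ (a genuine i.i.d.\ empirical measure of $\tilde\rho$, so \eqref{eq:assnlargeenough} yields \eqref{eq:assstab} with probability $1-\eta/3$ via the argument in the proof of Theorem 4, Step 2.1, of Caponnetto--De Vito) and $\rho_2=\frac1n\sum_i\delta_{(\hat X_i,Y_i)}$, so that the Lipschitz/boundedness constants $\mathcal L_\lambda,\mathcal M_\lambda$ in $B_\lambda$ attach to $f_{\lambda,\rho_2}=\hat f_{\lambda,n}$ (consistent with how $\mathcal L_\lambda$ is used later in Theorem \ref{thm:overallerror}, and controllable via Remark \ref{rem:constants}). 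Since the conclusion of Theorem \ref{thm:stab} and your coupling bound $W_1(\rho_1,\rho_2)\le S\|\hat P_n-P\|_{op}$ are symmetric in the two empirical measures, everything else in your argument, including the identification $\|\cdot\|_{L^2(\tilde\rho)}=\|(\cdot)\circ P\|_{L^2(\rho_X)}$, then goes through as you wrote it.
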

In the above result, the term $S_n$ is of course either bounded (if $X_1$ is bounded) or at least bounded with high-probability by the law of large numbers, since we will always assume that $X_1$ has a finite second moment. 
Further, while $\|\hat f_{\lambda,n}\|_{\mathrm{Lip}} + \|\hat{f}_{\lambda, n}\|_\infty$ can be unbounded in general, in various situations one may reasonably assume that this term is (uniformly across $n$) bounded with high probability. One case may be if $\hat{f}_{\lambda, n}$ converges in a sufficiently strong sense to a bounded and Lipschitz continuous function. Another case may be if all estimators are uniformly bounded with respect to $\|\cdot\|_{\mathcal H}$ (cf.~Remark \ref{rem:constants}).
Finally, it is worth mentioning that the assumption in \eqref{eq:assnlargeenough} may be restrictive in general, although in the setting of \cite{caponnetto2007optimal} the assumption is shown to be satisfied asymptotically for a suitable choice of $\lambda$ leading to optimal rates \citep[cf.~the related discussion in][Chapter~13]{wendland2004scattered}.

Below, we state the main result of this section, which shows how to reduce the error of the joint procedure (estimation of both dimensionality reduction and kernel regression) to each individual procedure, where only the results for the kernel regression on the low-dimensional explanatory data (given by the push-forward distribution $\rho_X \circ P^{-1}$) is needed.
\begin{theorem}\label{thm:generaldimred}
	Let $\lambda\in(0,1]$ and $\eta \in (0, 1)$ such that \eqref{eq:assnlargeenough} holds. Define $S^2=\mathbb E[|X_1|^2]$ and assume $S < \infty$. Then
	\begin{align*}
	&\frac{1}{3}\|\hat{\boldsymbol{f}}_{\lambda, n} \circ \hat{P}_n - f_{\tilde{\rho}}\circ P \|^2_{L^2(\rho_X)} \\
	&\leq \frac {(S_n^2+S^2)  \big(1+LM+\|\hat f_{\lambda,n}\|_{\mathrm{Lip}} + \|\hat{f}_{\lambda, n}\|_\infty \big)^2}{\lambda} \, \big\|\hat{P}_n - P\big\|^2_{\rm op} + \|\boldsymbol{f}_{\lambda, n}^* - f_{\tilde{\rho}}\|^2_{L^2(\rho_{X}\circ P^{-1})}
	\end{align*}
	holds with probability at least $1-\eta/3$ with respect to $\rho^{\otimes n}$.
\end{theorem}
To prove this theorem, we decompose
\begin{eqnarray}
\frac{1}{3}\|\hat{\boldsymbol{f}}_{\lambda, n} \circ \hat{P}_n - f_{\tilde{\rho}} \circ P\|^2_{L^2(\rho_X)}
&\leq&\|(\hat{f}_{\lambda, n} \circ \hat{P}_n - \hat{f}_{\lambda, n} \circ P)\|^2_{L^2(\rho_X)} \notag\\
&&\mbox{}+ \|(\hat{f}_{\lambda, n} \circ P - f_{\lambda, n}^* \circ P)\|^2_{L^2(\rho_X)} \label{eq:decomp}\\
&&\mbox{}+ \|(\boldsymbol{f}_{\lambda, n}^* \circ P - f_{\tilde{\rho}} \circ P)\|^2_{L^2(\rho_X)},\notag
\end{eqnarray}%
where the first term can be estimated with the Lipschitz constant of $\hat{f}_{\lambda, n}$ combined with the reconstruction error $\|\hat P_n-P\|_{\rm op}$ and the second term is bounded by  Proposition~\ref{prop:pcakernelstability}.

Theorem~\ref{thm:generaldimred} implies that as soon as we can control $\|\hat{P}_n - P\|_{\rm op}$, we can reduce the estimation error $\|\hat{\boldsymbol{f}}_{\lambda, n} \circ \hat{P}_n - f_{\tilde{\rho}}\circ P \|_{L^2(\rho_X)}$ of the combined learning problem to the estimation error which would result if we just had to estimate $f_{\tilde{\rho}}$ while assuming perfect knowledge of $P$, i.e., to $\|\boldsymbol{f}_{\lambda, n}^* - f_{\tilde{\rho}}\|_{L^2(\rho_{X}\circ P^{-1})}$.

\subsection{Principal Component Analysis}
\label{sec:PCA}
Next, we state results on PCA, which are mainly taken from 
\cite{reiss2020nonasymptotic}. For a more general treatment of PCA in the context of dimensionality reduction we refer for instance to \citet{jolliffe2002principal, lee2007nonlinear}. The results by \cite{reiss2020nonasymptotic} will later be used in our 
estimates for the combined study of PCA with regularized kernel regression. 
The goal is to reduce the dimension of the input data from dimension $D$ to 
$d < D$. The reduced data will still be regarded as points in 
$\mathbb{R}^D$, albeit Lemma \ref{lem:ortho} shows that it does not matter 
whether one uses the points as a linear subset in $\mathbb{R}^D$ or a 
$d$-dimensional representation thereof.

For the PCA procedure, we are in the same setting as introduced in \mbox{Section \ref{subsec:kernelsetting}}. Let
\begin{equation}
\label{eq:orthoproj}
\mathcal{P}_d := \{ P \in \mathbb{R}^{D \times D} : P \text{ is orthogonal projection of rank } d\}. 
\end{equation}
This means $P \in \mathcal{P}_d$ can be written as $P = A^\top A$ for $A \in \mathbb{R}^{d \times D}$ having orthonormal rows. Define
\begin{equation*}
P := \argmin_{\tilde{P} \in \mathcal{P}_d} \int |x - \tilde{P}x|^2 \,\rho_X(dx)\qquad\text{and}\qquad
\hat{P}_n := \argmin_{\tilde{P} \in \mathcal{P}_d} \frac{1}{n} \sum_{i=1}^n |X_i - \tilde{P} X_i|^2.
\end{equation*}

The following assumption will be used to apply the results on principal component analysis. Note that part (i) implies that $X$ has finite second moment.
\begin{assumption}
	\label{ass:PCA}
	Let $X \sim \rho_X$.
	\begin{itemize}
		\item[(i)] $X$ is sub-Gaussian, i.e., $\sup_{u\in\R^D:\mathbb E[\langle X,u\rangle^2]\le 1}\sup_{k\ge 1}k^{-1/2}\mathbb E[|\langle X,u\rangle|^k]^{1/k}<\infty$.
		\item[(ii)] $\sigma^X_d - \sigma^X_{d+1} > 0$, where $\sigma^X_1, \sigma^X_2, \dots, \sigma^X_D$ is the sequence of decreasing eigenvalues of the covariance matrix of $X$.
		\item[(iii)] $X$ is centered: $\mathbb{E}[X] = 0$.
	\end{itemize}
\end{assumption}

The following is a corollary of \citet[Proposition 2]{reiss2020nonasymptotic} and the main reconstruction result for PCA which will be used in Section \ref{sec:PCAkernel}.
\begin{lemma}[\citet{reiss2020nonasymptotic}]
	\label{lem:pcaerror}
	Grant Assumption \ref{ass:PCA}.
	There exists a constant $C_{pca}$ such that for all $n \in \mathbb{N}$, $\eta \in (0, 1)$, with probability $(1-\eta)$, it holds
	\[
	\|\hat{P}_n - P\|_{\rm op} \leq \|\hat{P}_n - P\|_2 \leq \frac{\Gamma}{3\eta\sqrt n}\qquad\text{with}\qquad \Gamma:=\frac{3C_{pca}}{(\sigma^X_{d+1} - \sigma^X_d)}.
	\]
\end{lemma}

The PCA projections we work with defined in Equation \eqref{eq:orthoproj} are regarded as mappings from $\mathbb{R}^D$ onto itself. In terms of matrices, this means the projection mapping $P = A^\top A$ is used to get the reduced data $P X \in \mathbb{R}^D$ lying on a $d$-dimensional plane. An alternative route would be to work with $A \in \mathbb{R}^{d \times D}$ directly instead and obtain reduced data $A X \in \mathbb{R}^d$. Similarly to Lemma \ref{lem:pcaerror} one can obtain error estimates in this setting using the $\sin\theta$-Theorem \citep[see, e.g.,][]{yu2015useful} combined with optimal rates on covariance matrix estimation \citep[see, e.g.,][]{cai2010optimal}.

\subsection{Estimation Error for PCA and Kernel Regression}
\label{sec:PCAkernel}	

We can now bound the statistical error from \eqref{eq:mainerror} for PCA and kernel regression. Recall the constants $C_l$ and $\Gamma$ from Lemma \ref{rem:optimalrate} and Lemma~\ref{lem:pcaerror}, respectively. The parameter $\beta$ is as in Assumption \ref{ass:alphabeta} for the measure $\tilde{\rho}$ instead of $\rho$. As before, $S_n := \frac{1}{n} \sum_{i=1}^n |X_i|$ and $S^2=\mathbb E[|X_1|^2]$, and we assume $S < \infty$ throughout.

\begin{theorem}
	\label{thm:overallerror}
	Grant Assumptions \ref{ass:kernel} and \ref{ass:PCA} and let $\tilde\rho$ satisfy \eqref{eq:assbeta} for $\beta \in(0, 1]$. Let $\lambda\in(0,1]$ and $\eta \in (0, 1)$ such that \eqref{eq:assnlargeenough} holds. Then there is a constant $\bar C>0$ depending on $M,L$ and $\Gamma$ such that
	\begin{align}
	\begin{split} 
	\label{eq:overallerror}
	\|\hat{\boldsymbol{f}}_{\lambda, n} \circ \hat{P}_n - f_{\tilde{\rho}}\circ P \|^2_{L^2(\rho_X)} 
	&\leq \bar C\frac{(S^2+S_n^2) (\|\hat f_{\lambda,n}\|_{\mathrm{Lip}}^2 + \|\hat{f}_{\lambda, n}\|_\infty^2 + 1)}{\eta^2 n\lambda } \\
	&\qquad+ 3 C_l \log\big(\frac 9\eta\big) \Big(\lambda^{\beta} + \frac{1}{\lambda n}\Big).
	\end{split}
	\end{align}
	with probability at least $1-\eta$ according to the $n$-fold product measure $\rho^{\otimes n}$. 
\end{theorem}
To show this result, we apply Theorem~\ref{thm:generaldimred} in combination with the PCA error bound from Lemma~\ref{lem:pcaerror} and the kernel regression error bound on the low-dimensional space from Lemma~\ref{rem:optimalrate}.

In the following, we specify the above Theorem \ref{thm:overallerror} by optimizing for $\lambda$ under the simplifying assumption that $\|\hat{f}_{\lambda, n}\|_{\rm Lip} + \|\hat{f}_{\lambda, n}\|_\infty$ is bounded. Since both norms are bounded by $\|\hat f_{\lambda,n}\|_{\mathcal H}$, the following lemma provides a high probability bound in the regular case $\beta=1$:
\begin{lemma}\label{lem:normbound}
If $f_{\tilde{\rho}}\in\mathcal{H}$
and $\lambda=n^{-1/2}$, then there is a constant $B>0$ such that
\[
\|\hat f_{\lambda,n}\|_{\mathcal{H}}^{2}\le B\eta^{-3/2}+4\|f_{\tilde{\rho}}\|_{\mathcal{H}}^{2}
\]
is satisfied with probability of at least $1-5\eta$.
\end{lemma}
On the event that $\|\hat{f}_{\lambda, n}\|_{\rm Lip} + \|\hat{f}_{\lambda, n}\|_\infty$ is bounded, the dominating terms in the bound of Theorem \ref{thm:overallerror} for $n\rightarrow \infty$ and $\lambda \rightarrow 0$ on the right hand side of Equation \eqref{eq:overallerror} are $\frac{1}{n \lambda}$ and $\lambda^\beta$. Note that the resulting optimal $\lambda$ satisfies \eqref{eq:assnlargeenough} as soon as $\beta>\alpha$.
\begin{corollary}
	\label{cor:errorsummary}
	We are in the setting of Theorem \ref{thm:overallerror}.
	Optimizing the right hand side of Equation \eqref{eq:overallerror} for $\lambda$ yields 
	\[
	\lambda = n^{-\frac{1}{\beta + 1}}.
	\]
	Assuming \eqref{eq:assnlargeenough}, there are high-probability events $\mathcal A_n\subseteq\Omega, n\in\mathbb N,$ with $\mathbb P(\mathcal A_n) \geq 1 - \eta_n$ such that we have an overall learning rate of the form
	\begin{align}
	\begin{aligned}
	\label{eq:summaryrate}
	\|\hat{\boldsymbol{f}}_{\lambda, n} \circ \hat{P}_n - f_{\tilde{\rho}}\circ P \|^2_{L^2(\rho_X)} &\leq \frac{C}{\eta_n^2}\, n^{-\frac{\beta}{\beta + 1}} \\ &\text{on}~ \mathcal A_n\cap\{\|\hat{f}_{\lambda, n}\|_{\rm Lip} + \|\hat{f}_{\lambda, n}\|_\infty\le \mathcal L\}
	\end{aligned}
	\end{align}
	with some constant $C>0$ (which depends on $\mathcal{L}$).
\end{corollary}

Instead of the procedure introduced in this section (first PCA, then kernel regression on $d$-dimensional subspace), consider the alternative method to fit $f_\rho$ directly using kernel regression on $\mathbb{R}^D$. Let us denote the clipped estimated kernel functions by $\boldsymbol{g}_{\lambda, n}$ and $\overline{\alpha}, \overline{\beta}$ be the parameters for Assumption \ref{ass:alphabeta} for $\rho$. By Lemma \ref{rem:optimalrate}, the resulting learning rate is
\[
\|f_\rho - \boldsymbol{g}_{\lambda, n}\|^2_{L^2(\rho_X)} \leq C n^{-\frac{\overline{\beta}}{\overline\beta + \overline{\alpha}}}.
\]
Compared with the rate in \eqref{eq:summaryrate}, we find that it is not immediate which procedure converges faster. Indeed, only if \[\frac{\overline{\beta}}{\overline{\alpha}} < \beta,\] then the two-step approach converges faster than a direct high-dimensional estimation. Note that it is natural to assume $\beta\ge\bar\beta$, i.e. the lower dimensional regression function is as least as regular as the high-dimensional regression function. In view of Example~\ref{ex:errorsinvars} we may indeed gain in the regularity, i.e. $\beta>\bar\beta$.

In Corollary \ref{cor:errorsummary}, the influence of $\alpha$ from Assumption \ref{ass:alphabeta} is lost in the overall rate. More precisely, the rate behaves as if $\alpha$ takes the (worst possible) value of $\alpha=1$. In a sense, this means that the rate is no longer adaptive with respect to the complexity of $\rho_X$ measured by the RKHS $\mathcal{H}$. This makes intuitive sense, since now the complexity of $\rho_X$ also influences the estimation procedure of the PCA map, which is independent of $\mathcal{H}$. 

On the other hand, in the setting of Remark~\ref{rem:sobolev} but for 
the low dimensional space, we can adapt the regularity $m$ of the kernel to 
the dimension $d$ which has two advantages: First, we can allow for much 
smaller regularities since $d$ might be considerably smaller than $D$ and 
thus the numerical stability of the kernel method can be improved \citep[cf.][Section 12]{wendland2004scattered}. Second, 
the resulting $\alpha=\frac{d}{2m}$ can be chosen close to one and the rate 
in 
Corollary~\ref{cor:errorsummary} is only slightly worse than the best 
possible rate $n^{-2k/(2k+d)}$ for $k$-Sobolev regular regression functions 
$f_{\tilde \rho}$.

Moreover, this comparison certainly motivates that dimensionality reduction is particularly suitable to apply if one does not expect small values of $\overline{\alpha}$ anyways. Recall that $\overline{\alpha}$ is usually small if the kernel $K$ is very smooth and the dimension is not too large. Especially for large dimensions, one could expect high values of $\overline{\alpha}$, and thus not much is lost when applying dimensionality reduction, which basically leads to $\alpha = 1$. This intuition is consistent with the numerical results in Section \ref{sec:examples}, where Figure \ref{fig:case1and2} shows that utilizing PCA for dimensionality reduction leads to notable improvements for $C^0$ or $C^2$ kernels, while this is not (or less so) the case for the Gaussian kernel, which is $C^\infty$.

\medskip
The two-step procedure has another interesting advantage. Consider the case where more $X$-sample points are given compared to $(X, Y)$ sample pairs, which is a situation frequently observed in semi-supervised learning settings. Then, the error bounds for the PCA map can use all $X$ samples, even though the function estimation can only use the pairs. This can improve the overall estimates of the whole procedure.

Let $m > n$ and additional $m - n$ many $X$-samples $X_{n+1}, \dots, X_{m}$ be given and we consider the case where the PCA map is estimated using all $m$ many $X$-samples.
\begin{theorem}
	\label{prop:mn}
	With the setting and notation of Theorem \ref{thm:overallerror}, granting additionally that Assumption \ref{ass:alphabeta} holds for $\tilde{\rho}$, we have with probability $(1-\eta)$ with respect to $\rho^{\otimes m}$ that
	\begin{align}
	\begin{split} 
	\label{eq:supervisederror}
	&\|\hat{\boldsymbol{f}}_{\lambda, n} \circ \hat{P}_{m} - f_{\tilde{\rho}}\circ P \|^2_{L^2(\rho_X)} \\
	&\qquad\leq\bar C\frac{(1+S_n^2) (\|\hat f_{\lambda,n}\|_{\mathrm{Lip}}^2 + \|\hat{f}_{\lambda, n}\|_\infty^2)}{\eta^2 m\lambda } + C_l \log(9/\eta) \Big(\lambda^{\beta} + \frac{1}{\lambda^\alpha n} \Big).
	\end{split}
	\end{align}%
	We have $\|\hat f_{\lambda,n}\|_{\mathrm{Lip}}^2 + \|\hat{f}_{\lambda, n}\|_\infty^2\le(1+L^2)M^2/\lambda$.
	Assuming \eqref{eq:assnlargeenough}, the right hand side of \eqref{eq:supervisederror} is of order $\frac{1}{\lambda^2 m}+ \lambda^\beta+ \frac{1}{\lambda^\alpha n}$ for large $n, m$ and small $\lambda$.
	If $m > n^{\frac{2+\beta}{\alpha +\beta}}$ the optimized choice $\lambda=n^{-\frac{1}{\alpha+\beta}}$ yields the rate of convergence
	\[
	\|\hat{\boldsymbol{f}}_{\lambda, n} \circ \hat{P}_{m} - f_{\tilde{\rho}}\circ P \|^2_{L^2(\rho_X)} \leq C n^{-\frac{\beta}{\beta + \alpha}}.
	\]
	Otherwise we obtain the rate $m^{-\frac{\beta}{\beta + 2}}$ for $\lambda = m^{-\frac{1}{2+\beta}}$. 
\end{theorem}
For sufficiently large $m$, we thus recover the rate from Lemma~\ref{rem:optimalrate} in the low-dimensional space.

	To complete the picture, Proposition \ref{prop:procedureerror} gives a direct result on controlling the structural error $\|f_\rho - f_{\tilde{\rho}} \circ P\|_{L^2(\rho_X)}$ of the procedure under a Lipschitz condition for $f_\rho$. It shows that the magnitude of the error is controlled by the smallest eigenvalues of the covariance matrix of $X \sim \rho_X$.
\begin{proposition}
	\label{prop:procedureerror}
	Let $f_\rho$ be Lipschitz continuous with constant $L$. Then it holds
	\[
	\| f_\rho - f_{\tilde{\rho}} \circ P \|_{L^2(\rho_X)} \leq 2 L \Big(\sum_{i=d+1}^D \sigma_i^X\Big)^{1/2},
	\]
	where $\sigma_1^X, \dots \sigma_D^X$ are the eigenvalues of the covariance matrix of $X \sim \rho_X$ in decreasing order.
\end{proposition}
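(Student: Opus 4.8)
The plan is to recognise $f_{\tilde\rho}\circ P$ as a conditional expectation, hence as the $L^2(\rho_X)$-best approximation of $f_\rho$ among functions of $PX$, and then to bound the resulting approximation error by the PCA reconstruction error, which by the very definition \eqref{eq:defproj1} of $P$ equals $\sum_{i=d+1}^D\sigma_i^X$.

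First, let $(X,Y)\sim\rho$. By construction $\tilde\rho$ is the law of $(PX,Y)$, so $f_{\tilde\rho}(z)=\int y\,\tilde\rho(dy\mid z)$ is a version of $\mathbb{E}[Y\mid PX=z]$; since $\sigma(PX)\subseteq\sigma(X)$, the tower property gives, $\rho_X$-almost surely,
\[
f_{\tilde\rho}(PX)=\mathbb{E}[Y\mid PX]=\mathbb{E}\big[\mathbb{E}[Y\mid X]\mid PX\big]=\mathbb{E}[f_\rho(X)\mid PX].
\]
Thus $f_{\tilde\rho}\circ P$ is the orthogonal projection of $f_\rho$ in $L^2(\rho_X)$ onto the closed subspace of ($\rho_X$-classes of) functions of the form $h\circ P$. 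Extending $f_\rho$ from $\supp\rho_X$ to an $L$-Lipschitz function on all of $\mathbb{R}^D$ (a McShane-type extension, which preserves the Lipschitz constant for real-valued maps) and keeping the notation $f_\rho$, the map $x\mapsto f_\rho(Px)$ is an admissible competitor, so
\[
\|f_\rho-f_{\tilde\rho}\circ P\|_{L^2(\rho_X)}\le\|f_\rho(X)-f_\rho(PX)\|_{L^2(\rho_X)}\le L\,\|X-PX\|_{L^2(\rho_X)}.
\]
If one prefers to avoid the Hilbert-space projection argument, the weaker bound with the constant $2L$ appearing in the statement follows from $\|f_\rho-f_{\tilde\rho}\circ P\|\le\|f_\rho-f_\rho\circ P\|+\|f_\rho\circ P-\mathbb{E}[f_\rho(X)\mid PX]\|$, bounding the first summand as above and the second via $\sigma(PX)$-measurability of $f_\rho(PX)$ together with the fact that conditional expectation is an $L^2$-contraction.

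It remains to identify $\|X-PX\|_{L^2(\rho_X)}^2=\int|x-Px|^2\rho_X(dx)$ with $\sum_{i=d+1}^D\sigma_i^X$. Writing $\Sigma$ for the covariance of $X$ and using that $X$ is centred, for every $\tilde P\in\mathcal{P}_d$ the symmetry and idempotence of $I-\tilde P$ yield $\mathbb{E}|X-\tilde PX|^2=\mathbb{E}[X^\top(I-\tilde P)X]=\Tr((I-\tilde P)\Sigma)=\Tr(\Sigma)-\Tr(\tilde P\Sigma)$. By \eqref{eq:defproj1}, $P$ minimises this over $\mathcal{P}_d$, i.e.\ maximises $\Tr(\tilde P\Sigma)$, and by the Ky Fan variational principle this maximum equals $\sum_{i=1}^d\sigma_i^X$; hence $\|X-PX\|_{L^2(\rho_X)}^2=\Tr(\Sigma)-\sum_{i=1}^d\sigma_i^X=\sum_{i=d+1}^D\sigma_i^X$. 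Combined with the previous display, this gives the claim.

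I do not expect a genuine obstacle: the argument merely assembles the tower property, the projection characterisation of conditional expectation, Lipschitz extension, and the classical variational characterisation of PCA. The only points needing a little care are the measure-theoretic bookkeeping in the first step (that the a.e.-defined object $f_{\tilde\rho}\circ P$ is indeed a version of $\mathbb{E}[f_\rho(X)\mid PX]$ on $\supp\rho_X$) and the harmless extension of $f_\rho$ to $\mathbb{R}^D$ in the second step; the factor $2$ in the stated bound is exactly the cost of the robust triangle-inequality route, while the projection route gives the sharper constant $L$.
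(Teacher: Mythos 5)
Your proposal is correct, and its main route is genuinely different from (and slightly sharper than) the paper's. Both arguments share the key identification, via the tower property, that $f_{\tilde\rho}(P(X))=\E{Y|P(X)}=\E{f_\rho(X)|P(X)}$ holds $\rho_X$-a.s. The paper then proceeds by the triangle inequality $\|f_\rho-f_{\tilde\rho}\circ P\|\le\|f_\rho-f_\rho\circ P\|+\|f_\rho\circ P-f_{\tilde\rho}\circ P\|$ and bounds the second term by the first via conditional Jensen (the $L^2$-contraction property of conditional expectation), which is exactly the fallback route you sketch and is the source of the factor $2$; the last step is the Lipschitz bound plus the PCA reconstruction identity, which the paper simply cites from the literature. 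Your primary argument instead uses that $\E{f_\rho(X)|P(X)}$ is the orthogonal projection of $f_\rho(X)$ onto $L^2(\sigma(P(X)))$, so the competitor $f_\rho\circ P$ directly yields $\|f_\rho-f_{\tilde\rho}\circ P\|_{L^2(\rho_X)}\le L\,\E{|X-P(X)|^2}^{1/2}$, improving the constant from $2L$ to $L$. You also add two details the paper glosses over: the McShane extension ensuring $f_\rho(Px)$ is well defined with the same Lipschitz constant (square-integrability of the competitor follows from the Lipschitz bound and the sub-Gaussianity of $X$ in the standing Assumption \ref{ass:PCA}), and an explicit proof of $\E{|X-P(X)|^2}=\sum_{i=d+1}^D\sigma_i^X$ via $\E{|X-\tilde PX|^2}=\Tr((I-\tilde P)\Sigma)$ (using that $X$ is centered, again Assumption \ref{ass:PCA}) and the Ky Fan principle, where the paper cites \cite{reiss2020nonasymptotic}. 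In short: the paper's proof is shorter and leans on a citation; yours is self-contained and gives the sharper constant, at the cost of invoking the Hilbert-space projection characterization and a Lipschitz extension.
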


\section{Numerical Examples}
\label{sec:examples}

In this section, the overall procedure of dimensionality reduction and kernel regression is illustrated in a simple setting. We mainly aim to shed some light on the comparison of the occurring errors for direct regression versus inclusion of dimensionality reduction, as discussed above. Due to computational constraints, the insight of these examples towards true asymptotic behavior is of course limited, but we believe the illustration for smaller sample sizes and visualization of the absolute errors involved can nevertheless be insightful. 

Let $d = 2$ and $D = 10$. The $X$-data is generated as follows: Consider $\tilde{X}$ to be uniformly distributed on $[-1, 1]^d \times [-\varepsilon, \varepsilon]^{D-d}$, where we set $\varepsilon = 0.1$. We define $X$ to be some rotation (with an orthogonal transformation) of $\tilde{X}$, say $X = A\tilde{X}$. The first two eigenvalues of the covariance matrix of $X$ are thus $1/3$ and the remaining ones are $1/30$.

As in Section \ref{sec:PCA}, we denote by $P$ the PCA map and $\hat{P}_n$ the estimated counterpart.
The excess reconstruction error for the PCA procedure depending on the number of sample points used, which is studied in Lemma \ref{lem:pcaerror}, is showcased in Figure \ref{fig:ex1_pca}. The shown error, after an application of Markov's inequality, yields Lemma \ref{lem:pcaerror} which governs the term $\varepsilon$ in the stability result in Proposition \ref{prop:pcakernelstability}. We see that the (predicted) linear behavior in $n$ arises.

\begin{figure}
	\centering
	\includegraphics[width=0.50\textwidth]{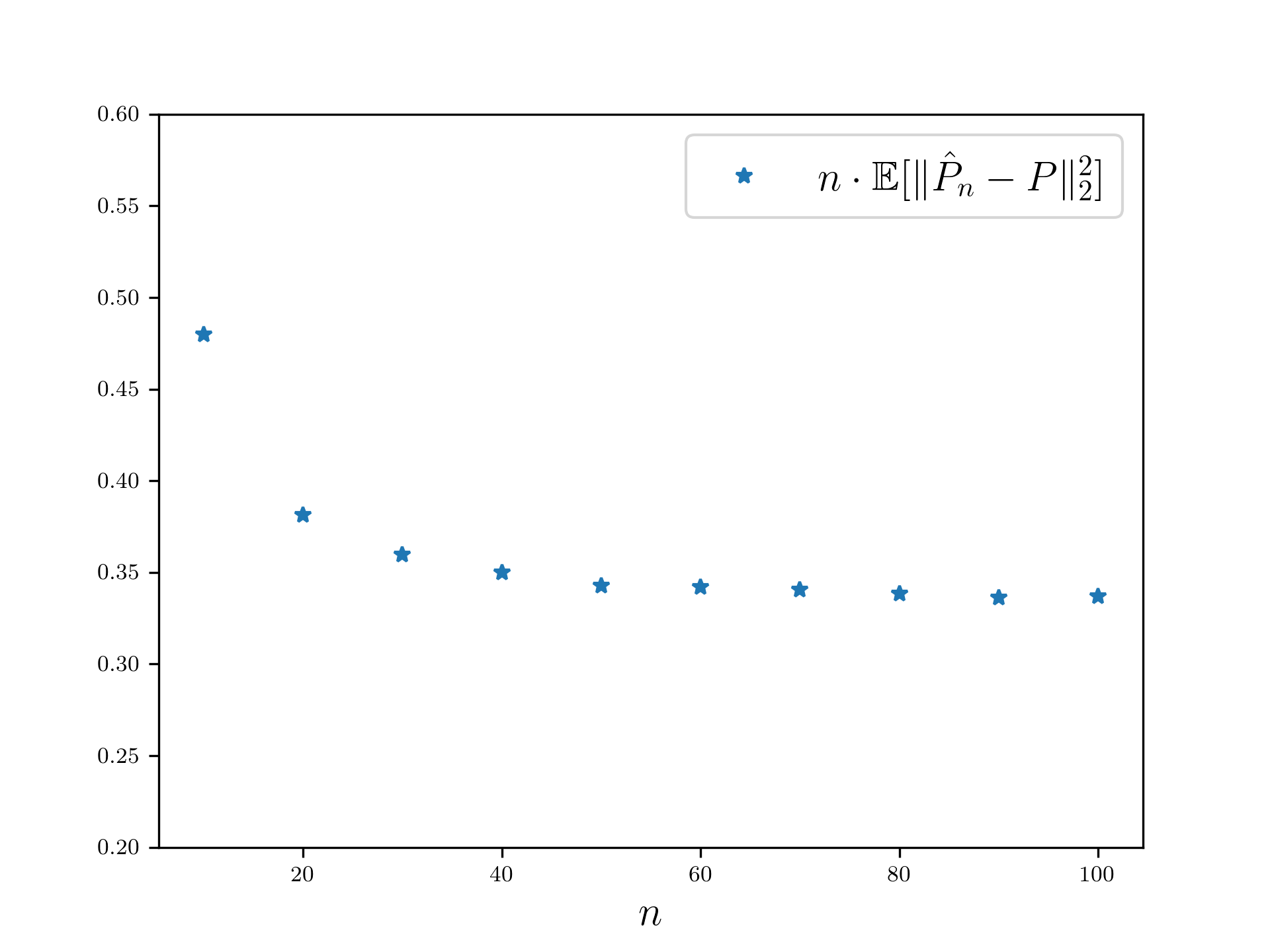}
	\caption{Rate for the excess reconstruction error (the error occurring from sample estimation of the PCA map) for the PCA map in the Example given in Section \ref{sec:examples}. To estimate the expectation, we used the Monte-Carlo method with $10000$ sample points for $\hat{P}_n$ for each $n$. }
	\label{fig:ex1_pca}
\end{figure}

For the $Y$-data, we treat two different cases. Roughly speaking, the cases differ in the sense of whether the dependence structure of $(X, Y)$ includes the noise inherent in $X$ or not. For both cases, we fix a function $f^{(1)}\colon \mathbb{R}^D \rightarrow \mathbb{R}$, which we (rather arbitrarily) define by $f^{(1)}(x) := \sin\big(\sum_{i=1}^D x_i\big)$.

\begin{figure}
	\includegraphics[width=0.5\textwidth]{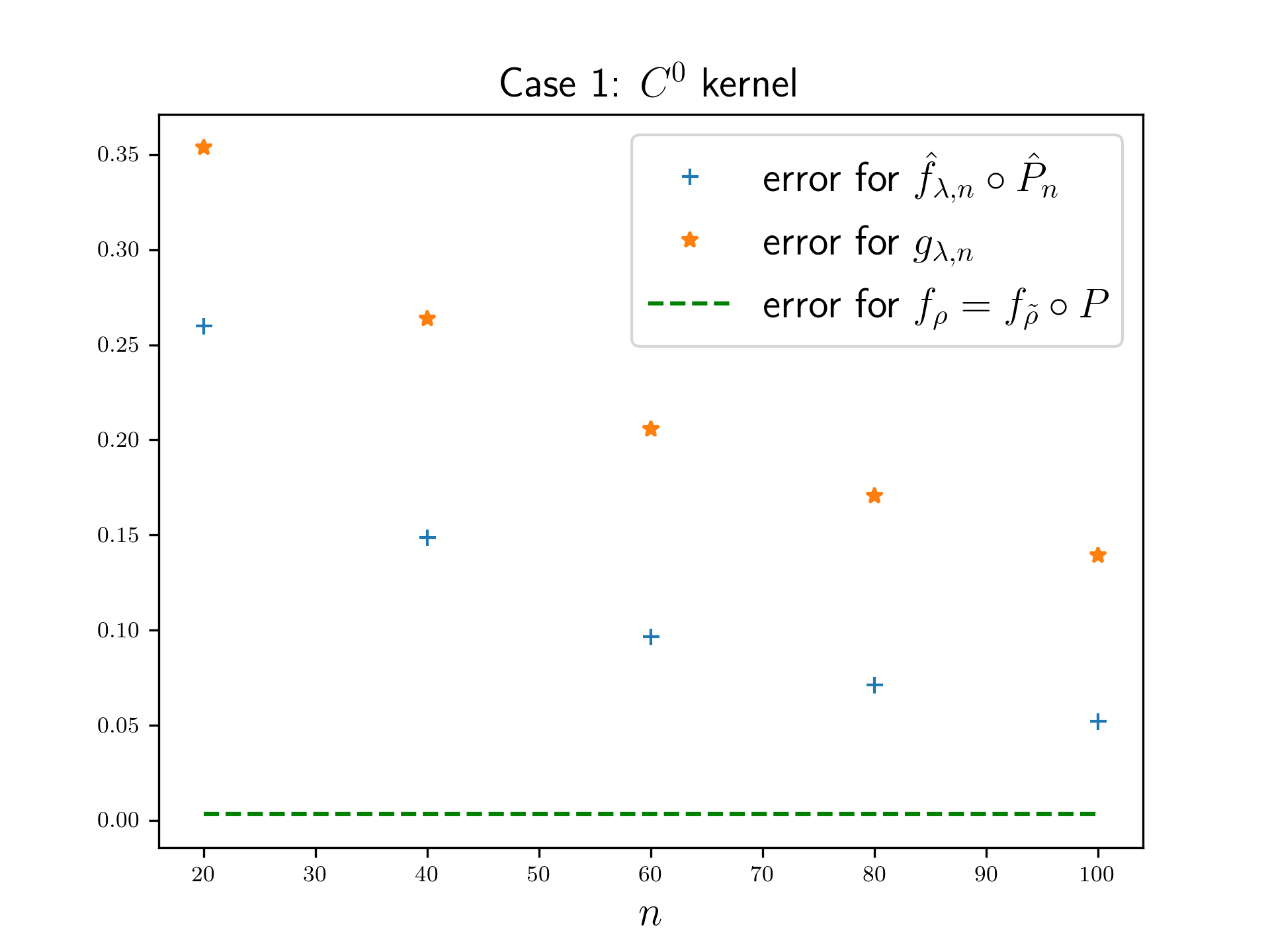}	
	\includegraphics[width=0.5\textwidth]{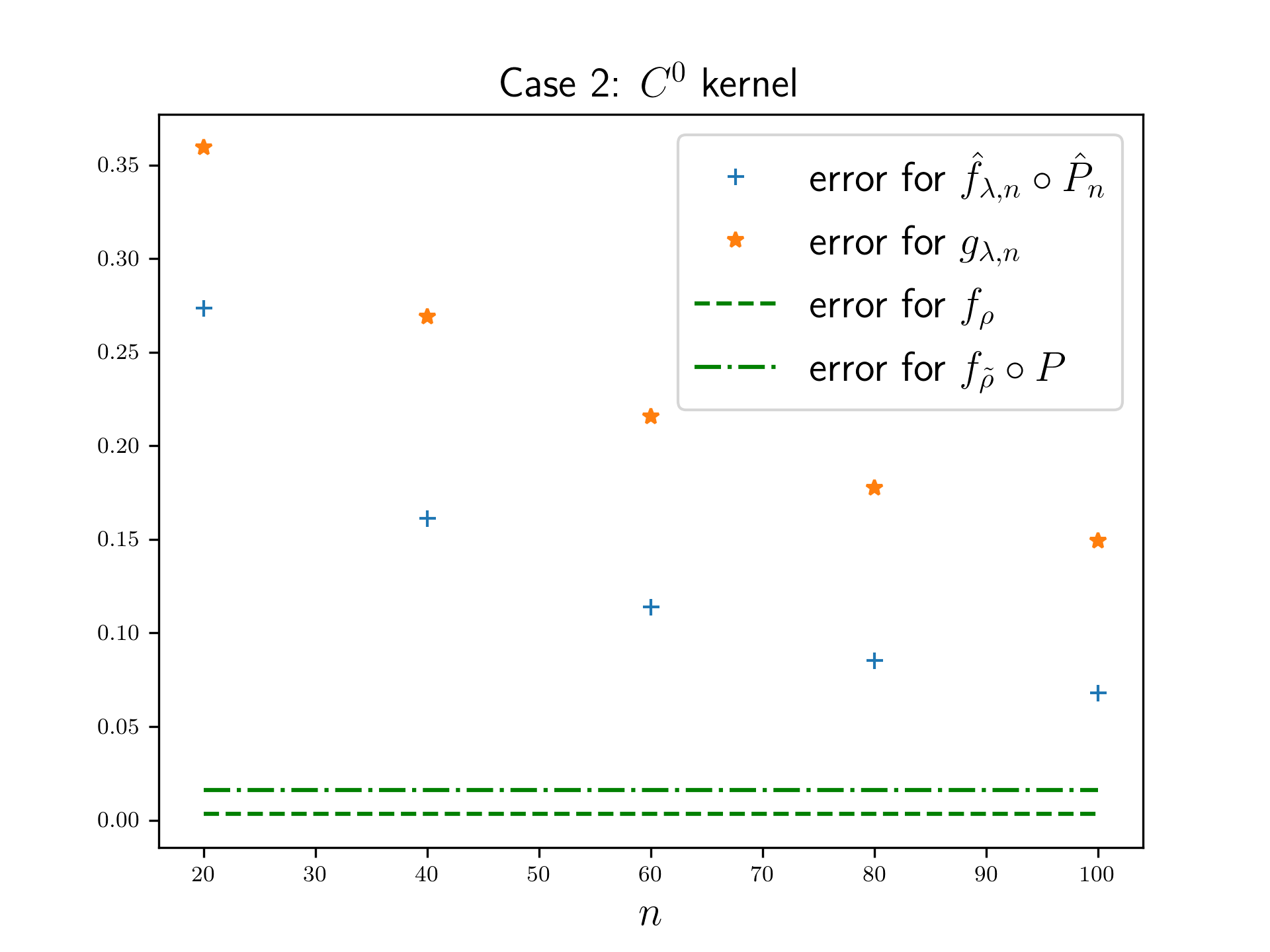}	
	\includegraphics[width=0.5\textwidth]{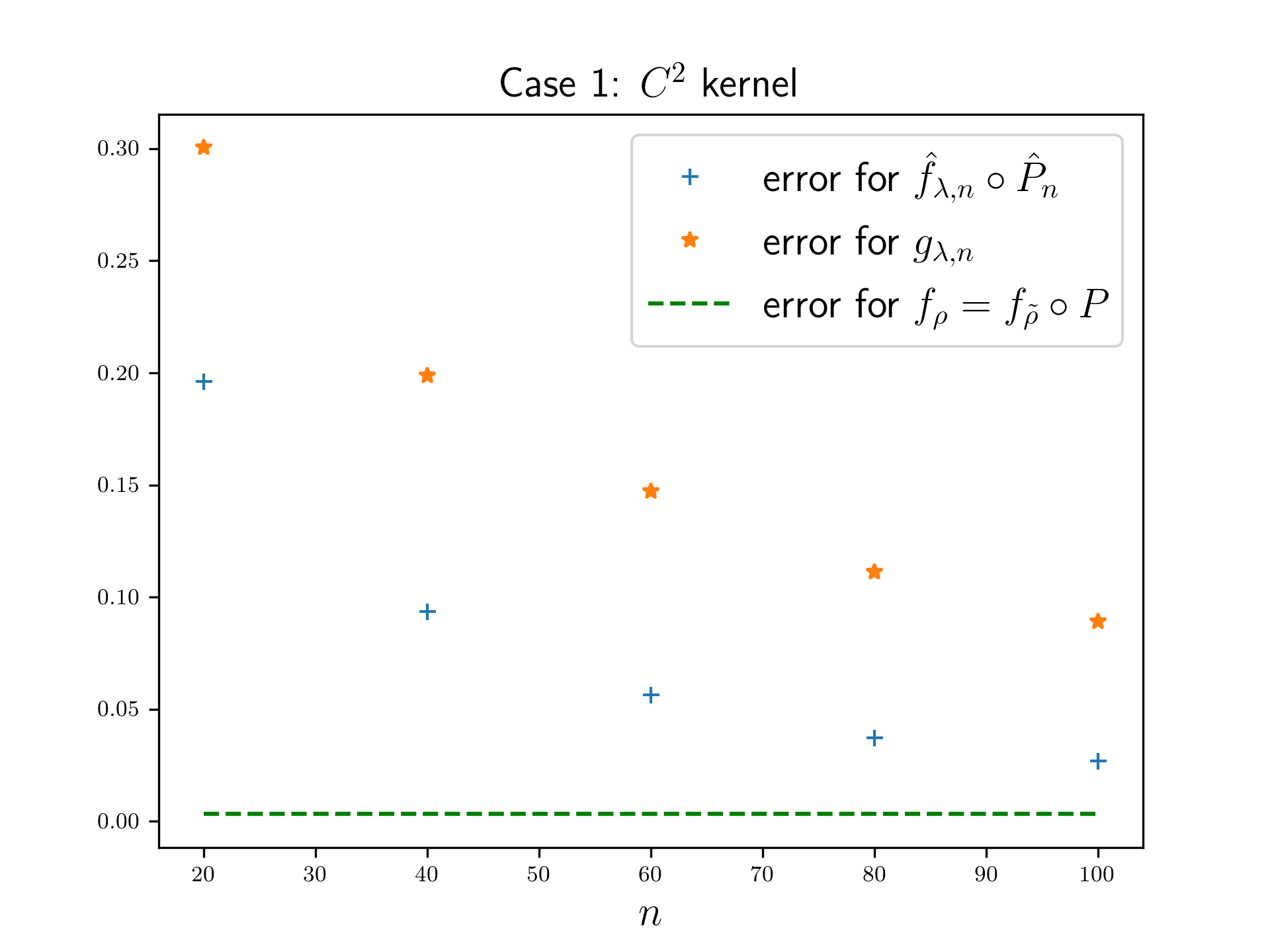}
	\includegraphics[width=0.5\textwidth]{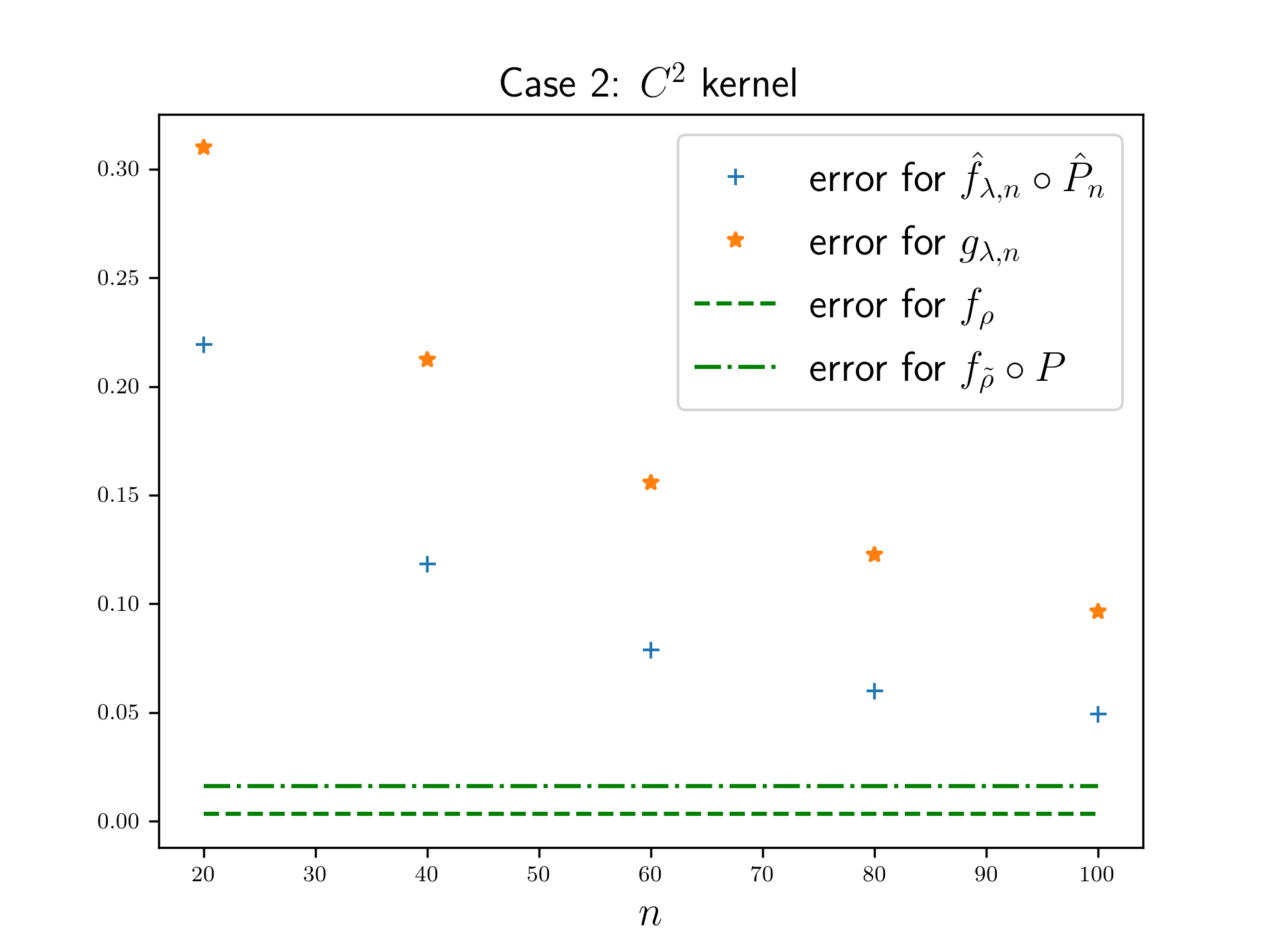}
	\includegraphics[width=0.5\textwidth]{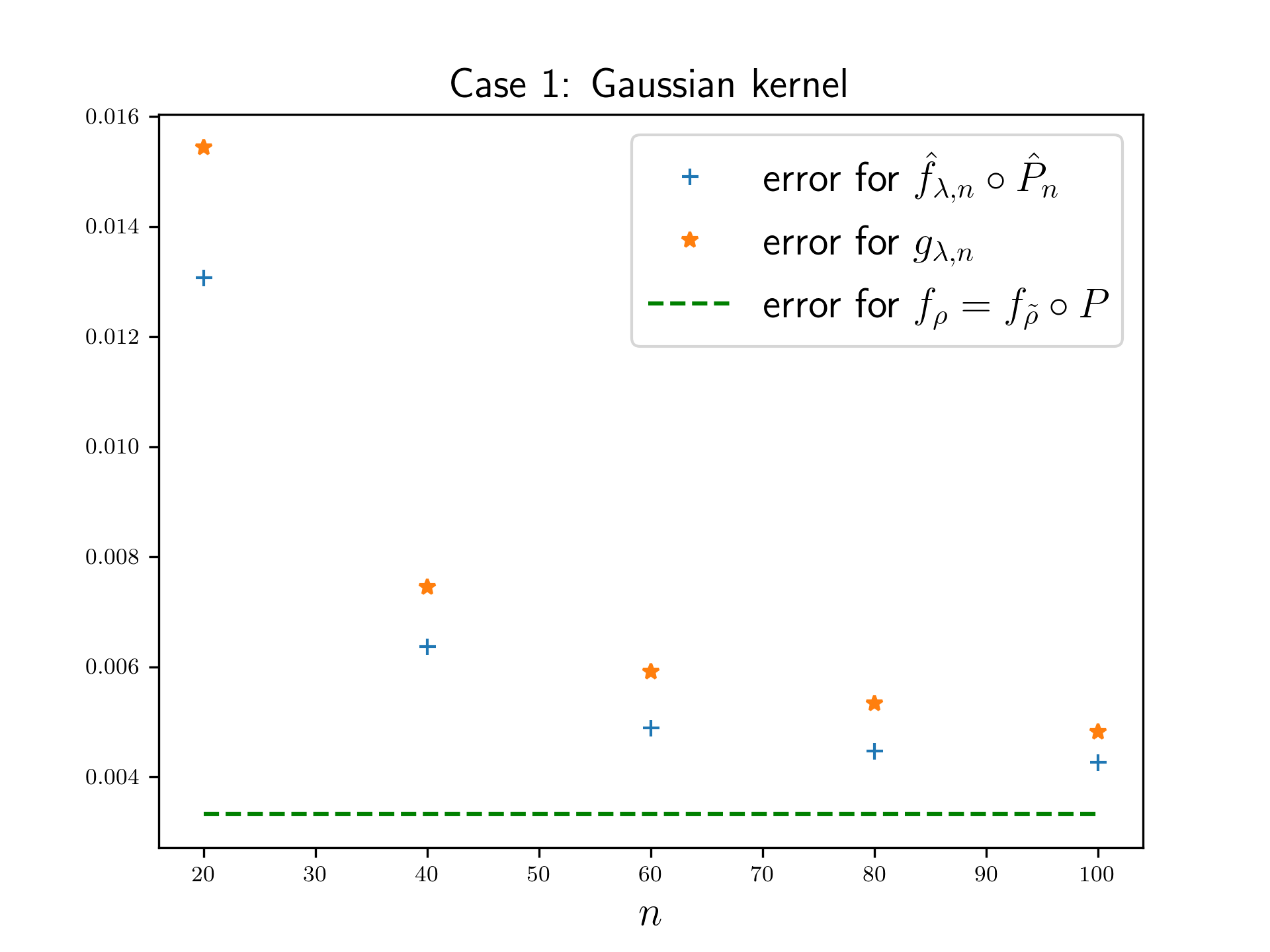}
	\includegraphics[width=0.5\textwidth]{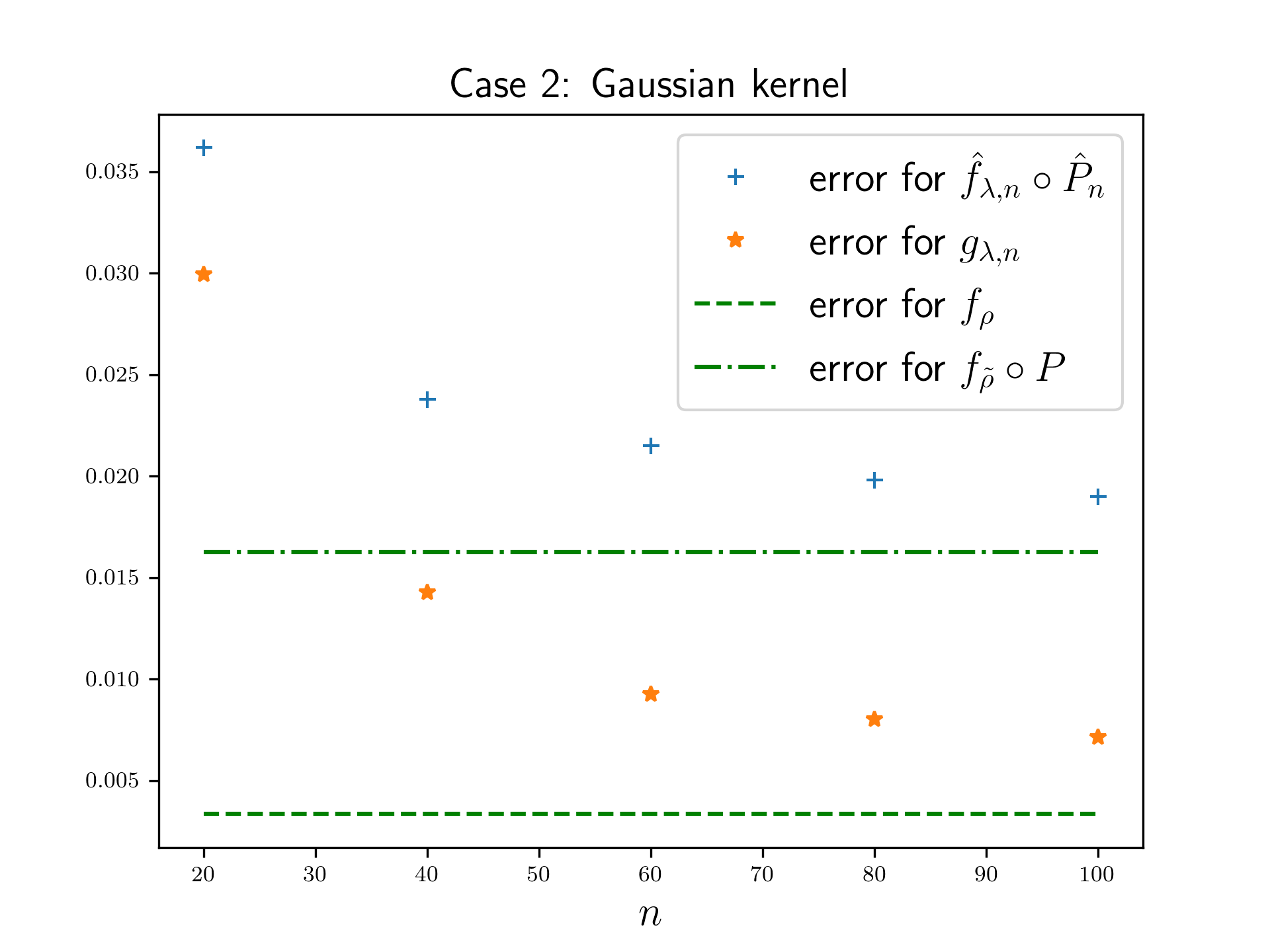}
	\caption{Overall estimation error $\int (\hat f(x) - y)^2 \rho(dx, dy)$ for different estimators $\hat f$ and different generation of $Y$ data. The different estimators are the asymptotic estimates $f_\rho$ and $f_{\tilde{\rho}} \circ P$, the direct kernel estimator $g_{\lambda, n}$, and the estimator arising from the PCA and subsequent kernel regression procedure $\hat{f}_{\lambda, n} \circ \hat{P}_n$. The respective kernels are described in \eqref{eq:kernelchoices} and $\lambda$ is set via cross-validation. Each reported value is an average over 50 independent runs of generating the respective sample.}
	\label{fig:case1and2}
\end{figure}

For the first case, we define $Y := f^{(1)}(P(X)) + U$, and for the second case $Y := f^{(1)}(X) + U$, where $U$ is uniformly distributed on $[-0.1, 0.1]$ and independent of all other variables. We see that in the first case, the dependence structure between $X$ and $Y$ is completely mediated by the first two principal components of $X$, while in the second case all principal components of $X$ are relevant. We thus expect the dimensionality reduction method to be particularly useful in the first case. 

We use cross-validation for the choice of $\lambda$ as described in \cite{steinwart2009optimal}, which is therein shown to be optimal asymptotically and thus leads to a regime of $\lambda$ where the error bound from Corollary \ref{cor:errorsummary} applies.
%
The results are presented for different kernel functions varying in their regularity. Each kernel is of the form $K(x, y) = \phi(|x-y|)$, where we use
\begin{equation}
\label{eq:kernelchoices}
\begin{alignedat}{2}
\phi_\infty(r) &= \exp(-r^2)~~~ &&\text{($C^\infty$, \textsl{Gaussian kernel})},\\
\phi_2(r) &= \max\{0, 1-r\}^8 \, (8 r +1)~~~ &&\text{\citep[$C^2$, cf.][Table 4.1]{zhu2012compactly}},\\
\phi_0(r) &= \max\{0, 1-r\}^6 \, (8 r +1)~~~ &&\text{\citep[$C^0$, cf.][Table 4.1]{zhu2012compactly}}.\\
\end{alignedat}
\end{equation}

We observe the estimation errors for the first case and different choices of kernel functions in Figure \ref{fig:case1and2}. As expected by the data generating process for the first case, the procedure including the dimensionality reduction step performs very well. However, the dimensionality reduction also mostly reduces the total error in the second case (for the $C^0$ and $C^2$ kernels), while only in the Gaussian case (where the absolute error is already very small), the asymptotic error dominates and the procedure involving dimensionality reduction produces larger absolute error, see also the discussion after Corollary~\ref{cor:errorsummary}, which gives an explanation for this behavior.\footnote{We note another possible reason for the difference in absolute errors between the choices of kernel can be the bandwidth. This means, the absolute errors may be more similar if we tried to optimize the bandwidth suitably (i.e., work with kernels $r \mapsto \phi(\gamma r)$ for suitable choices of $\gamma > 0$).} 

\begin{figure}
	\includegraphics[width=0.5\textwidth]{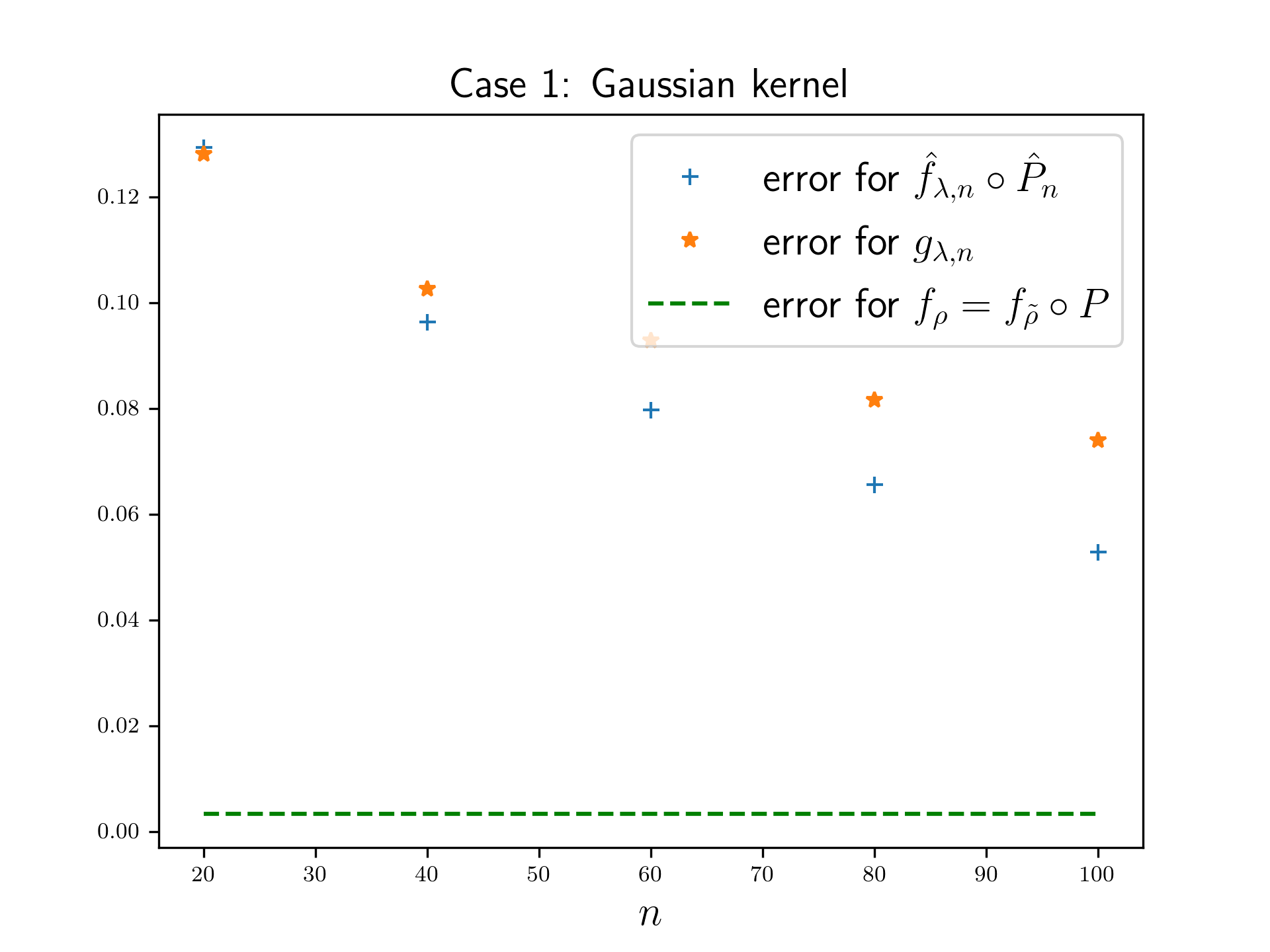}	
	\includegraphics[width=0.5\textwidth]{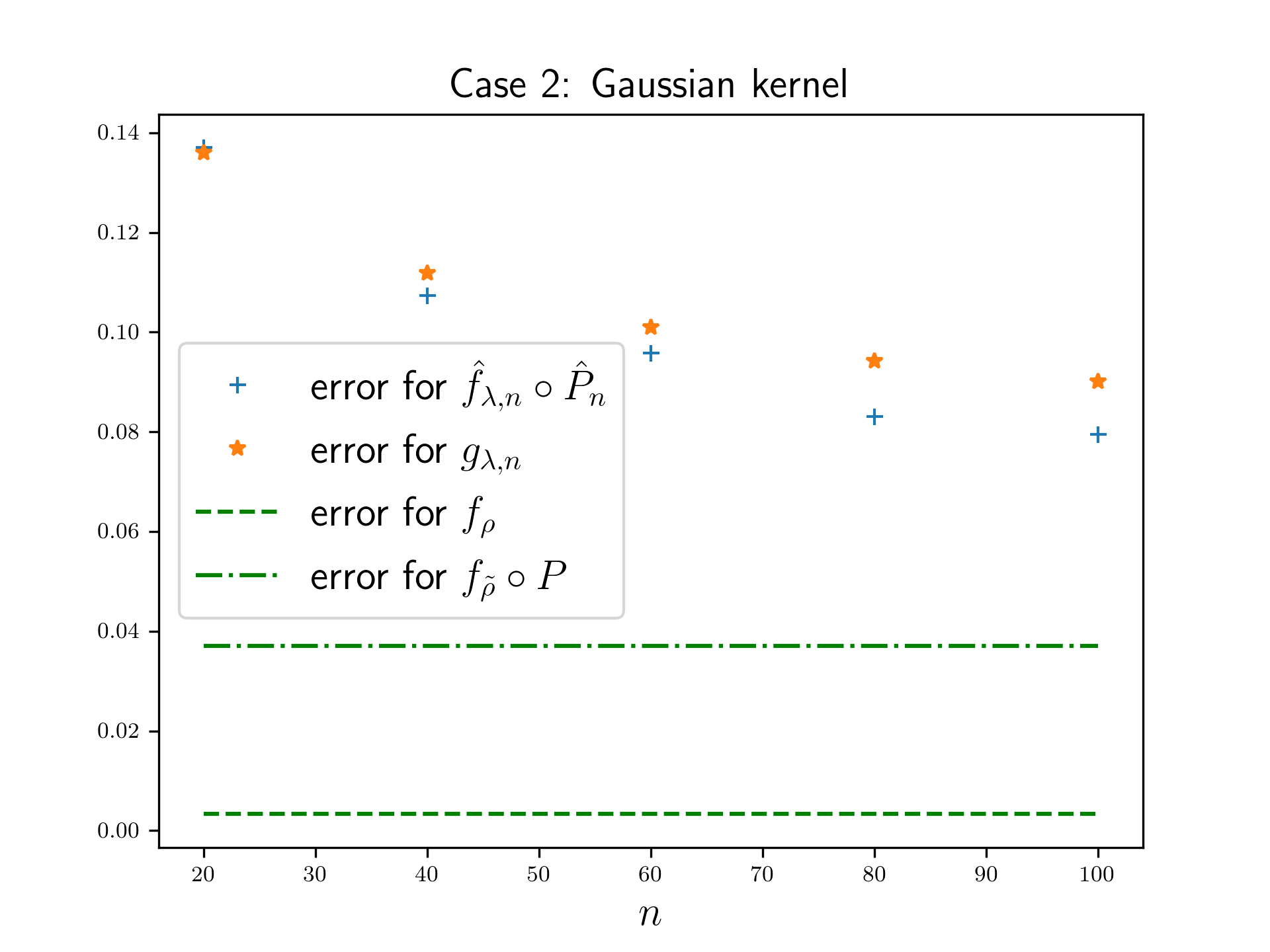}
	\includegraphics[width=0.5\textwidth]{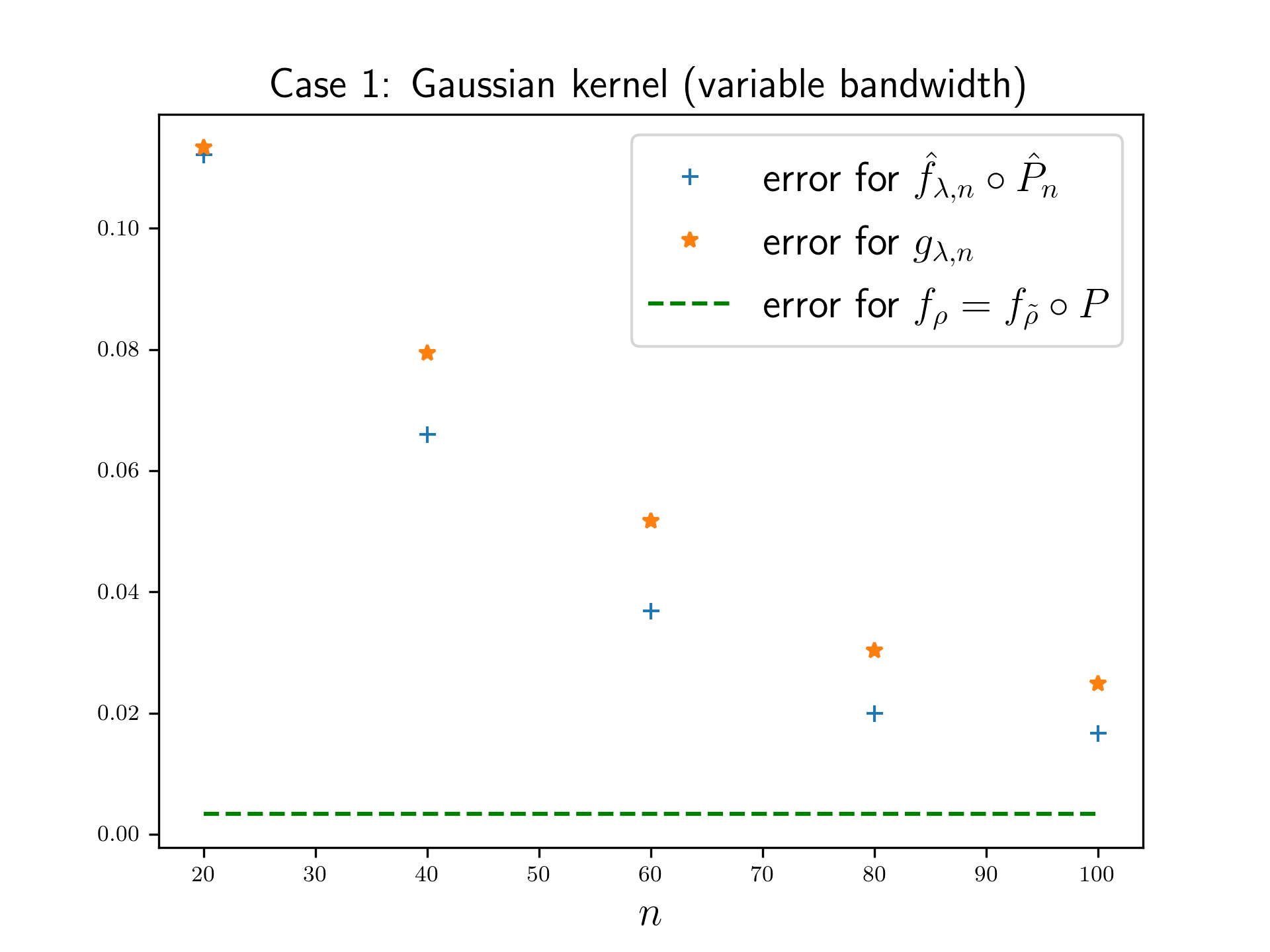}	
	\includegraphics[width=0.5\textwidth]{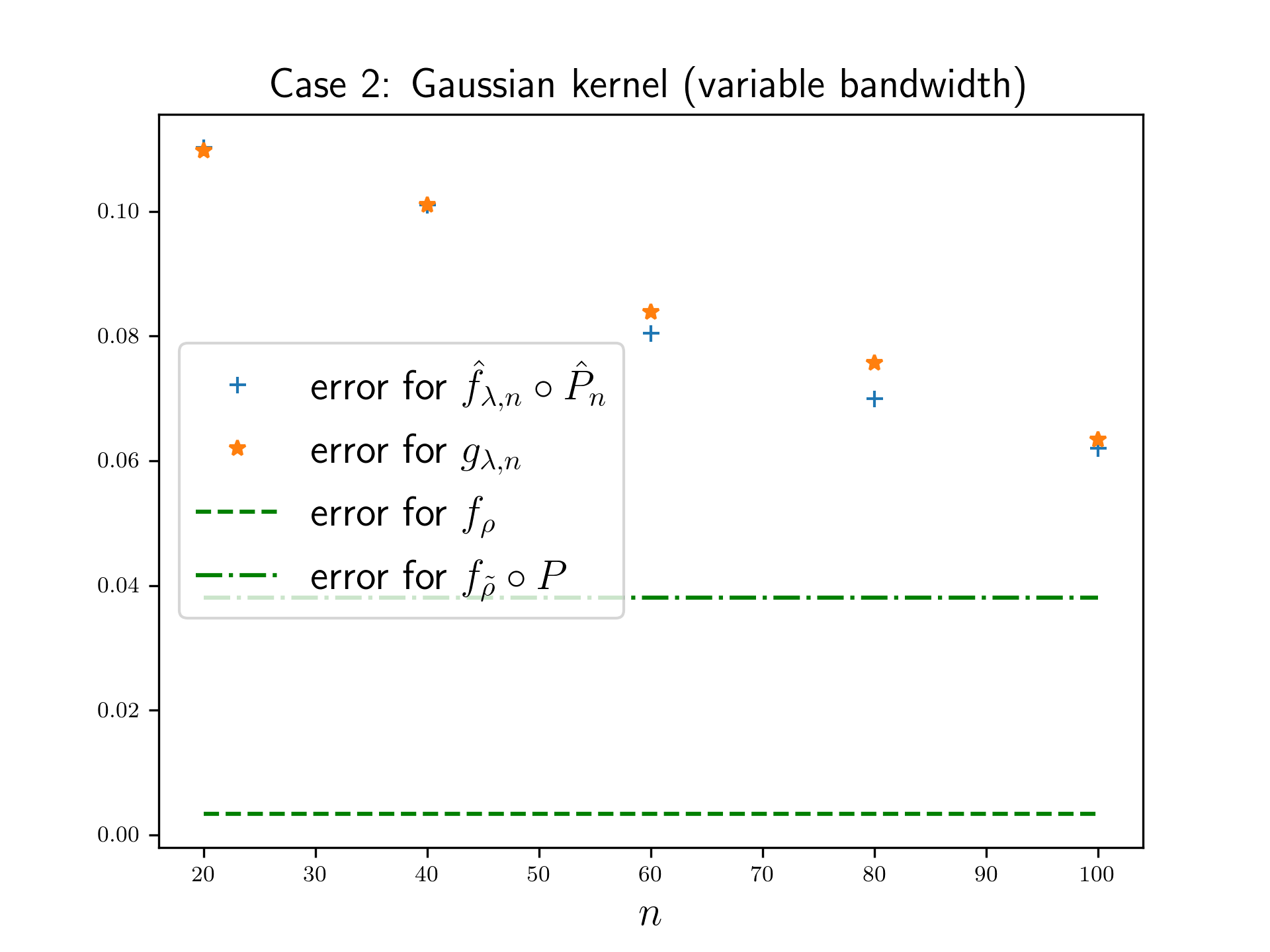}
	\includegraphics[width=0.5\textwidth]{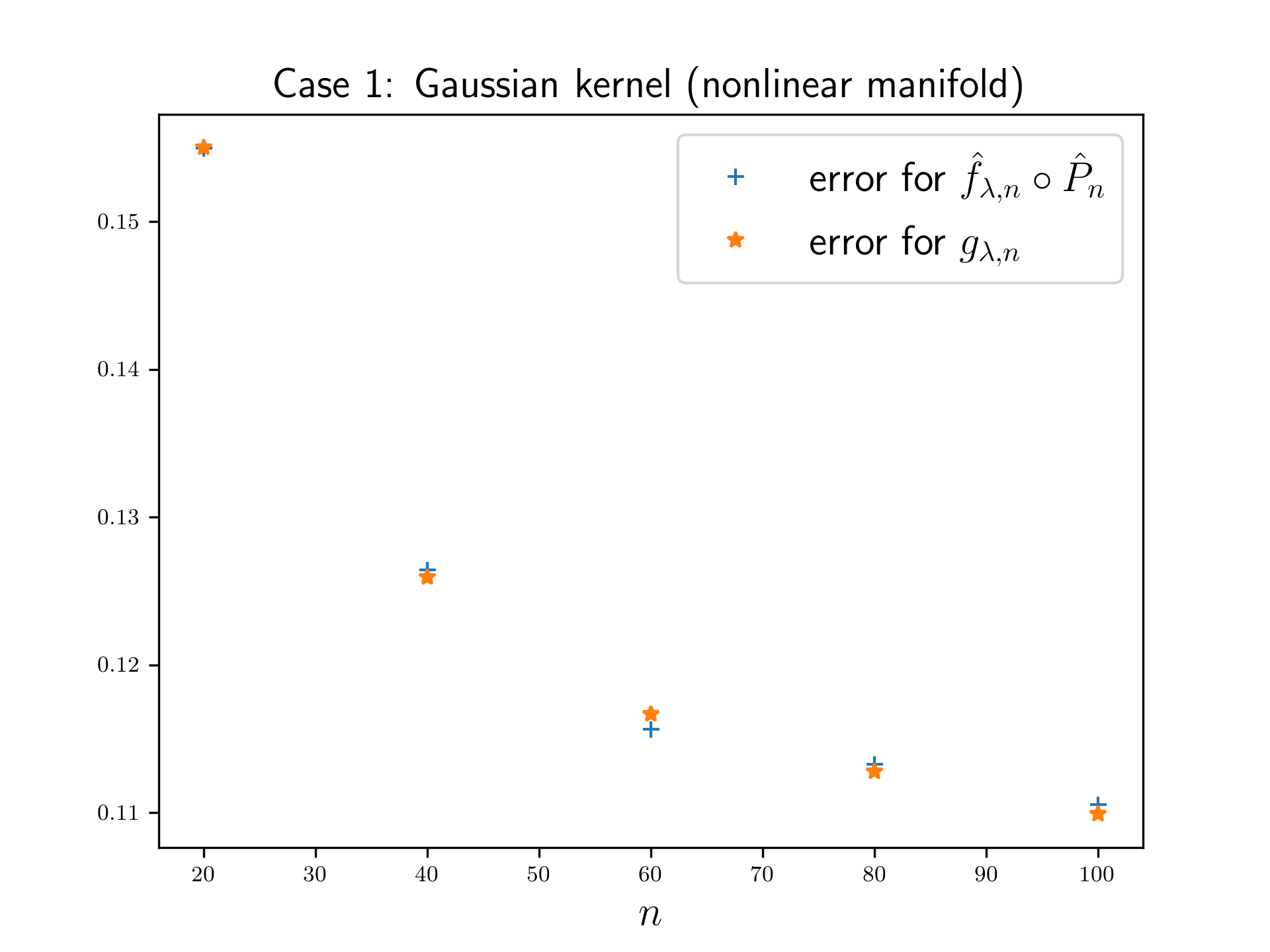}	
	\includegraphics[width=0.5\textwidth]{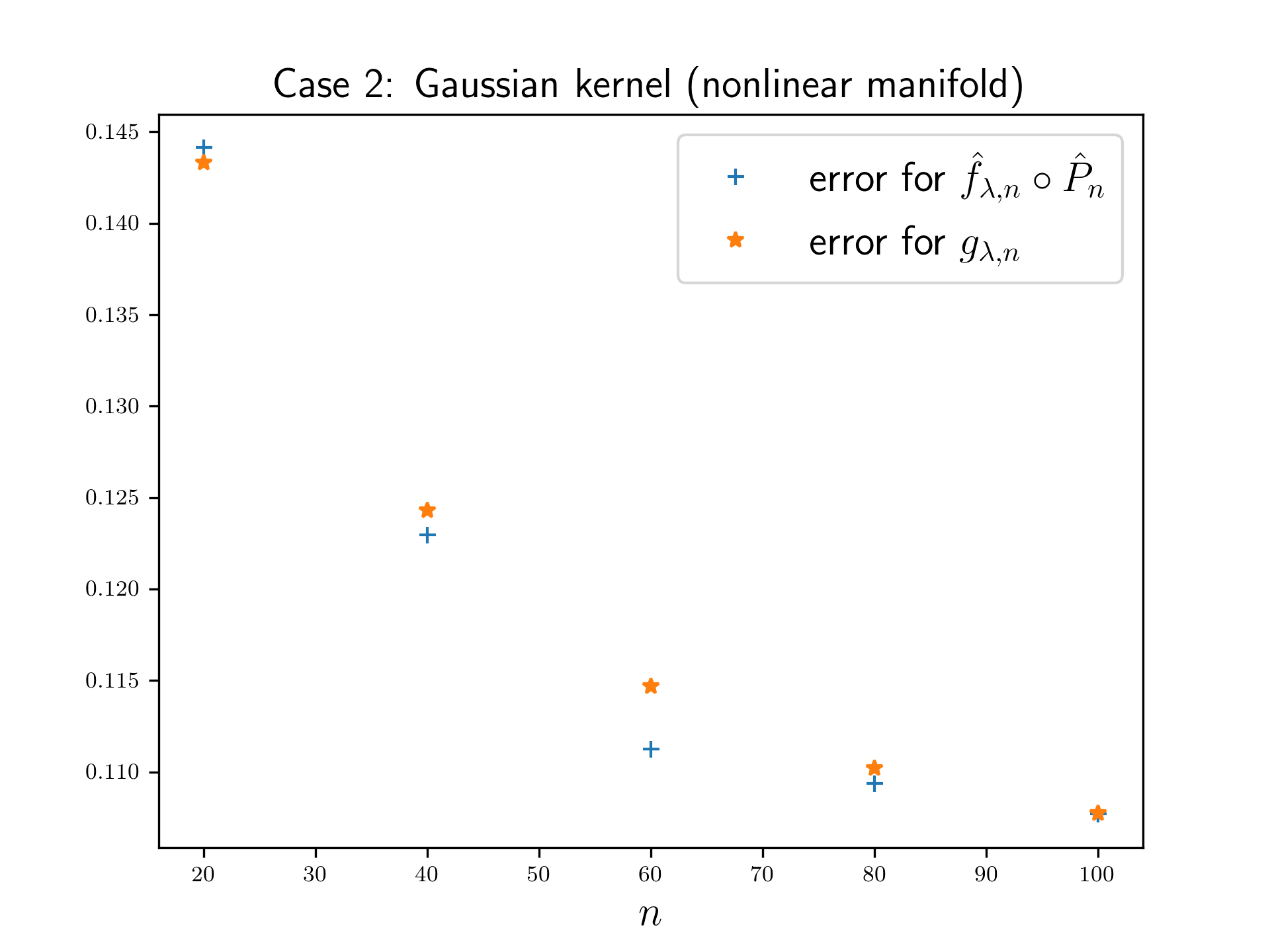}
	\caption{Overall estimation error $\int (\hat f(x) - y)^2 \rho(dx, dy)$ for different estimators $\hat f$ where the data is determined via the non-differentiable function $f^{(2)}$. Each reported value is an average over 50 independent runs of generating the respective sample.  The different estimators are the asymptotic estimates $f_\rho$ and $f_{\tilde{\rho}} \circ P$, the direct kernel estimator $g_{\lambda, n}$, and the estimator arising from the PCA and subsequent kernel regression procedure $\hat{f}_{\lambda, n} \circ \hat{P}_n$. \emph{First line:} Gaussian kernel $\phi(\lambda r) = \exp(-h r^2)$ with fixed bandwidth $h=1$ and cross-validated $\lambda$. \emph{Second line:} The bandwidth $h\in [10^{-3}, 10]$ is chosen with cross-validation, too. \emph{Third line:} Features are generated via $X = A\tilde{X} + 0.25 \sin(\sum_{i=1}^{10} \tilde{X}_i)$ (instead of $X = A\tilde{X}$). }
	\label{fig:gaussian_irregular_fun}
\end{figure}

Since the function $f^{(1)}(x) := \sin(\sum_{i=1}^D x_i)$ defining the dependence structure of $(X, Y)$ is $C^\infty$, it is natural that the Gaussian kernel performs well. It is interesting to observe what happens when a less regular function is used. To this end, we define $f^{(2)}(x) := |\sin(2\sum_{i=1}^D x_i)|$ and $(X, Y)$ for the two different cases analogously to the above. The top two graphs of Figure \ref{fig:gaussian_irregular_fun} report the results. Obviously, all occurring errors are larger and the observed rate of convergence is slower compared to the data generated via $f^{(1)}$, compatible with Lemma \ref{rem:optimalrate} and Corollary \ref{cor:errorsummary}. Since the Gaussian kernel does not quickly convergence to the best estimator, the effect of the dimensionality reduction is visible even for  Case 2. A plausible reason is that the irregular relation of $(X, Y)$ is smoothed via the dimensionality reduction, and thus the parameter $\beta$ of Assumption \ref{ass:alphabeta} is larger (leading to faster convergence) for the two-step procedure (c.f.~the discussion after Corollary \ref{cor:errorsummary}). In Figure \ref{fig:gaussian_irregular_fun}, we further showcase the behavior that occurs when we use cross-validation for the bandwidth parameter for the Gaussian kernel (the two graphs in the middle) and when generating the $X$-data using a nonlinear transformation (the bottom two graphs in Figure \ref{fig:gaussian_irregular_fun}). While cross-validation of the bandwidth is an important practical tool, in the literature on learning rates of kernel regression, one usually fixes the kernel, which is also the case in this paper. Nevertheless, we see in Figure \ref{fig:gaussian_irregular_fun} that the observed patterns still appear similar to the previous cases even if cross-validation for the bandwidth is used. Finally, as expected, when the $X$-data was generated via a nonlinear transformation, the method using PCA performs generally worse compared to previous cases. Overall, we believe the study of nonlinear manifolds using also nonlinear dimensionality reduction methods is an interesting avenue for further research.

\begin{figure}
	\includegraphics[width=0.5\textwidth]{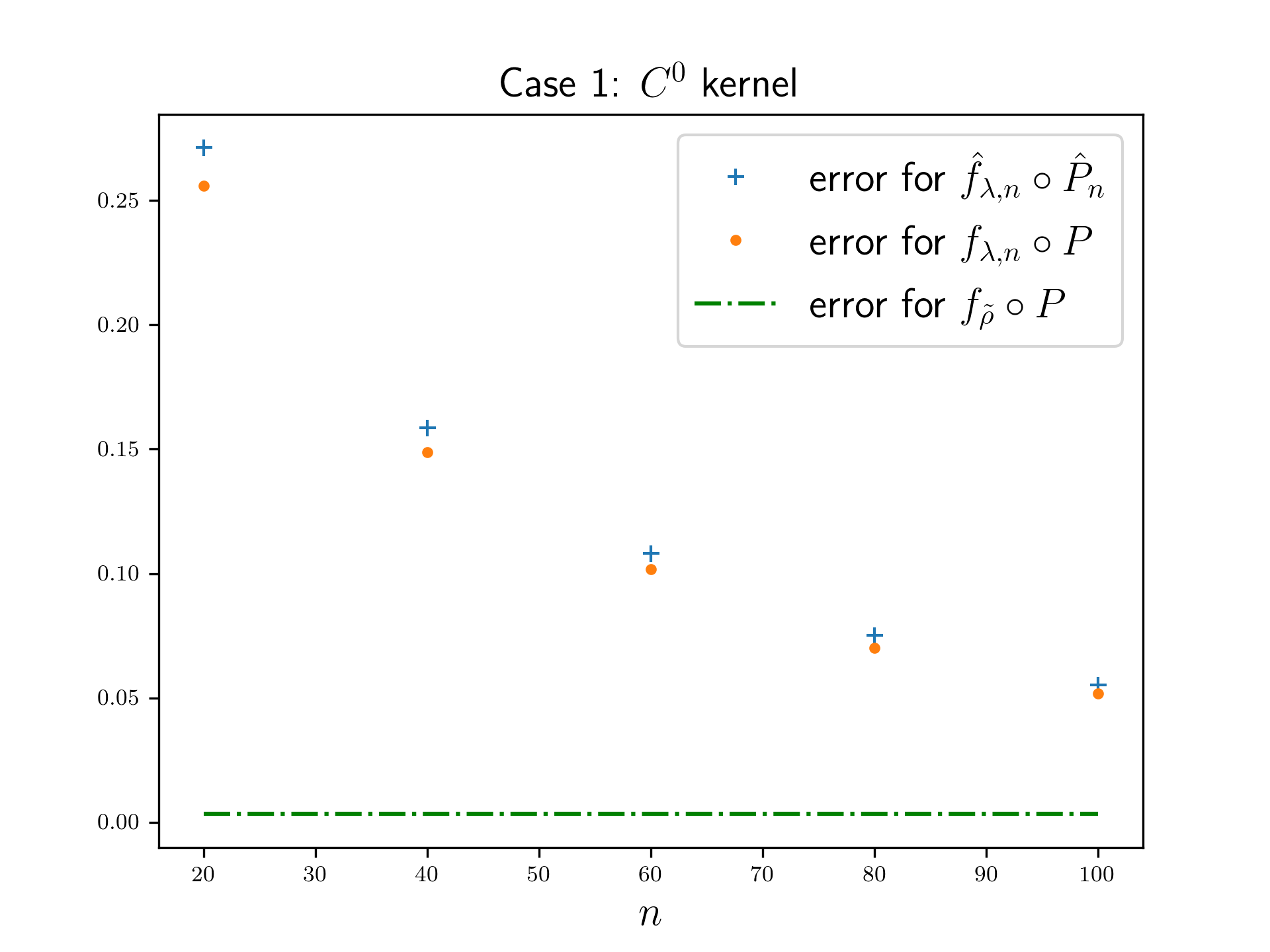}	
	\includegraphics[width=0.5\textwidth]{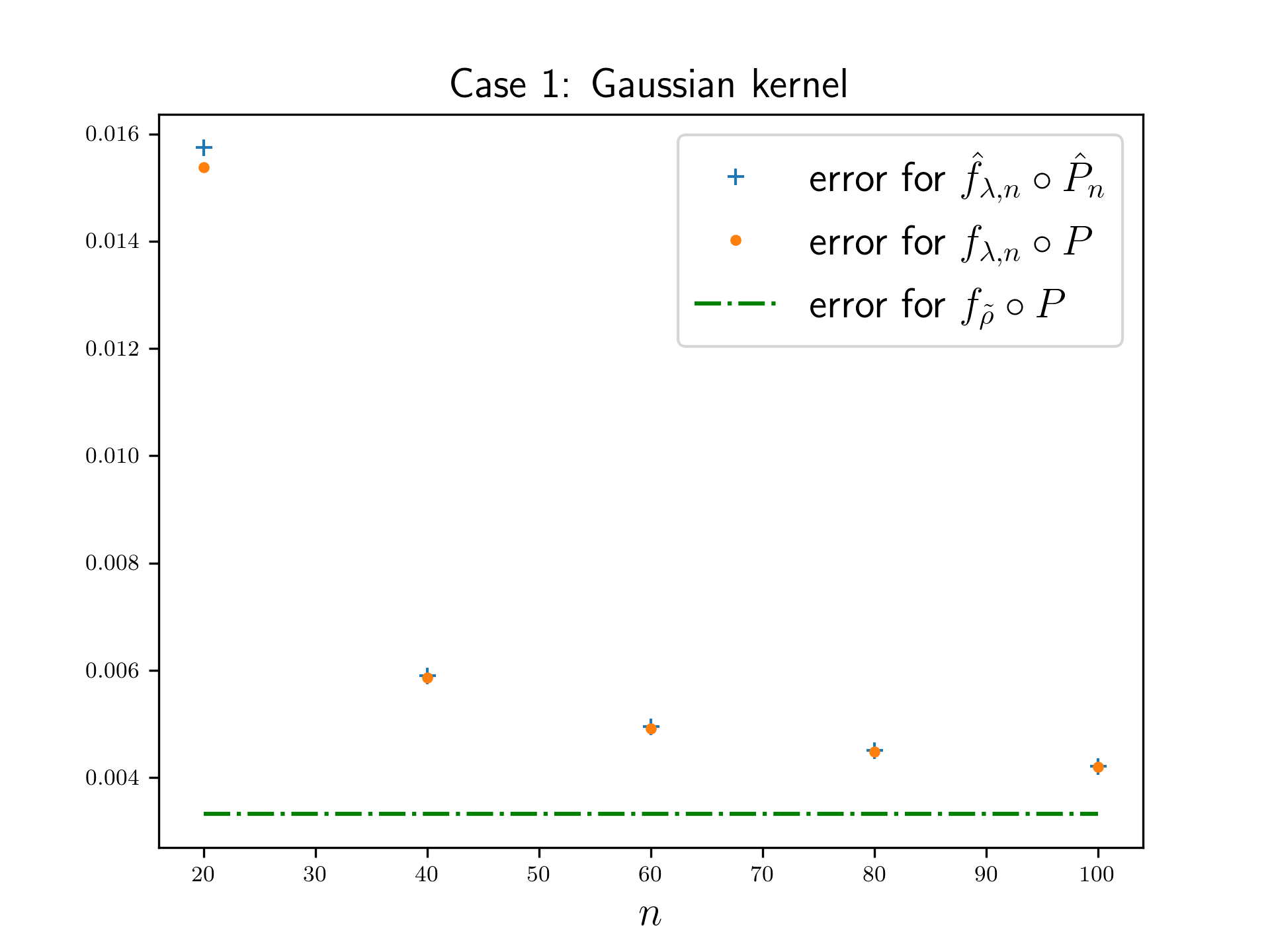}
	\begin{center}\includegraphics[width=0.5\textwidth]{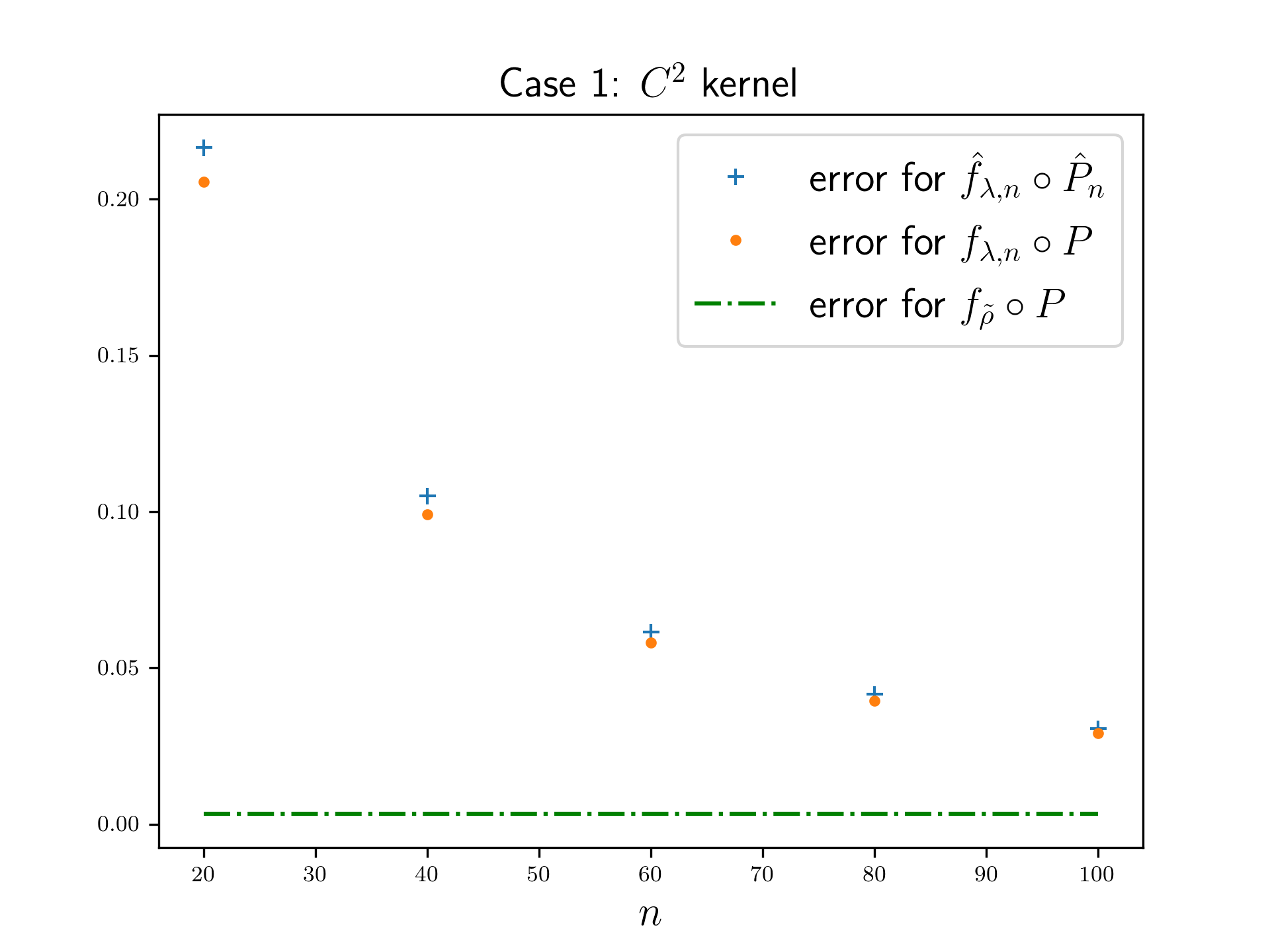}
	\end{center}
	\caption{Overall estimation error $\int (f(x) - y)^2 \rho(dx, dy)$ for different estimators $f$. The different estimators are the asymptotic estimator $f_{\tilde{\rho}} \circ P$, the estimator arising from the PCA and subsequent kernel regression procedure $\hat{f}_{\lambda, n} \circ \hat{P}_n$, as well as the estimator $f_{\lambda, n}^* \circ P$ which uses samples but assumes knowledge of the true PCA map $P$. The respective kernels are described in \eqref{eq:kernelchoices} and $\lambda$ is set via cross-validation. Each reported value is an average over 50 independent runs of generating the respective sample.}
	\label{fig:pcaoracle}
\end{figure}

To emphasize the effect of estimating the dimensionality reduction map instead of starting in the low-dimensional space directly, Figure \ref{fig:pcaoracle} shows the difference between the estimator $\hat{f}_{\lambda, n}$ which uses the estimated PCA map $\hat{P}_n$ and the estimator $f_{\lambda, n}^*$ using the true PCA map $P$ (for the first case described above, where $Y = f(P(X)) + U$). While we see that the estimation of the PCA map does influence the total error in a non-negligible way, it is not clear whether this influence is really relevant asymptotically. This suggests that the estimates of Theorems \ref{thm:stab} and Proposition \ref{prop:pcakernelstability} may be improved upon for situations as the one treated in this example, perhaps under the assumption of higher order differentiability of the kernel function, instead of just the Lipschitz condition required in Assumption \ref{ass:kernel}.

	\section{Proofs}
\label{sec:proofs}	
\subsection{Proofs for Section \ref{sec:kernel}}
\begin{proof}\textbf{of Lemma \ref{lem:ortho}}
	Before showing the individual claims, we first make the following observation: Take $\lambda > 0$ and $\varphi\colon \mathcal{X} \rightarrow \mathbb{R}$. Then $(\lambda, \varphi)$ is an eigenvalue/-function pair $(\lambda, \varphi)$ of $L_{K, \rho_X}$ if and only if $(\lambda, \varphi \circ A_{\rm inv})$ is one of $L_{K, \tilde{\rho}_{\tilde{X}}}$.
	This follows because $A_{\rm inv}$ has orthonormal columns and thus $|\tilde{x}| = |A_{\rm inv}(\tilde{x})|$ for all $\tilde{x} \in \tilde{\mathcal{X}}$. This implies
	\begin{align*}
	L_{K, \rho_X}\varphi(x) &= \int \phi(|x-u|) \varphi(u) \rho_X(du) \\
	&= \int \phi(|A_{\rm inv}(A(x) - \tilde{u})|) \varphi \circ A_{\rm inv}(\tilde{u}) \tilde{\rho}_{\tilde{X}}(d \tilde{u}) \\
	&= \int \phi(|A(x) - \tilde{u}|) \varphi \circ A_{\rm inv}(\tilde{u}) \tilde{\rho}_{\tilde{X}}(d \tilde{u}) = L_{K, \tilde{\rho}_{\tilde{X}}} \varphi \circ A_{\rm inv}(A(x))
	\end{align*}
	and hence if $L_{K, \rho_X} \varphi(x) = \lambda \varphi(x)$ then 
	\[
	L_{K, \tilde{\rho}_{\tilde{X}}} (\varphi \circ A_{\rm inv})(A(x)) = L_{K, \rho_X} \varphi(x) = \lambda \varphi(x) = \lambda (\varphi \circ A_{\rm inv})(A(x))
	\]
	and vice versa.
	
	(i): We first show that $f_{\rho} = f_{\tilde{\rho}} \circ A$. To this end, we show that $\rho(\cdot | x) = \tilde\rho(\cdot | A(x))$. First note that $\tilde{\rho}_X = \rho_X\circ A^{-1}$ ($\tilde{\rho}_X$ is the pushforward measure of $\rho_X$ under the map $A$) and $\rho_X= \tilde{\rho}_X \circ A_{\rm inv}^{-1}$. For a continuous and bounded function $g: \mathcal{Z} \rightarrow \mathbb{R}$, it holds
	\begin{align*}
	\int g \,d\rho &= \int g(A_{\rm inv}(A(x)), y) \,\rho(dx, dy) \\
	&= \int\int g(A_{\rm inv}(\tilde{x}), y) \,\tilde\rho(dy | \tilde{x}) \tilde\rho_{\tilde{X}}(d\tilde{x}) \\
	&= \int \int g(A_{\rm inv}(\tilde{x}), y) \,\tilde\rho(dy | A(A_{\rm inv}(\tilde{x}))) \tilde\rho_{\tilde{X}}(d\tilde{x})\\
	&= \int \int g(x, y) \,\tilde\rho(dy | A(x)) \rho_X(dx),
	\end{align*}
	which shows $\rho(\cdot | x) = \tilde{\rho}(\cdot | A(x))$.
	
	(ii): We show $f_{\lambda, \rho} = f_{\lambda, \tilde{\rho}} \circ A$. By the relation of the eigenvalue/-function pairs of $L_{K, \rho_X}$ and $L_{K, \tilde{\rho}_{\tilde{X}}}$ as shown at the beginning of the proof, it follows from Mercer's theorem \citep[see, e.g.,][]{steinwart2012mercer} that $\|f\|_{\mathcal{H}} = \|f \circ A\|_{\mathcal{H}}$ holds for all $f \in \mathcal{H}$ mapping from $\tilde{X}$ to $\mathbb{R}$. Further, 
	\[
	\int (f(\tilde{x})-y)^2 \,\tilde\rho(d\tilde{x}, dy) = \int (f(A(x))-y)^2 \,\rho(dx, dy)
	\]
	and thus $f_{\lambda, \tilde{\rho}} \circ A$ is the unique optimizer of 
	\[
	\argmin_{f \in \mathcal{H}} \int (f(x) - y) \,\rho(dx, dy) + \lambda \|f\|_{\mathcal{H}}^2
	\]
	which yields $f_{\lambda, \tilde{\rho}} \circ A = f_{\lambda, \rho}$.
	
	(iii): The argument given at the beginning of the proof immediately implies the same eigenvalue decay of the operators $L_{K, \rho_X}$ and $L_{K, \tilde{\rho}_{\tilde{X}}}$ and thus the parameter $\alpha$ can be chosen equally. Using (i) and (ii), we have
	\[
	\int (f_{\lambda, \rho} - f_\rho)^2 \,d\rho = \int (f_{\lambda, \tilde{\rho}}\circ A - f_{\rho} \circ A)^2 \,d\rho = \int (f_{\lambda, \tilde\rho} - f_{\tilde{\rho}})^2 \,d\tilde{\rho}.
	\]
	Therefore, the parameter $\beta$ can be chosen equally, too. 
\end{proof}

\begin{proof}\textbf{of Lemma \ref{lem:noise}}
	We show that $c \leq a$, while $c \leq b$ will follow by symmetry. For $z \in \mathbb{R}^D$, define $\rho_X^{z}$ as the distribution $\rho_X$ shifted by $z$, i.e., $\rho_X^z(A) = \rho_X(A+z)$ for all Borel sets $A$. An elementary calculation yields
	\[
	L_{K, \rho_X * \kappa}(f) = \int L_{K, \rho_X^z} (f) \,\kappa(dz).
	\]
	Note that for all $z\in\R^D$ the operators $L_{K, \rho_X^z}$ have the same eigenvalues as $L_{K, \rho_X}$, namely $\sigma_1, \sigma_2, \dots$. 
	By linearity of the trace this shows 
	\begin{equation}\label{eq:infsum}
	\sum_{i=1}^\infty \hat{\sigma}_i = \sum_{i=1}^\infty \sigma_i.
	\end{equation}
	Consider now an empirical approximation
	$\kappa^m = \frac{1}{m}\sum_{i=1}^m \delta_{z_i}$ for certain $z_i \in \mathbb{R}^D$ and let $L_m := \int L_{K, \rho_X^z} \kappa^m(dz)$. Denote the eigenvalues of $L_m$ in decreasing order by $u^m_1, u^m_2, \dots$. 
	Using \cite[Theorem 2]{wielandt1955extremum}\footnote{Note while the Theorem is only stated for finite dimensional operators, it is mentioned in the introduction that the techniques extend to self-adjoint compact operators on a Hilbert space, which fits our setting. See also the related discussion in \cite{fulton2000eigenvalues}.}, we obtain
	\begin{equation}\label{eq:finitesumsineq}
	\sum_{i=1}^r u_i^m \leq \sum_{i=1}^r \sigma_i.
	\end{equation}
	Consider $\Delta_m = L_{K, \rho_X * \kappa} - L_m$. Since $\kappa$ has finite first moment, we can choose suitable $z_1, \dots, z_m$ such that $W_1(\kappa, \kappa^m) \rightarrow 0$ for $m\rightarrow \infty$. We consider $\Delta_m$ as an operator from $\mathcal{H}$ to $\mathcal{H}$, since this does not change its eigenvalue behavior, c.f.~\cite[Remark 2]{caponnetto2007optimal}. Since $K$ is bounded by 1 and Lipschitz continuous with constant $L$, we obtain for $f \in \mathcal{H}, \|f\|_\mathcal{H}=1$ that
	\[
	\|L_{K, \rho_X^z}(f) - L_{K, \rho_X^{\tilde{z}}}(f)\|_{\cal H} \leq 2 L |z-\tilde{z}|
	\] for all $z, \tilde{z} \in \mathbb{R}^D$. Hence, for an optimal coupling $\pi \in \Pi(\kappa, \kappa^m)$ attaining $W_1(\kappa, \kappa^m)$, we get
	\begin{align*}
	\|\Delta_m (f)\|_{\cal H} &\leq \int \| L_{K, \rho_X^z}(f) - L_{K, \rho_X^{\tilde{z}}}(f)\|_{\cal H} \,\pi(dz, d\tilde{z}) \leq 2L W_1(\kappa, \kappa^m),
	\end{align*}
	which implies $\|\Delta_m\|_{\rm op} \rightarrow 0$ for $m\rightarrow \infty$. 
	Thus we can apply stability results for the eigenvalues of $L_{K, \rho_X * \kappa}$ (e.g., \citet[Theorem 1]{chiappinelli2000nonlinear}), showing that $u_i^m \rightarrow \hat{\sigma}_i$ for $i=1,\dots,r$ and $m\rightarrow \infty$.
	Continuing from \eqref{eq:finitesumsineq}, we conclude
	\[
	\sum_{i=1}^r \hat{\sigma}_i \leq \sum_{i=1}^r \sigma_i.
	\]
	Together with \eqref{eq:infsum}, this yields
	\[
	\sum_{i=r+1}^\infty \hat{\sigma}_i \geq \sum_{i=r+1}^\infty \sigma_i.
	\]
	Since $\sigma_n \sim n^{-a}$, assuming $\hat \sigma_n \lesssim n^{-c}$, this inequality yields $c \leq a$, completing the proof.
%
\end{proof}

\begin{proof}\textbf{of Theorem \ref{thm:stab}}
	First, since the embedding $E\colon \mathcal{H} \rightarrow L^2(\rho_3)$ satisfies $E^* E = T_{\rho_3}$, using the polar decomposition for $E$ one obtains
	\[
	\|f_{\lambda, \rho_1} - f_{\lambda, \rho_2}\|_{L^2(\rho_3)} = \|\sqrt{T_{\rho_3}} (f_{\lambda, \rho_1} - f_{\lambda, \rho_2})\|_\mathcal{H},
	\]
	which is shown for instance in \citet[Proof of Proposition 2]{de2005risk}. 
	\citet[Proof of Theorem 4, Step 2.1]{caponnetto2007optimal} have shown by a Neumann series argument, that the assumption from Equation~\eqref{rem:stab} implies 
	\[
	\|\sqrt{T_{\rho_3}}(T_{\rho_1} + \lambda I)^{-1}\|_{\rm op} \leq \frac{1}{\sqrt{\lambda}}.
	\]

	To bound $\|f_{\lambda, \rho_1} - f_{\lambda, \rho_2}\|_{L^2(\rho_3)}$, we use the decomposition
	\[
	f_{\lambda, \rho_1} - f_{\lambda, \rho_2} = (T_{\rho_1} + \lambda I)^{-1}(g_{\rho_1} - g_{\rho_2}) + ((T_{\rho_1} + \lambda I)^{-1} - (T_{\rho_2} + \lambda I)^{-1}) g_{\rho_2}
	\]
	and obtain
	\[
	\|f_{\lambda, \rho_1} - f_{\lambda, \rho_2}\|_{L^2(\rho_3)} \leq R_1 + R_2,
	\]
	where
	\begin{align*}
	R_1 &= \|\sqrt{T_{\rho_3}} (T_{\rho_1} + \lambda I)^{-1}(g_{\rho_1} - g_{\rho_2})\|_{\mathcal{H}}, \\
	R_2 &= \|\sqrt{T_{\rho_3}} ((T_{\rho_1} + \lambda I)^{-1} - (T_{\rho_2} + \lambda I)^{-1}) g_{\rho_2}\|_{\mathcal{H}}.
	\end{align*}
	
	To bound $R_1$, we find
	\[
	R_1 \leq \|\sqrt{T_{\rho_3}}(T_{\rho_1} + \lambda I)^{-1}\|_{\rm op} \|g_{\rho_1} - g_{\rho_2}\|_{\mathcal{H}} \leq \frac{1}{\sqrt{\lambda}} \|g_{\rho_1} - g_{\rho_2}\|_{\mathcal{H}}
	\]
	and for an optimal coupling $\pi \in \Pi(\rho_1, \rho_2)$ that attains $W_1(\rho_1, \rho_2)$ (such an optimal coupling exists by standard results, cf.~\cite[Theorem 4.1]{villani2009optimal}) and abbreviating $\pi(dx,dy)=\pi(d(x_1, y_1), d(x_2, y_2))$, we get
	\begin{align*}
	\|g_{\rho_1} - g_{\rho_2}\|_{\mathcal{H}} &= \Big\|\int K(x_1, \cdot) y_1 \,\rho_1(dx_1, dy_1) - \int K(x_2, \cdot) y_2 \,\rho_2(dx_2, dy_2)\Big\|_{\mathcal{H}} \\
	&= \Big\|\int K(x_1, \cdot) y_1 - K(x_2, \cdot) y_2 \,\pi(dx, dy)\Big\|_{\mathcal{H}} \\
	&\leq \int \|K(x_1, \cdot) y_1 - K(x_2, \cdot) y_2\|_\mathcal{H}  \,\pi(dx, dy)\\
	&\leq \int |y_1| \|K(x_1, \cdot) - K(x_2, \cdot)\|_{\mathcal{H}} + |y_1 - y_2| \|K(x_1, \cdot)\|_{\mathcal{H}} \,\pi(dx, dy)\\
	&\leq \int M L |x_1 - x_2| + |y_1 - y_2| \,\pi(dx, dy)\\
	&\leq \max\{M L, 1\}\,W_1(\rho_1, \rho_2),
	\end{align*}
	which completes the bound for $R_1$.
	
	For $R_2$, we calculate
	\begin{align*}
	R_2 &= \| \sqrt{T_{\rho_3}}\Big( (T_{\rho_1} + \lambda I)^{-1} (T_{\rho_1} - T_{\rho_2}) (T_{\rho_2} + \lambda I)^{-1} \Big) g_{\rho_2}\|_{\mathcal{H}} \\
	&\leq \|\sqrt{T_{\rho_3}} (T_{\rho_1} + \lambda I)^{-1}\|_{\rm op} \|(T_{\rho_1} - T_{\rho_2}) f_{\lambda, \rho_2}\|_{\mathcal{H}} \\
	&\leq \frac{1}{\sqrt{\lambda}} \|(T_{\rho_1} - T_{\rho_2}) f_{\lambda, \rho_2}\|_{\mathcal{H}}.
	\end{align*}
	We choose an optimal coupling $\pi \in \Pi(\rho_1, \rho_2)$ attaining $W_1(\rho_1, \rho_2)$ and denote by $\pi_X$ its marginal distribution on the space $\mathcal{X} \times \mathcal{X}$. We get
	\begin{align}
	\|(T_{\rho_1} - T_{\rho_2}) f_{\lambda, \rho_2}\|_{\mathcal{H}} 
	&= \Big\| \int K(x_1, \cdot) f_{\lambda, \rho_2}(x_1) - K(x_2, \cdot) f_{\lambda, \rho_2}(x_2) \,\pi_X(dx)\Big\|_{\mathcal{H}}\notag\\
	&\leq \int \|(K(x_1, \cdot)-K(x_2, \cdot)) f_{\lambda, \rho_2}(x_1)\|_{\mathcal{H}}\notag\\ 
	&\qquad+ \|(f_{\lambda, \rho_2}(x_1) - f_{\lambda, \rho_2}(x_2)) K(x_2, \cdot)\|_{\mathcal{H}} \,\pi_X(dx)\notag\\
	&\leq \int \|K(x_1,\cdot)-K(x_2,\cdot)\|_{\cal H}|f_{\lambda, \rho_2}(x_1)|\notag\\
	&\qquad+ |f_{\lambda, \rho_2}(x_1)-f_{\lambda, \rho_2}(x_2)|\|K(x_2,\cdot)\|_{\cal H} \pi_X(dx)\label{eq:inproofStab}\\
	&\leq \int L \|f_{\lambda,\rho_2}\|_\infty |x_1 - x_2| + \|f_{\lambda,\rho_2}\|_{\mathrm{Lip}} |x_1 - x_2| \pi_X(dx)\notag\\
	&\leq (L \|f_{\lambda,\rho_2}\|_{\infty} + \|f_{\lambda,\rho_2}\|_{\mathrm{Lip}}) W_1(\rho_1, \rho_2).\notag
	\end{align}
	
	It remains to verify condition $\|(T_{\rho_3} - T_{\rho_1})(T_{\rho_3}+\lambda I)^{-1}\|_{\rm op} \leq \frac{1}{2}$ from Equation~\eqref{rem:stab} based on \eqref{eq:assstab}. We have
	\[
	  \|(T_{\rho_3} - T_{\rho_1})(T_{\rho_3}+\lambda I)^{-1}\|_{\rm op}\le \|T_{\rho_3} - T_{\rho_1}\|_{\rm op}\|(T_{\rho_3}+\lambda I)^{-1}\|_{\rm op}\le \lambda^{-1}\|T_{\rho_3} - T_{\rho_1}\|_{\rm op}.
	\]
	To bound the latter operator norm we use \eqref{eq:inproofStab} and \eqref{eq:KLip} to estimate for any $f\in\mathcal H$ and an optimal coupling $\pi\in\Pi(\rho_1,\rho_3)$ attaining $W_1(\rho_1,\rho_3)$
	\begin{align*}
	\|(T_{\rho_3} - T_{\rho_1}) f\|_{\mathcal{H}} 
	&\leq \int \|K(x_1,\cdot)-K(x_2,\cdot)\|_{\cal H}|\langle K(x_1,\cdot),f\rangle_{\cal H}|\\
	&\qquad + |\langle K(x_1,\cdot)-K(x_2,\cdot),f\rangle_{\cal H}|\|K(x_2,\cdot)\|_{\cal H} \pi_X(dx)\\
	&\leq 2\|f\|_{\cal H}\int \|K(x_1,\cdot)-K(x_2,\cdot)\|_{\cal H}\pi_X(dx)\\
	&\le 2L\|f\|_{\cal H}W_1(\rho_1,\rho_3).
	\end{align*}
    Therefore, $\|T_{\rho_3} - T_{\rho_1}\|_{\rm op}\le 2LW_1(\rho_1,\rho_3)$ and the required condition is satisfied if we have $W_1(\rho_1,\rho_3)\le\frac\lambda{4L}$.
\end{proof}

\subsection{Proofs of Section \ref{sec:secmain}}

\begin{proof}\textbf{of Proposition \ref{prop:pcakernelstability}}
	We apply Theorem \ref{thm:stab} with the assumption given by Equation~\eqref{rem:stab} for $\rho_3 := \tilde{\rho}$ and 
	\begin{align*}
	\rho_1 := \frac{1}{n} \sum_{i=1}^n \delta_{(X_i^*, Y_i)}, ~~~
	\rho_2 := \frac{1}{n} \sum_{i=1}^n \delta_{(\hat{X}_i, Y_i)}.
	\end{align*}
	Hence, $W_1(\rho_1, \rho_2) \leq S_n\|\hat P_n-P\|_{\rm op}$. It remains to verify that assumption \eqref{eq:assnlargeenough} implies condition \eqref{eq:assstab} with high probability. To this end, we proceed as \citet[Proof of Theorem 4, Step 2.1]{caponnetto2007optimal}. We can write
	\[
	  (T_{\rho_3}+\lambda)^{-1}(T_{\rho_3} - T_{\rho_1})=-\frac{1}{n}\sum_{i=1}^n (\Xi_i-\mathbb E[\Xi_i] )
	\]
	for random Hilbert-Schmidt operators 
	\[
	  \Xi_i\colon \mathcal H\to\mathcal H\qquad f\mapsto \Xi_if=(T_{\tilde\rho}+\lambda I)^{-1}T_{\delta_{X_i}} 
    \]
	where $\delta_{X_i}$ denotes the Dirac measure in $X_i$. Since $\|T_{\tilde\rho}\|_{\rm op}\le\int_{\mathcal X}\|T_{\delta_x}\|_{\rm op}\tilde \rho_{X^*}(dx)\le 1$ and thus $\|(T_{\tilde\rho}+\lambda I)^{-1}\|_{\rm op}\le \lambda^{-1}$ we indeed have a bounded Hilbert-Schmidt norm:
	$$\|\Xi_i\|_{\rm HS}\le \|(T_{\tilde\rho}+\lambda I)^{-1}\|_{\rm op}\|T_{\delta_{X_i}}\|_{\rm HS}\le\frac1\lambda\|K(X_i,\cdot)\|_{\mathcal H}^2=\frac1\lambda.$$
	Moreover, 
	\begin{align*}
	  \mathbb E\big[\|\Xi_i\|_{\rm HS}^2\big]
	  &\le \int_{\mathcal X}\|T_{\delta_x}\|_{\rm op}{\rm Tr}\big((T_{\tilde\rho}+\lambda I)^{-2}T_{\delta_x}\big)\tilde \rho_{X^*}(dx)\\
	  &\le {\rm Tr}\big((T_{\tilde\rho}+\lambda I)^{-2}T_{\tilde \rho}\big)\\
	  &\le \|(T_{\tilde\rho}+\lambda I)^{-1}\|_{\rm op}{\rm Tr}\big((T+\lambda I)^{-1}T_{\tilde \rho}\big)
	  \le\frac{\mathcal N(\lambda)}{\lambda}.
	\end{align*}
    Based on these estimates and independence of $\Xi_i$, we can apply a Bernstein inequality for Hilbert space valued random variables \citep[Proposition~2]{caponnetto2007optimal} which yields
    \[
      \big\|(T_{\rho_3}+\lambda)^{-1}(T_{\rho_3} - T_{\rho_1})\big\|_{\rm HS}\le 2\log\Big(\frac{6}{\eta}\Big)\Big(\frac{2}{n\lambda}+\sqrt{\frac{\mathcal N(\lambda)}{\lambda n}}\Big)
    \]
    with probability greater than $1-\eta/3$. The condition on $\mathcal N(\lambda)$ together with $\|T_{\tilde\rho}\|_{\rm op}\ge \lambda$ finally imply that the previous upper bound not larger than 1/2.
\end{proof}

\begin{proof}\textbf{of Theorem \ref{thm:generaldimred}}
	We use the error decomposition \eqref{eq:decomp}. The first term can be bounded by
	\begin{align*}
	  \|(\hat{f}_{\lambda, n} \circ \hat{P}_n - \hat{f}_{\lambda, n} \circ P)\|^2_{L^2(\rho_X)}
	  &\le \|\hat f_{\lambda,n}\|_{\rm Lip}^2\|\hat P_n-P\|_{L^2(\rho_X)}^2\\
	  &\le \|\hat f_{\lambda,n}\|_{\rm Lip}^2\|\hat P_n-P\|_{\rm op}^2\int x^2\rho_X(dx).
	\end{align*}
	The second term can be treated with Proposition~\ref{prop:pcakernelstability}:
	\[
	   \|(\hat{f}_{\lambda, n} \circ P - f_{\lambda, n}^* \circ P)\|^2_{L^2(\rho_X)}\le  \frac {S_n^2  \big(1+LM+\|\hat f_{\lambda,n}\|_{\mathrm{Lip}} + \|\hat{f}_{\lambda, n}\|_\infty \big)^2}{\lambda} \, \big\|\hat{P}_n - P\big\|_{\rm op}^2.
	\]
	while the last term $\|(\boldsymbol{f}^*_{\lambda, n} \circ P - f_{\tilde{\rho}} \circ P)\|^2_{L^2(\rho_X)}=\|\boldsymbol{f}^*_{\lambda, n} - f_{\tilde{\rho}}\|^2_{L^2(\rho_X\circ P^{-1})}$ is the error term for the kernel learning problem for the distribution $\tilde{\rho}$.
\end{proof}

\begin{proof}\textbf{of Lemma \ref{lem:pcaerror}}
	We require some additional notation. Let
	\begin{align*}
	R(P) &:= \int |x -P(x )|^2 \,\rho_X(dx)\qquad\text{and}\qquad  R(\hat{P}_n) := \int |x  -\hat{P}_n(x)|^2 \,\rho_X(dx).
	\end{align*}
	Define $\mathcal{E}_d^{\text{PCA}} := R(\hat{P}_n) - R(P)$.
	Of course, $\hat{P}_n$ is a random variable depending on $(X_1, \dots, X_n)$. By \citet[Equation $2.21$]{reiss2020nonasymptotic} it holds 
	\[
	\mathbb{E}[\|\hat{P}_n - P\|_2^2] \leq \frac{2 \mathbb{E}[\mathcal{E}_d^{\text{PCA}}]}{\sigma^X_{d+1} - \sigma^X_d}.
	\]
	In \citet[Proposition 2, Equation 2.6]{reiss2020nonasymptotic} it is shown that there exists a constant $C_{pca}$ which is independent of $n$ such that
	\[
	\mathbb{E}[\mathcal{E}_d^{\text{PCA}}] \leq \frac{C_{pca}^2}{n (\sigma^X_{d+1} - \sigma^X_d)}
	\]
	and by an application of Markov's inequality, the claim follows.
\end{proof}

\begin{proof}\textbf{of Theorem \ref{thm:overallerror}}
	To apply Theorem \ref{thm:generaldimred}, we use \citet[Theorem 1]{steinwart2009optimal} (c.f.~Lemma \ref{rem:optimalrate}) to obtain 
	\[
	\|(\boldsymbol{f}^*_{\lambda, n} \circ P - f_{\tilde{\rho}} \circ P)\|^2_{L^2(\rho_X)} \leq C_l \log(9/\eta) \Big(\lambda^{\beta} + \frac{1}{\lambda n}\Big)
	\]
	with probably $1-\eta/3$,
	where we note that part (i) of Assumption \ref{ass:alphabeta} always holds with $\alpha=1$ (c.f.~Remark \ref{rem:sobolev}). We further apply Lemma \ref{lem:pcaerror} with $\eta/3$ which yields
	\[
	\|\hat{P}_n-P\|_{\rm op} \leq \frac{\Gamma}{\eta \sqrt{n}}.
	\]
	with probability $1-\eta/3$. With a constant $\bar{C}$ such that
	\[
	\Gamma (1+ LM + \|\hat{f}_{\lambda, n}\|_{\rm Lip} + \|\hat{f}_{\lambda, n}\|_{\infty})^2 \leq  \bar{C}(\|\hat f_{\lambda,n}\|_{\mathrm{Lip}}^2 + \|\hat{f}_{\lambda, n}\|_\infty^2 + 1)
	\]
	Theorem~\ref{thm:generaldimred} yields the claim.
\end{proof}

\begin{proof}\textbf{of Lemma~\ref{lem:normbound}}
 By construction of $\hat{f}_{\lambda,n}$ we have for any $f\in\mathcal{H}$
\[
\frac{1}{n}\sum_{i=1}^{n}(\hat{f}_{\lambda,n}(\hat{X}_{i})-Y_{i})^{2}+\lambda\|\hat{f}_{\lambda,n}\|_{\mathcal{H}}^{2}\le\frac{1}{n}\sum_{i=1}^{n}(f(\hat{X}_{i})-Y_{i})^{2}+\lambda\|f\|_{\mathcal{H}}^{2}.
\]
This gives the basic oracle inequality
\begin{align*}
\frac{1}{n}\sum_{i=1}^{n}\big(\hat{f}_{\lambda,n}(\hat{X}_{i})-f_{\tilde{\rho}}(\hat{X}_{i})\big)^{2}+\lambda\|\hat{f}_{\lambda,n}\|_{\mathcal{H}}^{2} & \le\frac{1}{n}\sum_{i=1}^{n}\big(f(\hat{X}_{i})-f_{\tilde{\rho}}(\hat{X}_{i})\big)^{2}+\lambda\|f\|_{\mathcal{H}}^{2}\\
 & \qquad-\frac{2}{n}\sum_{i=1}^{n}\big(Y_{i}-f_{\tilde{\rho}}(\hat{X}_{i})\big)\big(f(\hat{X}_{i})-\hat{f}_{\lambda,n}(\hat{X}_{i})\big).
\end{align*}
On the event
\[
\mathcal{A}_{n}:=\Big\{\Big\|\frac{1}{n}\sum_{i=1}^{n}\big(Y_{i}-f_{\tilde{\rho}}(\hat{X}_{i})\big)K(\hat{X}_{i},\cdot)\Big\|_{\mathcal{H}}\le\frac{\kappa}{\sqrt{n}}\Big\}
\]
we have
\begin{align*}
\Big|\frac{2}{n}\sum_{i=1}^{n}\big(Y_{i}-f_{\tilde{\rho}}(\hat{X}_{i})\big)\big(f(\hat{X}_{i})-\hat{f}_{\lambda,n}(\hat{X}_{i})\big)\Big| & =\Big|\frac{2}{n}\sum_{i=1}^{n}\big(Y_{i}-f_{\tilde{\rho}}(\hat{X}_{i})\big)\langle K(\hat{X}_{i},\cdot),f-\hat{f}_{\lambda,n}\rangle_{\mathcal{H}}\Big|\\
 & \le\|f-\hat{f}_{\lambda,n}\|_{\mathcal{H}}\Big\|\frac{2}{n}\sum_{i=1}^{n}\big(Y_{i}-f_{\tilde{\rho}}(\hat{X}_{i})\big)K(\hat{X}_{i},\cdot)\Big\|_{\mathcal{H}}\\
 & =\frac{2\kappa}{\sqrt{n}}\|f-\hat{f}_{\lambda,n}\|_{\mathcal{H}}\\
 & \le\frac{2\kappa}{\sqrt{n}}\|f\|_{\mathcal{H}}+\frac{2\kappa}{\sqrt{n}}\|\hat{f}_{\lambda,n}\|_{\mathcal{H}}\\
 & \le\frac{3\kappa}{n\lambda}+\lambda\|f\|_{\mathcal{H}}^{2}+\frac{\lambda}{2}\|\hat{f}_{\lambda,n}\|_{\mathcal{H}}^{2}.
\end{align*}
Therefore, we have on $\mathcal{A}_{n}$ for any $f\in\mathcal{H}$
\[
\frac{1}{n}\sum_{i=1}^{n}\big(\hat{f}_{\lambda,n}(\hat{X}_{i})-f_{\tilde{\rho}}(\hat{X}_{i})\big)^{2}+\frac{\lambda}{2}\|\hat{f}_{\lambda,n}\|_{\mathcal{H}}^{2}\le\frac{1}{n}\sum_{i=1}^{n}\big(f(\hat{X}_{i})-f_{\tilde{\rho}}(\hat{X}_{i})\big)^{2}+\frac{3\kappa}{n\lambda}+2\lambda\|f\|_{\mathcal{H}}^{2}.
\]
If $f_{\tilde{\rho}}\in\mathcal{H}$, we can choose $f=f_{\tilde{\rho}}$
and obtain
\[
\|\hat{f}_{\lambda,n}\|_{\mathcal{H}}^{2}\le\frac{2}{n\lambda}\sum_{i=1}^{n}\big(\hat{f}_{\lambda,n}(\hat{X}_{i})-f_{\tilde{\rho}}(\hat{X}_{i})\big)^{2}+\|\hat{f}_{\lambda,n}\|_{\mathcal{H}}^{2}\le\frac{6\kappa}{n\lambda^{2}}+4\|f_{\tilde{\rho}}\|_{\mathcal{H}}^{2}.
\]
For $\lambda= n^{-1/2}$ (which is the optimal choice for $\alpha=\beta=1$)
we conclude $\|\hat{f}_{\lambda,n}\|_{\mathcal{H}}^{2}\le6\kappa+4\|f_{\tilde{\rho}}\|_{\mathcal{H}}^{2}$
on the event $\mathcal{A}_{n}$. 

It remains to verify $\mathbb P(\mathcal{A}_{n}^{c})\to0$. To this end,
we exploit $f_{\tilde\rho}(X_i^*)=\mathbb E[Y_i|X_i^*]$ such that we can write $Y_{i}=f_{\tilde{\rho}}(X_{i}^{*})+\delta_{i}$ with $\mathbb E[\delta_i|X_i^*]=0$ and $|\delta_i|\le 2M$. The decomposition $Y_{i}-f_{\tilde{\rho}}(\hat{X}_{i})=\delta_{i}+f_{\tilde{\rho}}(X_{i}^{*})-f_{\tilde{\rho}}(\hat{X}_{i})$
yields
\begin{align*}
\Big\|\frac{1}{n}\sum_{i=1}^{n}\big(Y_{i}-f_{\tilde{\rho}}(\hat{X}_{i})\big)K(\hat{X}_{i},\cdot)\Big\|_{\mathcal{H}} 
& \le\Big\|\frac{1}{n}\sum_{i=1}^{n}\delta_{i}K(X^*_{i},\cdot)\Big\|_{\mathcal{H}}+\Big\|\frac{1}{n}\sum_{i=1}^{n}\delta_{i}\big(K(\hat{X}_{i},\cdot)-K(X^*_{i},\cdot)\big)\Big\|_{\mathcal{H}}\\
&\qquad +\Big\|\frac{1}{n}\sum_{i=1}^{n}\big(f_{\tilde{\rho}}(X_{i}^{*})-f_{\tilde{\rho}}(\hat{X}_{i})\big)K(\hat{X}_{i},\cdot)\Big\|_{\mathcal{H}}\\
& =:T_{1}+T_2+T_{2}.
\end{align*}
In particular, we have $\mathbb P(\mathcal{A}_{n}^{c})\le\mathbb P(T_{1}\ge\frac{\kappa}{3\sqrt{n}})+\mathbb P(T_{1}\ge\frac{\kappa}{3\sqrt{n}})+\mathbb P(T_{3}\ge\frac{\kappa}{3\sqrt{n}})$. 

The first probability can be bounded with Markov's inequality and
using RKHS properties:
\begin{align*}
\mathbb P(T_{1}\ge\frac{K}{2\sqrt{n}}) & \le\frac{4n}{\kappa^{2}}\mathbb E\Big[\Big\|\frac{1}{n}\sum_{i=1}^{n}\delta_{i}K(X^*_{i},\cdot)\Big\|_{\mathcal{H}}^{2}\Big]\\
 & =\frac{4n}{\kappa^{2}}\mathbb E\Big[\frac{1}{n^{2}}\sum_{i,j=1}^{n}\delta_{i}\delta_{j}\langle K(X^*_{i},\cdot),K(X^*_{j},\cdot)\rangle_{\mathcal{H}}\Big]
  =\frac{4}{\kappa^{2}n}\sum_{i,j=1}^{n}\mathbb E\Big[\delta_{i}\delta_{j}K(X^*_{i},X^*_{j})\Big].
\end{align*}
Since $(\delta_{i})$ are independent and satisfy $\mathbb E[\delta_{i}|X^*_{i}]=0,\mathbb E[\delta_{i}^{2}|X^*_{i}]=:\sigma^2\le4M^2$,
we obtain 
\[
\mathbb P(T_{1}\ge\frac{K}{3\sqrt{n}})
\le\frac{9\sigma^{2}}{\kappa^{2}n}\sum_{i=1}^{n}\mathbb E\big[K(X^*_{i},X^*_{i})\big]
=\frac{9\sigma^{2}}{\kappa^{2}}\mathbb E\big[K(X^*_{1},X^*_{1})\big]\le\eta
\]
for $\kappa^{2}>\frac{9\sigma^{2}}{\eta}\mathbb E\big[K(X^*_{1},X^*_{1})\big].$

For the second probability, we use $|\delta_i|\le 2M$ and \eqref{eq:KLip} to bound 
\begin{align*}
 T_2\le \frac{1}{n}\sum_{i=1}^n|\delta_i|\big\|K(\hat X_i,\cdot)-K(X^*_i,\cdot)\big\|_{\mathcal H}
 &\le \frac{2ML}{n}\sum_{i=1}^n\big|\hat X_i-X_i^*\big|\\
 &\le \frac{2ML}{n}\sum_{i=1}^n\big|(\hat P-P)X_i\big|\\
 &\le 2ML\|\hat P-P\|_{\rm op}\Big(\frac{1}{n}\sum_{i=1}^n|X_i|^2\Big)^{1/2}.
\end{align*}
By Lemma~\ref{lem:pcaerror} we have $\mathbb P\Big(\|\hat{P}_{n}-P\|_{{\rm op}}\ge\frac\Gamma{3\eta\sqrt n}\Big)\le\eta$ with $\Gamma$ from Theorem~\ref{thm:overallerror}. Hence,
\begin{align*}
\mathbb P(T_{2}\ge\frac{\kappa}{3\sqrt{n}}) & \le\mathbb P\Big(\frac{2ML\Gamma}{\eta}\Big(\frac{1}{n}\sum_{i=1}^{n}\big|X_{i}\big|^{2}\Big)^{1/2}\ge\kappa\Big)+\eta\\
 & \le\frac{4(ML\Gamma)^2}{\eta^2\kappa^2}\mathbb E\big[\big|X_{1}\big|^{2}\big]+\eta\le2\eta
 \end{align*}
for $\kappa^2>\eta^{-3}4(ML\Gamma)^2\mathbb E[|X_1|^2]$.

For the third probability, we estimate
\begin{align*}
T_{3} & \le\frac{1}{n}\sum_{i=1}^{n}\big|f_{\tilde{\rho}}(X_{i}^{*})-f_{\tilde{\rho}}(\hat{X}_{i})\big|\|K(\hat{X}_{i},\cdot)\|_{\mathcal{H}}\\
 & \le\|f_{\tilde{\rho}}\|_{\mathrm{Lip}}\sqrt{K(\hat{X}_{i},\hat{X}_{i})}\frac{1}{n}\sum_{i=1}^{n}\big|X_{i}^{*}-\hat{X}_{i}\big|
  =\|f_{\tilde{\rho}}\|_{\mathrm{Lip}}\frac{1}{n}\sum_{i=1}^{n}\big|X^*_{i}-\hat X_i\big|.
\end{align*}
From here we can proceed as with $T_2$ and conclude 
\begin{align*}
\mathbb P(T_{3}\ge\frac{\kappa}{3\sqrt{n}}) & \le\mathbb P\Big(\frac{\|f_{\tilde\rho}\|_{\rm Lip}\Gamma}{\eta}\Big(\frac{1}{n}\sum_{i=1}^{n}\big|X_{i}\big|^{2}\Big)^{1/2}\ge\kappa\Big)+\eta\\
 & \le\frac{\|f_{\tilde\rho}\|_{\rm Lip}^2\Gamma^2}{\eta^2\kappa^2}\mathbb E\big[\big|X_{1}\big|^{2}\big]+\eta\le2\eta
 \end{align*}
for $\kappa^2>\eta^{-3}(\|f_{\tilde\rho}\|_{\rm Lip}\Gamma)^2\mathbb E[|X_1|^2]$.
Therefore, there is a constant $D>0$ such that $\mathbb P(\mathcal{A}_{n}^{c})\ge1-5\eta$
for $\kappa=D\eta^{-3/2}$.
\end{proof}

\begin{proof}\textbf{of Theorem \ref{prop:mn}}
	The main inequality \eqref{eq:supervisederror} follows completely analogously to Theorem \ref{thm:overallerror}. The optimal choices of $\lambda$ in the given regimes follows directly since for $n > m^{\alpha/(\beta + 1)}$ and $\lambda = m^{-\frac{1}{1+\beta}}$, it holds $\frac{1}{\lambda m} > \frac{1}{\lambda^\alpha n}$. And on the other hand for $m > n^{\frac{1+\beta}{\beta+\alpha}}$ and $\lambda = n^{-\frac{1}{\beta+\alpha}}$ it holds $\frac{1}{\lambda m} < \frac{1}{\lambda^\alpha n}$.
\end{proof}

\begin{proof}\textbf{of Proposition \ref{prop:procedureerror}}
	We decompose
	\[
	\| f_\rho - f_{\tilde{\rho}} \circ P \|_{L^2(\rho_X)}
	\le \| f_\rho - f_{\rho} \circ P \|_{L^2(\rho_X)}
	+\| f_\rho\circ P - f_{\tilde{\rho}} \circ P \|_{L^2(\rho_X)}.
	\]
	For the second term, we note that 
	\[
	f_{\tilde \rho}\big(P(X)\big)=\E{Y|P(X)}
	=\E{\E{Y|X}\big|P(X)}
	=\E{f_\rho(X)|P(X)}.
	\]
	Therefore,
	\begin{align*}
	\| f_\rho\circ P - f_{\tilde{\rho}} \circ P \|_{L^2(\rho_X)}^2
	&=\E{\big(f_\rho(P(X))-\E{f_\rho(X)|P(X)}\big)^2}\\
	&=\E{\E{f_\rho(P(X))-f_\rho(X)\big|P(X)}^2}\\
	&\le \E{(f_\rho(P(X))-f_\rho(X))^2}\\
	&=\| f_\rho - f_{\rho} \circ P \|_{L^2(\rho_X)}^2.
	\end{align*}
	We obtain
	\begin{align*}
	\| f_\rho - f_{\tilde{\rho}} \circ P \|_{L^2(\rho_X)}
	&\le 2\| f_\rho - f_{\rho} \circ P \|_{L^2(\rho_X)}\\
	&\le 2L\E{|X-P(X)|^2}^{1/2}
	\le 2L\Big(\sum_{i=d+1}^D \sigma_i^X\Big)^{1/2}.
	\end{align*}
	where the last inequality is the well known reconstruction error for PCA, see, e.g., \cite{reiss2020nonasymptotic}. 
\end{proof}


\acks{Research supported by the Landesforschungsf\"{o}rderung Hamburg under the project LD-SODA. MT acknowledges finanical support by the DFG through the grant TR 1349-3.}


\vskip 0.2in
\bibliography{dre}

\begin{thebibliography}{57}
\providecommand{\natexlab}[1]{#1}
\providecommand{\url}[1]{\texttt{#1}}
\expandafter\ifx\csname urlstyle\endcsname\relax
  \providecommand{\doi}[1]{doi: #1}\else
  \providecommand{\doi}{doi: \begingroup \urlstyle{rm}\Url}\fi

\bibitem[Abdul-Wahab et~al.(2005)Abdul-Wahab, Bakheit, and
  Al-Alawi]{abdul2005principal}
Sabah~A Abdul-Wahab, Charles~S Bakheit, and Saleh~M Al-Alawi.
\newblock Principal component and multiple regression analysis in modelling of
  ground-level ozone and factors affecting its concentrations.
\newblock \emph{Environmental Modelling \& Software}, 20\penalty0
  (10):\penalty0 1263--1271, 2005.

\bibitem[Adragni and Cook(2009)]{adragni2009sufficient}
Kofi~P Adragni and R~Dennis Cook.
\newblock Sufficient dimension reduction and prediction in regression.
\newblock \emph{Philosophical Transactions of the Royal Society A:
  Mathematical, Physical and Engineering Sciences}, 367\penalty0
  (1906):\penalty0 4385--4405, 2009.

\bibitem[Alquier and Biau(2013)]{alquier2013sparse}
Pierre Alquier and G{\'e}rard Biau.
\newblock Sparse single-index model.
\newblock \emph{Journal of Machine Learning Research}, 14\penalty0 (1), 2013.

\bibitem[Andras(2017)]{andras2017high}
Peter Andras.
\newblock High-dimensional function approximation with neural networks for
  large volumes of data.
\newblock \emph{IEEE transactions on neural networks and learning systems},
  29\penalty0 (2):\penalty0 500--508, 2017.

\bibitem[Bingham and Mannila(2001)]{bingham2001random}
Ella Bingham and Heikki Mannila.
\newblock Random projection in dimensionality reduction: applications to image
  and text data.
\newblock In \emph{Proceedings of the seventh ACM SIGKDD international
  conference on Knowledge discovery and data mining}, pages 245--250, 2001.

\bibitem[Bousquet and Elisseeff(2002)]{bousquet2002stability}
Olivier Bousquet and Andr{\'e} Elisseeff.
\newblock Stability and generalization.
\newblock \emph{The Journal of Machine Learning Research}, 2:\penalty0
  499--526, 2002.

\bibitem[Cai et~al.(2010)Cai, Zhang, and Zhou]{cai2010optimal}
T~Tony Cai, Cun-Hui Zhang, and Harrison~H Zhou.
\newblock Optimal rates of convergence for covariance matrix estimation.
\newblock \emph{The Annals of Statistics}, 38\penalty0 (4):\penalty0
  2118--2144, 2010.

\bibitem[Canatar et~al.(2021)Canatar, Bordelon, and Pehlevan]{canatar2021out}
Abdulkadir Canatar, Blake Bordelon, and Cengiz Pehlevan.
\newblock Out-of-distribution generalization in kernel regression.
\newblock \emph{arXiv preprint arXiv:2106.02261}, 2021.

\bibitem[Caponnetto and De~Vito(2007)]{caponnetto2007optimal}
Andrea Caponnetto and Ernesto De~Vito.
\newblock Optimal rates for the regularized least-squares algorithm.
\newblock \emph{Foundations of Computational Mathematics}, 7\penalty0
  (3):\penalty0 331--368, 2007.

\bibitem[Chiappinelli(2000)]{chiappinelli2000nonlinear}
Raffaele Chiappinelli.
\newblock Nonlinear stability of eigenvalues of compact self-adjoint operators.
\newblock \emph{Recent Trends in Nonlinear Analysis: Festschrift Dedicated to
  Alfonso Vignoli on the Occasion of His Sixtieth Birthday}, pages 93--103,
  2000.

\bibitem[Chipman and Gu(2005)]{chipman2005interpretable}
Hugh~A Chipman and Hong Gu.
\newblock Interpretable dimension reduction.
\newblock \emph{Journal of applied statistics}, 32\penalty0 (9):\penalty0
  969--987, 2005.

\bibitem[Christmann and Steinwart(2007)]{christmann2007consistency}
Andreas Christmann and Ingo Steinwart.
\newblock Consistency and robustness of kernel-based regression in convex risk
  minimization.
\newblock \emph{Bernoulli}, 13\penalty0 (3):\penalty0 799--819, 2007.

\bibitem[Christmann et~al.(2018)Christmann, Xiang, and
  Zhou]{christmann2018total}
Andreas Christmann, Daohong Xiang, and Ding-Xuan Zhou.
\newblock Total stability of kernel methods.
\newblock \emph{Neurocomputing}, 289:\penalty0 101--118, 2018.

\bibitem[Cox and Cox(2008)]{cox2008multidimensional}
Michael~AA Cox and Trevor~F Cox.
\newblock Multidimensional scaling.
\newblock In \emph{Handbook of data visualization}, pages 315--347. Springer,
  2008.

\bibitem[Cucker and Smale(2002)]{cucker2002mathematical}
Felipe Cucker and Steve Smale.
\newblock On the mathematical foundations of learning.
\newblock \emph{American Mathematical Society}, 39\penalty0 (1):\penalty0
  1--49, 2002.

\bibitem[Cui et~al.(2021)Cui, Loureiro, Krzakala, and
  Zdeborov{\'a}]{cui2021generalization}
Hugo Cui, Bruno Loureiro, Florent Krzakala, and Lenka Zdeborov{\'a}.
\newblock Generalization error rates in kernel regression: The crossover from
  the noiseless to noisy regime.
\newblock \emph{arXiv preprint arXiv:2105.15004}, 2021.

\bibitem[De~Vito and Caponnetto(2005)]{de2005risk}
Ernesto De~Vito and Andrea Caponnetto.
\newblock Risk bounds for regularized least-squares algorithm with
  operator-value kernels.
\newblock Technical report, 2005.

\bibitem[DeVore et~al.(2011)DeVore, Petrova, and
  Wojtaszczyk]{devore2011approximation}
Ronald DeVore, Guergana Petrova, and Przemyslaw Wojtaszczyk.
\newblock Approximation of functions of few variables in high dimensions.
\newblock \emph{Constructive Approximation}, 33\penalty0 (1):\penalty0
  125--143, 2011.

\bibitem[Fischer and Steinwart(2020)]{fischer2020sobolev}
Simon Fischer and Ingo Steinwart.
\newblock Sobolev norm learning rates for regularized least-squares algorithms.
\newblock \emph{The Journal of Machine Learning Research}, 21\penalty0
  (1):\penalty0 8464--8501, 2020.

\bibitem[Fradkin and Madigan(2003)]{fradkin2003experiments}
Dmitriy Fradkin and David Madigan.
\newblock Experiments with random projections for machine learning.
\newblock In \emph{Proceedings of the ninth ACM SIGKDD international conference
  on Knowledge discovery and data mining}, pages 517--522, 2003.

\bibitem[Fulton(2000)]{fulton2000eigenvalues}
William Fulton.
\newblock Eigenvalues, invariant factors, highest weights, and schubert
  calculus.
\newblock \emph{Bulletin of the American Mathematical Society}, 37\penalty0
  (3):\penalty0 209--249, 2000.

\bibitem[Ga{\"\i}ffas and Lecu{\'e}(2007)]{gaiffas2007optimal}
St{\'e}phane Ga{\"\i}ffas and Guillaume Lecu{\'e}.
\newblock Optimal rates and adaptation in the single-index model using
  aggregation.
\newblock \emph{Electronic journal of statistics}, 1:\penalty0 538--573, 2007.

\bibitem[Gu(2013)]{gu2013smoothing}
Chong Gu.
\newblock \emph{Smoothing spline ANOVA models}, volume 297.
\newblock Springer Science \& Business Media, 2013.

\bibitem[Guillemard and Iske(2017)]{GuillemardIske2017}
M~Guillemard and A~Iske.
\newblock Interactions between kernels, frames, and persistent homology.
\newblock In \emph{Recent Applications of Harmonic Analysis to Function Spaces,
  Differential Equations, and Data Science. Volume 2: Novel Methods in Harmonic
  Analysis}, pages 861--888. Birkh\"auser, 2017.

\bibitem[Hamm and Steinwart(2021)]{hamm2021adaptive}
Thomas Hamm and Ingo Steinwart.
\newblock Adaptive learning rates for support vector machines working on data
  with low intrinsic dimension.
\newblock \emph{The Annals of Statistics}, 49\penalty0 (6):\penalty0
  3153--3180, 2021.

\bibitem[Hamm and Steinwart(2022)]{hamm2022intrinsic}
Thomas Hamm and Ingo Steinwart.
\newblock Intrinsic dimension adaptive partitioning for kernel methods.
\newblock \emph{SIAM Journal on Mathematics of Data Science}, 4\penalty0
  (2):\penalty0 721--749, 2022.

\bibitem[Hua et~al.(2007)Hua, Ni, Ko, and Wong]{hua2007modeling}
XG~Hua, YQ~Ni, JM~Ko, and KY~Wong.
\newblock Modeling of temperature--frequency correlation using combined
  principal component analysis and support vector regression technique.
\newblock \emph{Journal of Computing in Civil Engineering}, 21\penalty0
  (2):\penalty0 122--135, 2007.

\bibitem[Iske(2018)]{Iske2018}
Armin Iske.
\newblock \emph{Approximation Theory and Algorithms for Data Analysis},
  volume~68 of \emph{Texts in Applied Mathematics}.
\newblock Springer, 2018.

\bibitem[Jolliffe(2002)]{jolliffe2002principal}
Ian~T Jolliffe.
\newblock \emph{Principal component analysis}.
\newblock Springer, New York, second edition, 2002.

\bibitem[Kim et~al.(2005)Kim, Howland, Park, and
  Christianini]{kim2005dimension}
Hyunsoo Kim, Peg Howland, Haesun Park, and Nello Christianini.
\newblock Dimension reduction in text classification with support vector
  machines.
\newblock \emph{Journal of machine learning research}, 6\penalty0 (1), 2005.

\bibitem[Lee and Verleysen(2007)]{lee2007nonlinear}
John~A Lee and Michel Verleysen.
\newblock \emph{Nonlinear dimensionality reduction}, volume~1.
\newblock Springer, 2007.

\bibitem[Li(1991)]{li1991sliced}
Ker-Chau Li.
\newblock Sliced inverse regression for dimension reduction.
\newblock \emph{Journal of the American Statistical Association}, 86\penalty0
  (414):\penalty0 316--327, 1991.

\bibitem[Lin et~al.(2020)Lin, Rudi, Rosasco, and Cevher]{lin2020optimal}
Junhong Lin, Alessandro Rudi, Lorenzo Rosasco, and Volkan Cevher.
\newblock Optimal rates for spectral algorithms with least-squares regression
  over hilbert spaces.
\newblock \emph{Applied and Computational Harmonic Analysis}, 48\penalty0
  (3):\penalty0 868--890, 2020.

\bibitem[Lin and Zhang(2006)]{lin2006component}
Yi~Lin and Hao~Helen Zhang.
\newblock Component selection and smoothing in multivariate nonparametric
  regression.
\newblock \emph{The Annals of Statistics}, 34\penalty0 (5):\penalty0
  2272--2297, 2006.

\bibitem[McInnes et~al.(2018)McInnes, Healy, Saul, and
  Gro{\ss}berger]{mcinnes2018umap}
Leland McInnes, John Healy, Nathaniel Saul, and Lukas Gro{\ss}berger.
\newblock Umap: Uniform manifold approximation and projection.
\newblock \emph{Journal of Open Source Software}, 3\penalty0 (29):\penalty0
  861, 2018.

\bibitem[Meister(2009)]{meister2009}
Alexander Meister.
\newblock \emph{Deconvolution problems in nonparametric statistics}, volume 193
  of \emph{Lecture Notes in Statistics}.
\newblock Springer-Verlag, Berlin, 2009.

\bibitem[Mendelson and Neeman(2010)]{mendelson2010regularization}
Shahar Mendelson and Joseph Neeman.
\newblock Regularization in kernel learning.
\newblock \emph{The Annals of Statistics}, 38\penalty0 (1):\penalty0 526--565,
  2010.

\bibitem[Ng(2011)]{ng2011sparse}
Andrew Ng.
\newblock Sparse autoencoder.
\newblock \emph{CS294A Lecture notes}, 72\penalty0 (2011):\penalty0 1--19,
  2011.

\bibitem[Raskutti et~al.(2012)Raskutti, J~Wainwright, and
  Yu]{raskutti2012minimax}
Garvesh Raskutti, Martin J~Wainwright, and Bin Yu.
\newblock Minimax-optimal rates for sparse additive models over kernel classes
  via convex programming.
\newblock \emph{Journal of Machine Learning Research}, 13\penalty0 (2), 2012.

\bibitem[Rei{\ss} and Wahl(2020)]{reiss2020nonasymptotic}
Markus Rei{\ss} and Martin Wahl.
\newblock Nonasymptotic upper bounds for the reconstruction error of {PCA}.
\newblock \emph{Annals of Statistics}, 48\penalty0 (2):\penalty0 1098--1123,
  2020.

\bibitem[Sch{\"o}lkopf et~al.(1997)Sch{\"o}lkopf, Smola, and
  M{\"u}ller]{scholkopf1997kernel}
Bernhard Sch{\"o}lkopf, Alexander Smola, and Klaus-Robert M{\"u}ller.
\newblock Kernel principal component analysis.
\newblock In \emph{International conference on artificial neural networks},
  pages 583--588. Springer, 1997.

\bibitem[Steinwart and Christmann(2008)]{SteinwartChristmann2008}
Ingo Steinwart and Andreas Christmann.
\newblock \emph{Support vector machines}.
\newblock Information Science and Statistics. Springer, New York, 2008.
\newblock ISBN 978-0-387-77241-7.

\bibitem[Steinwart and Scovel(2012)]{steinwart2012mercer}
Ingo Steinwart and Clint Scovel.
\newblock Mercer's theorem on general domains: On the interaction between
  measures, kernels, and {RKHS}s.
\newblock \emph{Constructive Approximation}, 35\penalty0 (3):\penalty0
  363--417, 2012.

\bibitem[Steinwart et~al.(2009)Steinwart, Hush, and
  Scovel]{steinwart2009optimal}
Ingo Steinwart, Don~R Hush, and Clint Scovel.
\newblock Optimal rates for regularized least squares regression.
\newblock In \emph{COLT}, pages 79--93, 2009.

\bibitem[Tolstikhin et~al.(2018)Tolstikhin, Bousquet, Gelly, and
  Schoelkopf]{tolstikhin2018wasserstein}
Ilya Tolstikhin, Olivier Bousquet, Sylvain Gelly, and Bernhard Schoelkopf.
\newblock Wasserstein auto-encoders.
\newblock In \emph{International Conference on Learning Representations}, 2018.
\newblock URL \url{https://openreview.net/forum?id=HkL7n1-0b}.

\bibitem[Villani et~al.(2009)]{villani2009optimal}
C{\'e}dric Villani et~al.
\newblock \emph{Optimal transport: old and new}, volume 338.
\newblock Springer, 2009.

\bibitem[Wang et~al.(2016)Wang, Yao, and Zhao]{wang2016auto}
Yasi Wang, Hongxun Yao, and Sicheng Zhao.
\newblock Auto-encoder based dimensionality reduction.
\newblock \emph{Neurocomputing}, 184:\penalty0 232--242, 2016.

\bibitem[Wendland(2004)]{wendland2004scattered}
Holger Wendland.
\newblock \emph{Scattered data approximation}, volume~17.
\newblock Cambridge university press, 2004.

\bibitem[Wielandt(1955)]{wielandt1955extremum}
Helmut Wielandt.
\newblock An extremum property of sums of eigenvalues.
\newblock \emph{Proceedings of the American Mathematical Society}, 6\penalty0
  (1):\penalty0 106--110, 1955.

\bibitem[Wold et~al.(1987)Wold, Esbensen, and Geladi]{wold1987principal}
Svante Wold, Kim Esbensen, and Paul Geladi.
\newblock Principal component analysis.
\newblock \emph{Chemometrics and intelligent laboratory systems}, 2\penalty0
  (1-3):\penalty0 37--52, 1987.

\bibitem[Wu et~al.(2006)Wu, Ying, and Zhou]{wu2006learning}
Qiang Wu, Yiming Ying, and Ding-Xuan Zhou.
\newblock Learning rates of least-square regularized regression.
\newblock \emph{Foundations of computational mathematics}, 6\penalty0
  (2):\penalty0 171--192, 2006.

\bibitem[Wu(1995)]{wu1995compactly}
Zongmin Wu.
\newblock Compactly supported positive definite radial functions.
\newblock \emph{Advances in computational mathematics}, 4:\penalty0 283--292,
  1995.

\bibitem[Xu et~al.(2009)Xu, Caramanis, and Mannor]{xu2009robustness}
Huan Xu, Constantine Caramanis, and Shie Mannor.
\newblock Robustness and regularization of support vector machines.
\newblock \emph{Journal of machine learning research}, 10\penalty0 (7), 2009.

\bibitem[Yu et~al.(2015)Yu, Wang, and Samworth]{yu2015useful}
Yi~Yu, Tengyao Wang, and Richard~J Samworth.
\newblock A useful variant of the {D}avis--{K}ahan theorem for statisticians.
\newblock \emph{Biometrika}, 102\penalty0 (2):\penalty0 315--323, 2015.

\bibitem[Zhu(2012)]{zhu2012compactly}
Shengxin Zhu.
\newblock Compactly supported radial basis functions: how and why?
\newblock \emph{OCCAM Preprint Number 12/57}, 2012.

\bibitem[Zhu and Goldberg(2009)]{zhu2009introduction}
Xiaojin Zhu and Andrew~B Goldberg.
\newblock Introduction to semi-supervised learning.
\newblock \emph{Synthesis lectures on artificial intelligence and machine
  learning}, 3\penalty0 (1):\penalty0 1--130, 2009.

\bibitem[Zhu(2005)]{zhu2005semi}
Xiaojin~Jerry Zhu.
\newblock Semi-supervised learning literature survey.
\newblock \emph{Technical Report}, 2005.

\end{thebibliography}

\end{document}